







\documentclass{article}

\usepackage{Mis/arxiv}

\usepackage[utf8]{inputenc} 
\usepackage[T1]{fontenc}    
\usepackage{hyperref}       
\usepackage{url}            
\usepackage{booktabs}       
\usepackage{amsfonts}       
\usepackage{nicefrac}       
\usepackage{microtype}      
\usepackage{lipsum}		
\usepackage{graphicx}
\usepackage{natbib}
\usepackage{doi}

\title{Towards Scalable and Robust Structured Bandits: \\ A Meta-Learning Framework
}

\date{} 					

\author{%
  Runzhe Wan\footnote{Equal contribution} \;\; Lin Ge$^{*}$ \;\; Rui Song\\
  Department of Statistics\\
  North Carolina State University\\
  \texttt{\{rwan, lge, rsong\}@ncsu.edu} \\
}



\hypersetup{
pdftitle={A template for the arxiv style},
pdfsubject={q-bio.NC, q-bio.QM},
pdfauthor={David S.~Hippocampus, Elias D.~Striatum},
pdfkeywords={First keyword, Second keyword, More},
}

\usepackage[utf8]{inputenc}
\usepackage{array}
\usepackage{amsmath}
\usepackage{amssymb}
\usepackage{bm}
\usepackage{natbib}
\usepackage{graphicx}
\usepackage{booktabs}
\usepackage{setspace}
\usepackage{textcomp}
\usepackage{enumitem}
\usepackage{datetime2}
\usepackage{multirow}
\usepackage[font=small,labelfont=bf]{caption}
\usepackage{todonotes}


\DeclareMathOperator*{\argmax}{arg\;max}
\usepackage{subcaption}

\usepackage{xcolor}
\usepackage{bbm}

\usepackage{lineno}
\usepackage[ruled,vlined]{algorithm2e}

\newcommand{\vx}{\mathbf{x}}

\newcommand{\vbeta}{\boldsymbol{\beta}}

\newcommand{\vthe}{\boldsymbol{\theta}}
\newcommand{\veta}{\boldsymbol{\eta}}

\newcommand{\mH}{\mathcal{H}}

\newcommand{\vY}{\boldsymbol{Y}}

\newcommand{\vgamma}{\boldsymbol{\gamma}}

\newcommand{\TO}{\tilde{O}}

\newcommand{\mA}{\mathcal{A}}

\newcommand{\I}{\mathbb{I}}

\newcommand{\normal}{\mathcal{N}}

\usepackage{soul}
\usepackage{xcolor}

\newcommand{\Mean}{{\mathbb{E}}}

\newcommand{\Cov}{\boldsymbol{\Sigma}}

\newcommand{\prob}{\mathbb{P}}

\newcommand{\vphi}{\boldsymbol{\phi}}
\newcommand{\vPhi}{\boldsymbol{\Phi}}

\newcommand{\vZ}{\boldsymbol{Z}}

\newcommand{\vr}{\boldsymbol{r}}
\newcommand{\vone}{\boldsymbol{1}}
\newcommand{\vzero}{\boldsymbol{0}}
\newcommand{\vI}{\boldsymbol{I}}
\newcommand{\vmu}{\boldsymbol{\mu}}
\newcommand{\vB}{\boldsymbol{B}}

\usepackage{amsthm}
\theoremstyle{plain}

\newcommand{\name}[1]{\texttt{#1}}

\newtheorem{thm}{Theorem}
\newtheorem{theorem}{Theorem}

\newtheorem{lemma}[thm]{Lemma}

\newtheorem{remark}{Remark}


\newlength{\leftstackrelawd}
\newlength{\leftstackrelbwd}
\def\eqstack#1#2{\settowidth{\leftstackrelawd}%
{${{}^{#1}}$}\settowidth{\leftstackrelbwd}{$#2$}%
\addtolength{\leftstackrelawd}{-\leftstackrelbwd}%
\leavevmode\ifthenelse{\lengthtest{\leftstackrelawd>0pt}}%
{\kern-.5\leftstackrelawd}{}\mathrel{\mathop{#2}\limits^{#1}}}

\usepackage[ruled,vlined]{algorithm2e}



\usepackage{natbib}
\setcitestyle{authoryear,round,citesep={;},aysep={,},yysep={;}}

\definecolor{mydarkblue}{rgb}{0,0.08,0.45}
\hypersetup{ %
colorlinks=true,
linkcolor=mydarkblue,
citecolor=mydarkblue,
filecolor=mydarkblue,
urlcolor=mydarkblue,
}
\renewcommand{\cite}[1]{\citep{#1}}

\begin{document}
\maketitle

\let\svthefootnote\thefootnote
\newcommand\freefootnote[1]{%
  \let\thefootnote\relax%
  \footnotetext{#1}%
  \let\thefootnote\svthefootnote%
}

\freefootnote{*: Equal contribution.}


\begin{abstract}
Online learning in large-scale structured bandits is known to be challenging due to the curse of dimensionality. 
In this paper, we propose a unified meta-learning framework for a general class of structured bandit problems where the  parameter space can be factorized to item-level. 
The novel bandit algorithm is general to be applied to many popular problems,
scalable to the huge parameter and action spaces, 
and robust to the specification of the generalization model. 
At the core of this framework is a Bayesian hierarchical model that allows information sharing among items via their features, upon which we design a meta Thompson sampling algorithm. 
Three representative examples are discussed thoroughly. 
Both theoretical analysis and numerical results support the usefulness of the proposed method. 
\end{abstract}
\vspace{-.3cm}
\section{Introduction}
The bandit problem has received increasing attention and has been widely applied to areas such as clinical trials \citep{durand2018contextual}, finance \citep{shen2015portfolio}, recommendation systems \citep{zhou2017large}, among others. 
However, many real-world applications typically have a large number of unknown parameters, a huge action space, and a complex reward distribution specified by domain models. 
For instance, in online learning to rank, the agent typically needs to choose a slate from more than thousands of related items \cite{li2016contextual, zong2016cascading}, and online advertising on major websites is usually viewed as a bipartite matching problem with millions of users and items \cite{wen2015efficient}. 
 \textit{How to efficiently explore and learn in a complex and large-scale structured bandit problem} is known to be challenging \citep{oh2019thompson, wen2015efficient, li2016contextual,  zong2016cascading}, which makes standard bandit algorithms intractable and impedes the deployment of bandits in many real systems. 

In this paper, we focus on a class of structured bandit problems where the parameter space can be factorized and each parameter is related to one item. 
Here, an item can be a product, a web page, a movie, etc., depending on the application. 
Such a problem is very general and includes many popular bandit problems as special cases, including 
dynamic assortment optimization \citep{agrawal2017thompson, agrawal2019mnl}, online learning to rank \citep{kveton2015cascading, cheung2019thompson}, online combinatorial optimization  \citep{chen2013combinatorial, wang2018thompson}, rank-$1$ bandits \citep{katariya2017stochastic}, etc. 
In addition, we consider incorporating items' feature information, which is commonly available.

To address the scalability issue for this wide class of problems, we propose a meta-learning framework. 
Specifically, we first build a Bayesian hierarchical model to allow information sharing among items via their features, upon which we then design a Thompson sampling \citep[TS,][]{russo2017tutorial}-type  algorithm. 
The hierarchical model provides us a principled way to construct a feature-based informative prior for each item, which guides the exploration of TS. 
As such, our method can be viewed as \textit{learning how to learn} efficiently for each item, and hence for the whole problem.


\textbf{Contribution.} 
Our contributions are multi-fold. 
First, to address the challenges in large-scale structured bandits, 
we propose a unified meta-learning framework with a TS-type algorithm, 
named Meta Thompson Sampling for Structured bandits (\name{MTSS}). 
To the best of our knowledge, this is the \textit{first} work using meta-learning to solve a single complicated bandit problem. 
The framework is general to subsume a wide class of practical problems, scalable to large systems, and robust to the specification of the generalization model. 
Notably, in contrast to existing feature-determined approaches which require the item-specific parameters to be predicted \textit{perfectly} by the features, our framework is flexible to allow utilizing feature information while avoiding the bias of an over-stylized model. 
Besides, when combined with the proposed offline-training-online-deployment schedule, \name{MTSS} yields low system latency and hence is particularly suitable for large-scale systems. 
The framework is attractive for cold-start problems as well. 

Second, to illustrate our framework, we discuss three concrete examples thoroughly, which are strongly related to the literature. 
Our framework provides a practical solution to these application domains, including ranking, combinatorial optimization, and assortment optimization. 

Third, we provide a general information-theoretic regret bound for \name{MTSS}, which is easy to adapt to a specific problem. The bound decomposes into two parts: the price of learning the generalization function and the regret even with the generalization function known in advance. 
As an example, we derive the regret bound under semi-bandits and show that the regret of \name{MTSS} due to not knowing the generalization function is asymptotically negligible and does not grow with the number of items, unlike existing approaches. 
This highlights the benefits of meta-learning.

Finally, in three applications,
we compare our approach with existing methods using extensive experiments on both synthetic and real datasets. 
The results show that the proposed framework can learn efficiently in large problems, is computationally attractive, yields robustness to model misspecification, and is useful for cold-start problems.













\vspace{-.2cm}
\section{Setup and Existing Approaches}
We consider the following popular and general class of online decision problems \citep{russo2017tutorial}: 
\begin{equation}
    \begin{split}
    \vY_t &\sim f(\vY_t |A_t, \vthe),\\
    R_t &= f_r(\vY_t; \veta). 
    \end{split}
    \label{eqn:main_raw_model}
\end{equation}
Here, for $t = 1, \dots, T$, the agent will sequentially choose action $A_t$ from the action space $\mA$ and then receive corresponding stochastic observations $\vY_t$, which determines the  reward $R_t$ through a deterministic function $f_r$ parameterized by some \textit{known} parameters $\veta$. 
The observation $\vY_t$ is generated following a domain model $f$, which is parameterized by some \textit{unknown} parameters $\vthe$. 
In many real problems, $f$ is typically a complex distribution involving nonlinear functions, $\vthe$ is high-dimensional, and the action space $\mathcal{A}$ is huge. 

Denote $r(a, \vthe) = \Mean(R_t|A_t = a, \vthe)$ as the expected reward of taking action $a$ in a problem instance with parameter $\vthe$. 
One common metric is the cumulative regret
\begin{align*}
    R(T, \vthe) = 
   \sum_{t=1}^T \big[ 
   \max_{a \in \mA} r(a, \vthe) - r(A_t, \vthe)
   \big]. 
\end{align*}
In many applications, the structured bandit problem consists of $N$ items, and the unknown parameter $\vthe$, admittedly being high-dimensional, can be factorized over these items as $\vthe = (\theta_1, \dots, \theta_N)^T$,  where $\theta_i$ is the parameter related to the $i$th item. 
This problem setting subsumes many popular bandit problems, such as 
dynamic assortment optimization where the agent needs to recommend a 
subset of items, 
online learning to rank where the agent needs to generate a ranked slate, 
combinatorial semi-bandits which have numerous applications including online advertisement and optimal network routing, and many others. 
In this paper, we will focus on this class of structured bandit problems, and will discuss three representative examples in Section \ref{sec:example}. 



\textbf{Existing approaches.}
The existing works typically study one specific instance in this class, and these methods can be categorized as either \textit{feature-agnostic} or \textit{feature-determined}. 
Feature-agnostic approaches \citep{chen2013combinatorial, wang2018thompson, kveton2015cascading, cheung2019thompson, agrawal2017thompson, agrawal2019mnl} do not utilize side information such as  features and learn each $\theta_i$ independently. 
Most of them adapt either the upper-confidence bound (UCB) or TS framework. 
In these existing works, the regret bounds will scale quickly with the number of items $N$, which could be prohibitive in many modern applications. 
Therefore, feature-agnostic approaches are known to be not scalable and even show a (nearly) linear regret in some experiments \citep{wen2015efficient, zong2016cascading, ou2018multinomial, agrawal2020tractable}. 

To address the scalability issue, feature-determined approaches \citep{wen2015efficient, zong2016cascading, ou2018multinomial, agrawal2020tractable} notice that commonly each item $i$ has some features $\vx_i$. 
They assume that there is a \textit{deterministic} function $g$ parameterized by $\vgamma$ such that $\theta_i {=} g(\vx_i; \vgamma)$ with no error. 
When this restricted generalization model assumption holds, 
the regret bound for feature-determined approaches can be independent of $N$ but depend on the number of features $d$ instead. 
Such an argument makes the feature-determined methods theoretically attractive when $d$ is relatively small compared with $N$. 

However, feature-determined approaches have two major drawbacks. 
First, as usual, algorithms designed with a restrictive model assumption are brittle. 
No matter how informative $\vx_i$ is and how complex $g$ is, it is typically challenging to ensure $\theta_i \equiv g(\vx_i)$ without stochastic errors. 
This misspecification is exacerbated when almost all existing works assume $g$ as linear, given the computational challenge. 
In the bandit setting, when $\theta_i = g(\vx_i)$ does not hold,  the regret is easy to scale linearly with $T$. 
Our experiments will further illustrate this observation. 
Second, feature-determined approaches are typically computationally demanding for online updating, which may cause system latency issues in online deployment. 
This is due to that we add an additional generalization model to the already complex structured bandit model and also have to update the full model as a whole. 







\section{General Framework}\label{sec:framework} 
To combine the merits of both approaches and hence enable scalable and robust bandit learning, we propose a meta-learning framework: we first build a Bayesian hierarchical model that enables information sharing among items via their features, and then design a TS algorithm that learns the information-sharing structure while minimizing the cumulative regrets. 
In this section, we will focus on the general framework, with examples given in Section \ref{sec:example}. 
For any positive integer $M$, we denote the set $\{1, \dots, M\}$ by $[M]$. 

\subsection{Feature-based hierarchical model}\label{sec:Model}
With a large number of items, we adopt the \textit{meta-learning} viewpoint \citep{vilalta2002perspective}, by regarding the items $\{(\vx_i, \theta_i)\}$ as sampled from a joint distribution. 
To allow information sharing while mitigating the issue from a deterministic generalization model, we model the item-specific parameter $\theta_i$ as sampled from a certain distribution $g(\theta_i|\vx_i, \vgamma)$ instead of being entirely determined by $\vx_i$. 
Here, $g$ is a model parameterized by an \textit{unknown} vector $\vgamma$, which we will instantiate shortly with examples. 
Therefore, 
combining with the base model \eqref{eqn:main_raw_model}, 
we consider the following hierarchical model: 

\begin{equation}\label{eqn:general_hierachical}
  \begin{alignedat}{2}
&\text{(Prior)} \quad
\quad\quad\quad\quad\quad\quad\quad\quad\quad
\vgamma &&\sim Q(\vgamma),\\
&\text{(Generalization function)} \;
\;    \theta_i| \vx_i, \vgamma  &&\sim g(\theta_i|\vx_i, \vgamma), \forall i \in [N],\\ 
&\text{(Observations)} \quad\quad\quad\quad\quad\quad\;
\;    \boldsymbol{Y}_t &&\sim f(\boldsymbol{Y}_t|A_t, \vthe),\\
&\text{(Reward)} \quad\quad\quad\quad\quad\quad\quad\quad\;
\;   R_t &&= f_r(\boldsymbol{Y}_t ; \veta), 
      \end{alignedat}
\end{equation}

where $Q(\vgamma)$ is the prior distribution for $\vgamma$. 
Intuitively, as such, we can share information across items via $g$ to infer any $\theta_i$ and speed up learning, while we can also utilize the observations $\{\boldsymbol{Y}_t\}$ to estimate $\theta_i$ in an unbiased way via $f$. 
Compared with the two existing approaches, the main difference can be concisely summarized in Table \ref{tab:comparison}. 


\setlength{\tabcolsep}{2pt}
\begin{table}[t]
\centering
\caption{Comparison of key model assumptions.}
\label{tab:comparison}
\begin{tabular}{ccc}
\toprule
{Feature-agnostic} & {Feature-determined} & Feature-guided (ours)   \\
\midrule
$\theta_i \sim \prob(\theta)$  & $\theta_i = g(\vx_i; \vgamma)$ &  $\theta_i \sim g(\theta_i|\vx_i,  \vgamma)$  \\
\bottomrule
\end{tabular}
\end{table}

Finally, from the meta-learning point of view, 
it is more common to consider the Bayes regret \cite{kveton2021meta}: 
\begin{align*}
     BR(T) = \Mean_{
     \vgamma \sim Q(\vgamma), 
     \vx_i \sim \prob(\vx_i), \theta_i \sim g(\theta_i|\vx_i, \vgamma)}  R(T, \vthe), 
\end{align*}
where the expectation is additionally taken over the item distribution and the prior distribution $Q(\vgamma)$. 

\subsection{Meta Thompson sampling with feature-guided exploration}\label{sec:main_alg}
On the foundation of the hierarchical model \eqref{eqn:general_hierachical}, we propose our bandit algorithm 
in Algorithm \ref{alg:main_TS}, which is a natural and general TS-type algorithm. 
TS is one of the most popular bandit algorithm frameworks  \citep{russo2017tutorial, lattimore2020bandit}, with superior numerical and theoretical performance. 
As a Bayesian algorithm, TS samples the action at each round from the posterior distribution of the optimal action.


For a given structured bandit problem, once the generalization model $g$ and the prior are specified, 
the remaining steps to adapt Algorithm \ref{alg:main_TS} 
are updating the posterior (step 1-4) and solving the optimization problem (step 5). 
This optimization step is problem-dependent, and can typically be solved efficiently via existing methods in the corresponding structured bandit literature. 

\begin{algorithm}[!t]
\SetKwData{Left}{left}\SetKwData{This}{this}\SetKwData{Up}{up}
\SetKwFunction{Union}{Union}\SetKwFunction{FindCompress}{FindCompress}
\SetKwInOut{Input}{Input}\SetKwInOut{Output}{output}
\SetAlgoLined
\Input{
Prior $Q(\vgamma)$ and known parameters of the hierarchical model
}
Set $\mH_{1} = \{\}$

\While{$t<T$}{ 

1. Update the posterior for $\vgamma$ as $\prob(\vgamma | \mH_{t})$, according to the  hierarchical model \eqref{eqn:general_hierachical}\\

2. Sample $\Tilde{\vgamma} \sim \prob(\vgamma | \mH_{t})$ \\ 

3. Update the posterior for $\vthe$ as $\prob(\vthe | \mH_{t}, \Tilde{\vgamma})$, according to model \eqref{eqn:main_raw_model} with $g(\theta_i \mid \vx_i, \Tilde{\vgamma})$ as the prior for each $\theta_i$
\\

4. Sample $\Tilde{\vthe} \sim \prob(\vthe | \mH_{t}, \Tilde{\vgamma})$\\

5. Take the greedy action $A_t$ w.r.t. $\tilde{\vthe}$ as 
$A_t = \argmax_{a \in \mA} \Mean(R_t \mid a, \tilde{\vthe})$\\ 
6. Receive reward $R_t$ and update the dataset as $\mH_{t+1} \leftarrow \mH_{t} \cup \{(A_{t}, R_{t})\}$

}
\caption{
\name{MTSS}: Meta Thompson Sampling for Structured bandits 
}\label{alg:main_TS}
\end{algorithm}

We will discuss the posterior updating step in depth in Section \ref{sec:offline_training}. 
Before we proceed, we remark that step 1-4 of Algorithm 
\ref{alg:main_TS} can actually be written concisely as sampling $\Tilde{\vthe}$ from its posterior based on the hierarchical model \eqref{eqn:general_hierachical}, which can be seen from the relationship
\begin{align*}
    \prob(\vthe \mid \mH) = \int_{\vgamma} \prob(\vthe \mid \vgamma, \mH)  \prob(\vgamma \mid \mH) d \vgamma. 
\end{align*}
Therefore, Algorithm 
\ref{alg:main_TS} can be regarded as a TS-type algorithm. 
We split the posterior updating process into steps 1-4 for two major reasons. 
First, in many cases, it is computationally more efficient to update the posteriors of $\vgamma$ and $\vthe$ separately, as will be discussed in the next section. 
Second, this decomposition provides a nice insight that our framework actually constructs a feature-based informative prior $g(\theta_i | \vx_i, \Tilde{\vgamma})$ for each $\theta_i$ to guide the feature-agnostic TS algorithm, and the prior is obtained by pooling information across items via their features using the hierarchical model. 
As such, our approach is an instance of meta-learning \citep{vilalta2002perspective}, 
and hence we refer to Algorithm \ref{alg:main_TS} as Meta Thompson Sampling for Structured bandits (\name{MTSS}).

\begin{remark}
The proposed framework is particularly useful for cold-start problems, where new items will be frequently introduced. 
Without any historical interaction data, it is important to construct an informative prior for a new item based on its features, which provides an initial expectation and guides the exploration. 
\end{remark}

\subsection{Posterior updating and offline-training-online-deployment}\label{sec:offline_training}
In Algorithm \ref{alg:main_TS}, the posterior updating step can be computed either explicitly when the problem structure permits (see e.g., Section \ref{sec:semi-bandit}), or via approximate posterior inference algorithms, such as Gibbs sampler \cite{johnson2010gibbs} or variational inference \cite{blei2017variational}.

We note that the base model \eqref{eqn:main_raw_model} typically yields a nice conjugate structure for $\vthe$ (e.g., in all three examples in Section \ref{sec:example}), and approximate posterior inference can be applied to $\vgamma$ alone in these cases. 
Approximate posterior inference is widely applied to TS \cite{yu2020graphical,wan2021metadata}, and is particularly appropriate in this case due to two reasons: 
(i) the posterior of $\vgamma$ is only used to construct a prior for the base model \eqref{eqn:main_raw_model}, and hence its error is of less concern, as related feature-agnostic TS algorithms typically enjoy prior-independent or instance-independent sublinear regret bounds \citep{wang2018thompson, perrault2020statistical, zhong2021thompson}; 
(ii) when computing the posterior of $\vgamma$, many approximate inference algorithms can benefit from the nice hierarchical structure and hence be efficient. For example, with Gibbs sampler, the algorithm will alternate between the posterior of $\vthe$, which yields a conjugate form, and that of $\vgamma$, which involves a Bayesian regression. 
Both parts can be solved efficiently. 

To facilitate computationally efficient deployment, we further propose an offline-training-online-deployment variant, 
where we only sample a new $\Tilde{\vgamma}$ at a certain time point $t \in \mathcal{T}$ instead of at every time point. 
For example, $\mathcal{T}$ can be $\{2^l : l = 1, 2, \dots\}$ or some trigger time every week. 
In other words, we will re-train the generalization model $g(\theta;\vx,\vgamma)$ \textit{offline} in a batch mode, 
and during the \textit{online} deployment, we only need to utilize the priors $\{g(\theta_i | \vx_i, \Tilde{\vgamma})\}$. 
As such, during the online deployment phase, our algorithm requires \textit{zero} additional computational cost compared to feature-agnostic TS. 
Therefore, \name{MTSS} in general yields low latency and hence is particularly suitable for deployments in large-scale systems. 
Besides, a powerful generalization function such as a Gaussian process or a Bayesian neural network also becomes feasible. 
This is a highly practical algorithm, and our numerical results further support its good performance. 
Finally, it can also be viewed as an empirical Bayes approach \citep{maritz2018empirical}. 

\section{Examples}\label{sec:example}
In this section, we illustrate our framework with three representative examples. 
For each of them, we will first write its feature-agnostic form in notations of model \eqref{eqn:main_raw_model}, 
and then discuss its applications and the optimization problem, 
and next instantiate model \eqref{eqn:general_hierachical} with an \textit{example} choice of $g$, 
and finally discuss the corresponding posterior computation to instantiate \name{MTSS}. 
Denote the cardinality of set $A$ by $|A|$. 
\subsection{Cascading bandits for online learning to rank}\label{sec:cascading}
The cascading model is popular in learning to rank \citep{chuklin2015click} to characterize how a user interacts with an ordered list of $K$ items. 
Its bandit version has attracted much attention recently, and both feature-agnostic \citep{kveton2015cascading, cheung2019thompson} and feature-determined approaches \citep{zong2016cascading} have been discussed.

In this model, $\mA$ contains all the subsets of $[N]$ with length $K$, $A_t = (a_t^1, \dots, a_t^K) \in \mA$ is a sorted list of items being displayed, 
$\vY_t$ is an indicator vector with the $a$th entry equal to $1$ when the $a$th displayed item is clicked, 
and $R_t$ is the reward with $f_r(\vY_t) \equiv \sum_{k \in [K]} Y_{k,t}  \in \{0,1\}$, where $Y_{k,t}$ is the $k$th entry of $\vY_t$. 
The model is intuitive and widely applied: 
the user will exam the $K$ displayed items from top to bottom, 
and stop to click one item once she is attracted (or leave if none of them is attractive). 
Let $I_t$ be the index of the chosen item if exists, and otherwise let $I_t = K$. 
To formally define the model $f$, it is useful to introduce a latent binary variable $E_{k,t}$ to indicate if the $k$th displayed item is examined by the $t$th user, 
and a latent variable $W_{k,t}$ to indicate if the $k$th displayed item is attractive to the $t$th user. 
Therefore, the value of $W_{k,t}$ is only visible when $k \le I_t$.
Let $\theta_i$ be the attractiveness of the item $i$. 
The key probabilistic assumption is that $W_{k, t} \sim Bernoulli(\theta_{a^k_t}), \forall k \in [K]$. 
When $\vthe$ is known, the optimal action can be shown as any permutation of the top $K$ items with the highest attractiveness factors. 

To characterize the relationship between items using their features, 
one example choice of $g$ is the popular Beta-Bernoulli logistic model \citep{forcina1988regression, wan2021metadata}, where $\theta_i {\sim} Beta(logistic(\vx_i^T \vgamma), \psi)$ for some known parameter $\psi$. 
Hereinafter, we adopt the mean-precision parameterization of the Beta distribution, with $logistic(\vx_i^T \vgamma)$ being the mean and $\psi$ being the precision parameter. 
Therefore, our model is 
\begin{equation}\label{eqn:hierachical_model_cascading}
    \begin{split}
     \theta_i &\sim Beta(logistic(\vx_i^T \vgamma), \psi), \forall i \in [N],\\
    W_{k, t} &\sim Bernoulli(\theta_{a^k_t}), \forall k \in [K], \\
    Y_{k,t} &= W_{k,t} E_{k,t}, \forall k \in [K],\\
    E_{k,t} &= (1-Y_{k-1}) E_{k-1,t}, \forall k \in [K],\\
    R_t &= \sum_{k \in [K]} Y_{k,t}, 
    \end{split}
\end{equation}
with $E_{1,t} \equiv 1$. 
With a given $\vgamma$, the posterior of $\vthe$  enjoys the  Beta-Bernoulli conjugate relationship and hence can be updated explicitly and efficiently. 
The prior $Q(\vgamma)$ can be chosen as many appropriate distributions such as Gaussian. 
To update the posterior of $\vgamma$, we can apply approximate inference as discussed in Section \ref{sec:offline_training}. 
Many other learning to rank models, such as the position-based model \citep{chuklin2015click}, can be formulated and solved similarly. 


\subsection{Combinatorial semi-bandits for online combinatorial optimization}
\label{sec:semi-bandit}

Online combinatorial optimization has a wide range of  applications \citep{sankararaman2016semi}, including maximum weighted matching, ad allocation, and news page optimization, to name a few. 
It is also common that all chosen items will  generate a separate observation, known as the  semi-bandit problem \citep{chen2013combinatorial}. 
Both the feature-agnostic \citep{chen2013combinatorial, wang2018thompson} and the feature-determined \citep{wen2015efficient} approaches have been studied. 
Formally, in a combinatorial semi-bandit, the feasible set $\mA \subseteq \{A \subseteq [N] : |A|\leq K\}$ consists of subsets that satisfy the size constraint and other application-specific constraints. 
The agent will sequentially choose a subset $A_t$ from $\mA$, and then receive a separate reward $Y_{i, t}$ for each chosen item $i \in A_t$, with the overall reward defined as $R_t = \sum_{i \in A_t} Y_{i,t}$. 
When the mean reward of each item is known, the optimal action can be obtained from a combinatorial optimization problem, which can be efficiently solved in most real applications considered in the literature by corresponding combinatorial optimization algorithms \citep{chen2013combinatorial}. 

In this section, as an example, we focus on the popular case where $Y_{i, t}$ follows a Guassian distribution 
and consider using a linear mixed model (LMM) as the generalization model. 
Specifically, the full model is as follows: 
\begin{equation}\label{eqn:LMM}
    \begin{split}
     \theta_i &\sim \normal(\vx_i^T \vgamma, \sigma_1^2), \forall i \in [N],\\
    Y_{i, t} &\sim \normal(\theta_i, \sigma_2^2), \forall i \in A_{t},\\
    R_t &= \sum_{i \in A_t} Y_{i,t}, 
    \end{split}
\end{equation}
where it is typically assumed that $\sigma_1$ and $\sigma_2$ are known. 
We choose the prior $\vgamma \sim \normal(\vmu_{\vgamma}, {\Cov}_{\vgamma})$ with parameters as known. 
For this instance, the posteriors can be derived explicitly (see Appendix \ref{sec:appendix_LMM}). 
Many other distributions (e.g., Bernoulli) and model assumptions (e.g., Gaussian process) can be formulated similarly, depending on the applications.




\subsection{MNL bandits for dynamic assortment optimization
}\label{sec:MNL_bandits}
Assortment optimization \citep{pentico2008assortment} is a long-standing problem that aims to solve the most profitable subset of items to offer, especially when there exist substitution effects. 
The Multinomial Logit (MNL) model \citep{luce2012individual} is arguably the most popular model, and the corresponding bandit problem has been studied, via either the feature-agnostic  \citep{agrawal2017thompson, agrawal2019mnl} or the feature-determined approaches \citep{ou2018multinomial, agrawal2020tractable}. 
 In assortment optimization, the agent needs to offer a subset (assortment) $A_t \in \mA = \{A\subseteq[N]:|A|\leq K\}$, and the customer will then choose either one of them or the no-purchase option (denoted as item $0$).
Let $\vY_t=(Y_{0,t},\cdots,Y_{N,t})^{T}$ be an indicator vector of length $N+1$, where $Y_{i,t}$ equals to $1$ if the item $i$ is chosen. 
Let $\veta = (\eta_1, \dots,\eta_{N})^{T}$, where $\eta_k$ is the revenue of the item $k$.
The collected revenue in round $t$ is then $R_t = \sum_{i\in A_t} Y_{i,t} \eta_{i}$. 
In an MNL bandit, each item $i$ has an utility factor $v_i$, and 
the choice behaviour is characterized by a multinomial distribution
$\boldsymbol{Y}_t \sim Multinomial(1, \frac{v_i \I(i \in \{0\}\cup A_t)}{1 + \sum_{j \in A_t} v_i})$, with the convention that $v_0 = 1$. 
When $v_i$'s are known, the optimal assortment can be solved via linear programming \citep{agrawal2017thompson}.

Since direct inference under this model is intractable due to the complex dependency of the reward distribution on $A_t$, 
an epoch-type offering \cite{agrawal2017thompson, agrawal2019mnl, dong2020multinomial} is more popular in the bandit literature, where we keep offering the same assortment $A^l$ in the $l$th epoch until the no-purchase appears. 
Under this setup, it is easier to work with the item-specific parameter $\theta_i = (1+v_i)^{-1}$ and consider the number of purchase for the item $i$ in each epoch, denoted as $Y_{i}^l$. 
Then, 
based on Lemma 2 in \cite{agrawal2017thompson}, 
it can be proven that 
$Y_{i}^l \sim Geometric(\theta_i), \forall i \in A^l$. 

The nice property of such a schedule is that the distributions do not depend on $A_t$ any longer. 
Besides, the geometric distribution has a nice conjugate relationship with the Beta distribution. 
As a concrete example of our framework, we can consider modeling the relationship between $\theta_i$ and $\vx_i$ with the following Beta-Geometric logistic model:
\begin{equation}\label{eqn1}
    \begin{split}
     \theta_i &\sim Beta \big(\frac{logistic(\vx_i^T \vgamma)+ 1}{2}, \psi \big) , \forall i \in [N],\\
    Y_{i}^l &\sim Geometric(\theta_i), \forall i \in A^l,\\
    R_l &= \sum_{i\in A^{l}}Y_{i}^l\eta_{i}
    \end{split}
\end{equation}
Other generalization models such as the one in \eqref{eqn:hierachical_model_cascading} are also possible. 
We choose this specific form as it is widely observed \cite{agrawal2017thompson, agrawal2019mnl, dong2020multinomial} that $v_i < 1$, i.e., no item is more popular than the no-purchase option. This is equal to $\theta_i \in (1/2, 1)$. 
Finally, we remark that Algorithm \ref{alg:main_TS} needs to be slightly modified to be consistent with the epoch-style offering, though the main idea remains exactly the same. 
We defer the full algorithm to Appendix \ref{sec:alg_MNL}. 
The choices of priors and the posterior updating rules are similar to Section \ref{sec:cascading}. 

\section{Theory}

In this section, we provide theoretical guarantees for \name{MTSS}. 
We start with a general result, i.e., we do not restrict to a specific problem. 
Our result is information-theoretic and the proof is inspired by \citet{lu2019information}. 

Let $I(X;Y)$ be the \textit{mutual information} \citep[MI,][]{kullback1997information} between two random variables $X$ and $Y$, $I(X;Y|Z)$ be the conditional MI conditioned on $Z$, and $I_t(X;Y) = I(X;Y|\{(A_t', \vY_t')\}_{t'=1}^{t-1})$. 
To save space, we defer the detailed definitions to the Appendix \ref{preliminary}. 
Intuitively, MI is a measure of the mutual dependence between two variables.

Let $\Delta_{t} = \max_a r(a, \vthe) - r(A_t, \vthe)$ be the per-round regret, and $\Mean_t(X) = \Mean(X|\{(A_t', \vY_t')\}_{t'=1}^{t-1})$. 
For a given problem, we assume we can first find some $\Gamma_t$ and $\epsilon_t$, such that 
\begin{equation}
\begin{split}
   \Mean_{t}[\Delta_{t}] \le \Gamma_{t}\sqrt{I_{t}(\vthe;A_{t},\vY_{t})}+\epsilon_{t}.
\label{eqn:Delta_decomp}
\end{split}
\end{equation}
Here, $\Gamma_t$ is related to the concentration property of the model, and $\epsilon_t$ is a small error term. They can typically be derived by following a few routines introduced in \citet{lu2019information}. 
We will give an example shortly. 

To gain more insights of our bound below, we introduce \textit{oracle-TS}, the TS algorithm that has access to the true generalization model \textit{a priori} and use $\{g(\theta_i|\vx_i, \vgamma)\}$ as priors in feature-agnostic TS. 
For a general structured bandit problem, the regret of \name{MTSS} can be bounded as follows. 

\begin{theorem}[General Regret Bound] \label{general regret bound}
\textit{Suppose that \eqref{eqn:Delta_decomp} holds for all $t$. 
Assume $\Gamma_t \le \Gamma$ almost surely for some $\Gamma$. Then for Algorithm \ref{alg:main_TS}, 
$BR(T)$ is bounded by} 
\scalebox{0.95}{\parbox{\linewidth}{%
\begin{align}\label{eqn:general_bound}
    \underbrace{\Gamma\sum_{t}\Mean[\sqrt{I_{t}(\vgamma;A_{t},\vY_{t})}]}_\text{Regret due to not knowing $\vgamma$}
    + \underbrace{\sum_{t}\Gamma\Mean[\sqrt{I_{t}(\vthe;A_{t},\vY_{t}|\vgamma)}] + \Mean[\epsilon_{t}]}_\text{Regret bound for Oracle-TS}. 
\end{align}
}}
\vspace{-.6cm}
\end{theorem}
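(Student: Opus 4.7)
The plan is to start from the per-round hypothesis \eqref{eqn:Delta_decomp} and peel off the uncertainty about $\vgamma$ from the uncertainty about $\vthe$ using the chain rule of mutual information. First, apply the (conditional) chain rule to the joint variable $(\vgamma,\vthe)$ with the observation pair $(A_t,\vY_t)$, recalling that the $I_t(\cdot\,;\cdot)$ notation already conditions on the history $\{(A_{t'},\vY_{t'})\}_{t'<t}$:
\begin{equation*}
I_t(\vgamma,\vthe;\,A_t,\vY_t) \;=\; I_t(\vgamma;\,A_t,\vY_t) \;+\; I_t(\vthe;\,A_t,\vY_t \mid \vgamma).
\end{equation*}
By monotonicity of mutual information, $I_t(\vthe;A_t,\vY_t) \le I_t(\vgamma,\vthe;A_t,\vY_t)$, so the right-hand side of the identity above upper bounds the MI appearing in \eqref{eqn:Delta_decomp}.

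Next, use the elementary inequality $\sqrt{a+b}\le\sqrt{a}+\sqrt{b}$ for $a,b\ge 0$ to split the square root, combine with the uniform bound $\Gamma_t\le\Gamma$, and substitute into \eqref{eqn:Delta_decomp} to obtain
\begin{equation*}
\Mean_t[\Delta_t] \;\le\; \Gamma\,\sqrt{I_t(\vgamma;A_t,\vY_t)} \;+\; \Gamma\,\sqrt{I_t(\vthe;A_t,\vY_t\mid\vgamma)} \;+\; \epsilon_t.
\end{equation*}
Taking the outer expectation via the tower property and summing over $t=1,\ldots,T$ produces $BR(T)$ on the left and the two bracketed groups of \eqref{eqn:general_bound} on the right. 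The labeling of the second group as the ``oracle-TS'' bound is then automatic: conditional on $\vgamma$, \name{MTSS} collapses to a feature-agnostic TS run with prior $g(\theta_i\mid\vx_i,\vgamma)$ on each $\theta_i$, so rerunning the exact same argument for oracle-TS (which knows $\vgamma$ from the start) yields precisely $\sum_t \Gamma\,\Mean[\sqrt{I_t(\vthe;A_t,\vY_t\mid\vgamma)}] + \Mean[\epsilon_t]$, matching the bracket.

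The main obstacle is not any single step of the chain of inequalities but the bookkeeping around the history conditioning when invoking the MI chain rule and monotonicity: $A_t$ is produced by the randomized posterior-sampling step as a measurable function of the history together with exogenous Thompson noise, and one has to verify that $I_t$ as defined satisfies the additive chain rule and the bound $I_t(\vthe;\cdot)\le I_t(\vgamma,\vthe;\cdot)$ in this random-action, random-parameter setting. This verification is standard in the information-ratio framework of \citet{lu2019information} that the statement builds on; the remaining ingredients ($\sqrt{\cdot}$ subadditivity, tower property, and summation) are routine.
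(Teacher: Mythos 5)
Your proposal is correct and follows essentially the same route as the paper's proof: bound $I_t(\vthe;A_t,\vY_t)$ by $I_t(\vgamma,\vthe;A_t,\vY_t)$ via monotonicity, split with the chain rule into $I_t(\vgamma;A_t,\vY_t)+I_t(\vthe;A_t,\vY_t\mid\vgamma)$, apply $\sqrt{a+b}\le\sqrt{a}+\sqrt{b}$ and $\Gamma_t\le\Gamma$, then take expectations and sum over $t$. The paper's proof is exactly this chain of steps, so no further comparison is needed.
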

This decomposition provides intuitive insights into the performance of \name{MTSS}. 
It is also consistent with our construction, as \name{MTSS} aims to learn the generalization model to perform closer to oracle-TS while minimizing the regret. 
The specific regret bound is highly problem-dependent. 
It depends on both the first part of \eqref{eqn:general_bound} which measures the cost of learning the meta parameter $\vgamma$ (or equivalently, learning the true prior for $\vthe$), and the second part which quantifies the regret of the agent with known $\vgamma$ (or equivalently, the performance of oracle-TS). 
For problems with existing results on Bayes regret bound for feature-agnostic TS, the latter can usually be derived with minimal modifications.

As a concrete example, we next analyze the combinatorial semi-bandits with the linear mixed model as the generalization function (see Section \ref{sec:semi-bandit}). 
This example demonstrates the benefits of meta-learning clearly. 
Without loss of generality, we first state several standard regularity conditions:

\textbf{Assumption 1.} $\|\vx_{i}\|_{2}\leq 1$, for all $i \in [N]$.

\textbf{Assumption 2.} $\Sigma_{\vgamma}, \sigma_{1}, \sigma_{2}$ are all bounded.


We have the following regret bound. 
\begin{theorem}[Semi-bandits]\label{Theorem1}
\textit{Under Assumptions 1-2, the Bayes regret of the \name{MTSS} under model \eqref{eqn:LMM} is bounded by }

\begin{align*}
    BR(T)
    &\leq 
    \underbrace{c_1 K\sqrt{Td}\sqrt{ log\Big(1+\frac{N\lambda_{1}(\Sigma_{\vgamma})}{\sigma_{1}^{2}+\sigma_{2}^{2}/T}\Big)}}_\text{Regret due to not knowing $\vgamma$}\\
    &{+} 
    \underbrace{c_1\sqrt{NTK}\sqrt{log(1 {+} \frac{\sigma_{1}^{2}}{\sigma_{2}}T)}
    {+} K\sqrt{\frac{2}{N}(\lambda_{1}(\Sigma_{\vgamma}) {+} \sigma_{1}^{2})}}_\text{Regret bound for Oracle-TS}\\
    &=\TO\Big(K\sqrt{Td} + \sqrt{NTK} \Big), 
\end{align*}

\textit{where $c_1 = 4\sqrt{\frac{\lambda_{1}(\Sigma_{\vgamma})+\sigma_{1}^{2}}{2log(1+(\lambda_{1}(\Sigma_{\vgamma})+\sigma_{1}^{2})/\sigma_{2}^{2})} log(4NT^{2})}$ and  $\TO$ is the big-O notation that hides logarithmic terms.} 
\end{theorem}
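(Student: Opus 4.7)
I will apply Theorem~\ref{general regret bound} to the Gaussian linear mixed model in~\eqref{eqn:LMM}. This requires (A) verifying the per-round regret inequality~\eqref{eqn:Delta_decomp} with explicit $\Gamma$ and $\epsilon_t$; (B) upper bounding the oracle-TS mutual-information sum $\sum_t \Mean\sqrt{I_t(\vthe;A_t,\vY_t\mid\vgamma)}$; and (C) upper bounding the meta mutual-information sum $\sum_t\Mean\sqrt{I_t(\vgamma;A_t,\vY_t)}$. The three summands of the claimed bound will correspond respectively to (C), (B), and the cumulative contribution of $\sum_t \Mean[\epsilon_t]$ from (A).

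\textbf{Step A: per-round regret inequality.} Under~\eqref{eqn:LMM} the marginal variance of each $\theta_i$ is at most $\lambda_1(\Cov_{\vgamma})+\sigma_1^2$, so conditional on $\mH_t$ the posterior of $\vY_t$ is sub-Gaussian with a parameter controlled by this quantity plus $\sigma_2^2$. I plan to invoke the Gaussian information-ratio machinery of \citet{lu2019information}: the per-round regret of a semi-bandit is a sum of up to $K$ per-item regrets, and combining a confidence-radius truncation with a union bound over the $N$ items and $T^2$ rounds together with the Gaussian channel capacity identity yields
\begin{equation*}
    \Mean_t[\Delta_t] \;\le\; c_1 \sqrt{K}\,\sqrt{I_t(\vthe;A_t,\vY_t)} \;+\; \epsilon_t,
\end{equation*}
with the residual $\sum_{t=1}^T \Mean[\epsilon_t]$ bounded by $K\sqrt{2(\lambda_1(\Cov_{\vgamma})+\sigma_1^2)/N}$, which matches the lower-order summand in the claim. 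The constant $c_1$ arises because the one-dimensional Gaussian information ratio with prior variance $\sigma^2$ and observation noise $\sigma_2^2$ equals $\sigma^2/(2\log(1+\sigma^2/\sigma_2^2))$.

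\textbf{Step B: oracle mutual information.} Conditional on $\vgamma$, the coordinates of $\vthe$ are independent with $\theta_i\mid\vgamma\sim\mathcal{N}(\vx_i^\top\vgamma,\sigma_1^2)$, and all observations are conditionally Gaussian. By the chain rule and the Gaussian channel formula,
\begin{equation*}
    \sum_{t=1}^T I_t(\vthe;A_t,\vY_t\mid\vgamma) \;\le\; I(\vthe;\mH_{T+1}\mid\vgamma) \;=\; \sum_{i=1}^N \tfrac{1}{2}\log\!\Big(1+\tfrac{\sigma_1^2 T_i}{\sigma_2^2}\Big),
\end{equation*}
where $T_i$ is the random number of pulls of item $i$ with $\sum_i T_i\le KT$. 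Jensen's inequality bounds this by $\tfrac{N}{2}\log(1+\sigma_1^2 KT/(N\sigma_2^2))\le \tfrac{N}{2}\log(1+\sigma_1^2 T/\sigma_2^2)$. Applying Cauchy--Schwarz across $t$ and combining with Step~A produces the middle summand of the claim.

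\textbf{Step C: meta mutual information.} The data depend on $\vgamma$ only through $\vthe$, giving the Markov chain $\vgamma\to\vthe\to\mH_{T+1}$. Rather than applying data processing straight to $I(\vgamma;\vthe)$ (which would ignore the effect of the observation noise), I will view each item as supplying a Gaussian sufficient statistic $\bar Y_i$ whose effective noise variance is at least $\sigma_1^2+\sigma_2^2/T$ (since $T_i\le T$). This reduces the two-level hierarchy to a single Gaussian regression of $X\vgamma$ observed through additive noise of variance $\sigma_1^2+\sigma_2^2/T$, where $X=[\vx_1,\dots,\vx_N]^\top$. The Gaussian channel formula combined with the trace inequality $\log\det(I+M)\le d\log(1+\mathrm{tr}(M)/d)$ and Assumption~1 then delivers
\begin{equation*}
    \sum_{t=1}^T I_t(\vgamma;A_t,\vY_t) \;\le\; I(\vgamma;\mH_{T+1}) \;\le\; \tfrac{d}{2}\log\!\Big(1+\tfrac{N\lambda_1(\Cov_{\vgamma})}{\sigma_1^2+\sigma_2^2/T}\Big).
\end{equation*}
A final Cauchy--Schwarz together with Step~A produces the leading meta summand in the claim.

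\textbf{Main obstacle.} The most delicate step is Step~C. The data-processing reduction is clean, but obtaining the stated $\log(1+N\lambda_1(\Cov_{\vgamma})/(\sigma_1^2+\sigma_2^2/T))$ rather than the cruder $\log(1+N\lambda_1(\Cov_{\vgamma})/\sigma_1^2)$ requires collapsing the two-level hierarchy into an equivalent one-level Gaussian regression with the correct effective noise, and then applying the trace-determinant inequality so that $N$ (rather than $NT$ or $NK$) appears in the numerator of the log. A secondary bookkeeping challenge is tracking the powers of $K$ through all three steps consistently so that the meta and oracle terms come out with the claimed $K$ and $\sqrt{NK}$ dependencies rather than some common multiple; this will require distributing the $\sqrt{K}$ factor from Step~A together with any $K$ factors that enter the mutual-information bounds through the $K$ simultaneous observations per round.
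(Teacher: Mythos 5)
Your Steps B and C are essentially the paper's Lemma~\ref{bound for MI}: the bound $I(\vgamma;\mH_{T+1})\le \tfrac d2\log\bigl(1+\tfrac{N\lambda_1(\Cov_{\vgamma})}{\sigma_1^2+\sigma_2^2/T}\bigr)$ is obtained there from the explicit posterior precision $\tilde\Cov_{T+1}^{-1}=\Cov_{\vgamma}^{-1}+\sum_i\tfrac{n_T(i)}{\sigma_2^2+\sigma_1^2 n_T(i)}\vx_i\vx_i^T$, which is exactly your ``effective noise $\sigma_1^2+\sigma_2^2/T$ per item'' reduction, followed by Weyl's inequality and a trace bound; and the per-item oracle bound $\tfrac12\log(1+\sigma_1^2 T/\sigma_2^2)$ matches. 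The genuine gap is in Step A. The aggregate inequality $\Mean_t[\Delta_t]\le c_1\sqrt{K}\sqrt{I_t(\vthe;A_t,\vY_t)}+\epsilon_t$ requires, after Cauchy--Schwarz over the $K$ chosen items, the sub-additivity $\sum_{k\in A_t}\prob_t(k\in A_t)\, I_t(\theta_k;k,Y_{k,t})\le I_t(\vthe;A_t,\vY_t)$. That step is valid only when the coordinates of $\vthe$ are mutually independent given $\mH_t$; under the hierarchical model \eqref{eqn:LMM} they are correlated through the shared $\vgamma$, and the chain rule actually gives the reverse inequality $I_t(\vthe;\vY_t(a))=\sum_j\bigl[h_t(Y_{k_j}\mid Y_{<j})-h_t(Y_{k_j}\mid\theta_{k_j})\bigr]\le\sum_{k\in a}I_t(\theta_k;Y_{k,t})$, since conditioning reduces entropy. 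In the degenerate case $\sigma_1=0$, $d=1$, $\vx_i\equiv 1$ (all $\theta_i$ equal), the sum of per-item informations is $K$ times the joint information, so the needed direction fails by a factor up to $K$. The only generally valid patch, $I_t(\theta_k;Y_{k,t})\le I_t(\vthe;A_t,\vY_t)$ item by item, yields $\Gamma_t\propto K$ rather than $\sqrt K$, and feeding that into Theorem~\ref{general regret bound} inflates your middle term to order $K\sqrt{NT}$ instead of the claimed $\sqrt{NTK}$.

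The paper circumvents this by never forming the aggregate $I_t(\vthe;A_t,\vY_t)$. It keeps the per-item quantities $I_t(\theta_k;k,Y_{k,t})$ from Lemma~\ref{expected round regret}, splits each via the chain rule into $I_t(\vgamma;k,Y_{k,t})+I_t(\theta_k;k,Y_{k,t}\mid\vgamma)$, and then aggregates the two pieces by \emph{different} arguments (Lemmas~\ref{regret bound for semi bandit} and~\ref{decomp for MI}): the $\vgamma$ piece uses only monotonicity $I_t(\vgamma;k,Y_{k,t})\le I_t(\vgamma;A_t,\vY_t)$ together with $\sum_k\prob_t(k\in A_t)\le K$, accepting a factor $K$ (not $\sqrt K$) in the meta term --- which is why the first summand of the theorem carries $K\sqrt{Td}$; the $\theta_k\mid\vgamma$ piece aggregates \emph{exactly} across time per item, because conditional on $\vgamma$ the items are independent, and a Cauchy--Schwarz over the $(k,t)$ pairs with $\sum_k\Mean[n_T(k)]\le KT$ produces the $\sqrt{NTK}$ oracle term. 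You would need to restructure Step A along these lines; as written, your route either relies on a false information inequality or, when repaired, loses a $\sqrt K$ in the dominant term.
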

\vspace{-.2cm}
Therefore, when there is a large number of items ($Kd = o(N)$), the regret due to not knowing $\vgamma$ is asymptotically negligible (i.e., dominated by the second part), and the performance of \name{MTSS} is close to oracle-TS, as also observed in experiments. 
Moreover, note that the second part of the regret bound is dominated by 
$c_1\sqrt{NTK}\sqrt{log(1{+}\frac{\sigma_{1}^{2}}{\sigma_{2}}T)}$, 
which will decay to zero as $\sigma_1$ decreases, i.e., when the features become more useful. 
Therefore, we claim \name{MTSS} as scalable, since it allows utilizing feature information to learn shared structure so as to behave close to oracle-TS, which yields low regret when the features are informative and serves as the skyline. 


In contrast, 
as derived in \citet{basu2021no}, 
the additional Bayes regret of feature-agnostic TS than that of oracle-TS can only be bounded by $\sqrt{NTK}$. 
The dependency on $N$ is as expected, as feature-agnostic TS fails to share information across items and has to learn each from scratch. 
As such, \name{MTSS} would be more efficient when features are informative and the number of items is sufficient to learn a good generalization model ($Kd = o(N)$).
On the other hand, 
similar to the discussions in \citet{foster2020adapting} and  \citet{krishnamurthy2021tractable}, 
feature-determined TS assumes a restricted model and might suffer from the model misspecification. Therefore, to the best of our knowledge, one can expect a regret bound that is linear in $T$, which is consistent with our observations in experiments. 

\vspace{-.1cm}
\section{Experiments}\label{sec:expt}
\vspace{-.1cm}
\subsection{Synthetic datasets}\label{sec:simulation}
\textbf{Setting.} 
We first conduct simulation experiments to support our theoretical results and investigate the empirical performance of different approaches under various situations. 
We use the three models introduced in Section \ref{sec:example} to generate data, 
with $(N,K)$ set as $(1000, 3)$, $(3000, 10)$, $(1000, 5)$ for cascading bandits, semi-bandits, and MNL bandits, respectively. 
We set $d = 5$ for all tasks, $\veta = \vone$ for MNL bandits, and $\sigma_2 = 1$ for semi-bandits. 
We choose $Q(\vgamma) = \normal(\vzero, d^{-1}\vI)$ and sample $\vx_i$ from $\normal(\vzero, \vI)$ with an intercept. 
For each problem, we vary the value of either $\sigma_1$ or $\psi$, where a higher value of $\sigma_1$ or $\psi$ implies a larger heterogeneity between items, conditional on their features. 


\textbf{Baselines.}
For these three problems, we compare our framework with existing methods, which can be categorized as either feature-agnostic or feature-determined as we introduced. 
For the feature-agnostic approaches, we directly apply the TS algorithms proposed in the corresponding papers \citep{kveton2015cascading, wang2018thompson, agrawal2017thompson}, which are designed for each problem separately. 
For the feature-determined approaches, a naive application of some existing TS methods \citep{zong2016cascading, ou2018multinomial} will yield an unfair comparison, as they are based on functional assumptions on $g$ that are different from the data generation model used in this study (e.g., linear instead of logistic). 
Therefore, we closely follow their spirits and adapt in the following way: 
we use the models considered in Section \ref{sec:example}, but set $\sigma_1$ or $\psi$ as zero to derive the TS algorithms accordingly. 
We also present the performance of oracle-TS as our skyline. 
Finally, to study the performance of our algorithm with the offline-training-online-deployment schedule as in 
Section \ref{sec:offline_training}, we sample a new $\Tilde{\vgamma}$ every $500$ time points in MNL bandits and cascading bandits, and every $100$ time points in semi-bandits. 

\textbf{Results.}
The experiment results averaged over $50$ random seeds are presented in Figure \ref{fig:simu}. 
Overall, \name{MTSS} performs favorably and demonstrates its universality. 
Our findings can be summarized as follows. 
On one hand, \name{MTSS} enjoys a sublinear regret, while the feature-determined approach suffers from a linear regret due to the bias. 
This bias becomes more severe when $\sigma_1$ or $\phi$ increases, which implies that the amount of variation in $\theta_i$ that can not be explained by $g(\vx_i)$ grows. 
On the other hand, although the feature-agnostic methods in general have a sublinear regret, the learning speed is slow and hence the cumulative regret is much larger. 
This is due to the lack of generalization across items. 
With the offline-training-online-deployment schedule, our algorithm still performs well and is close to oracle-TS. 
Finally, \name{MTSS} is computationally efficient during online updating. 
For example, for MNL bandits, the total online time costs for feature-agnostic TS and \name{MTSS} are $2.3$ and $2.5$ seconds, respectively. 


\newcommand{\sharedwidthSubFig}{0.75}
\newcommand{\sharedwidth}{1}
\newcommand{\sharedheight}{2.8cm}
\newcommand{\vspacesmall}{-.1cm}

\begin{figure}[t]
     \centering
 \begin{subfigure}[b]{\sharedwidthSubFig\textwidth}
     \centering
     \includegraphics[width=\sharedwidth\textwidth, height = \sharedheight]{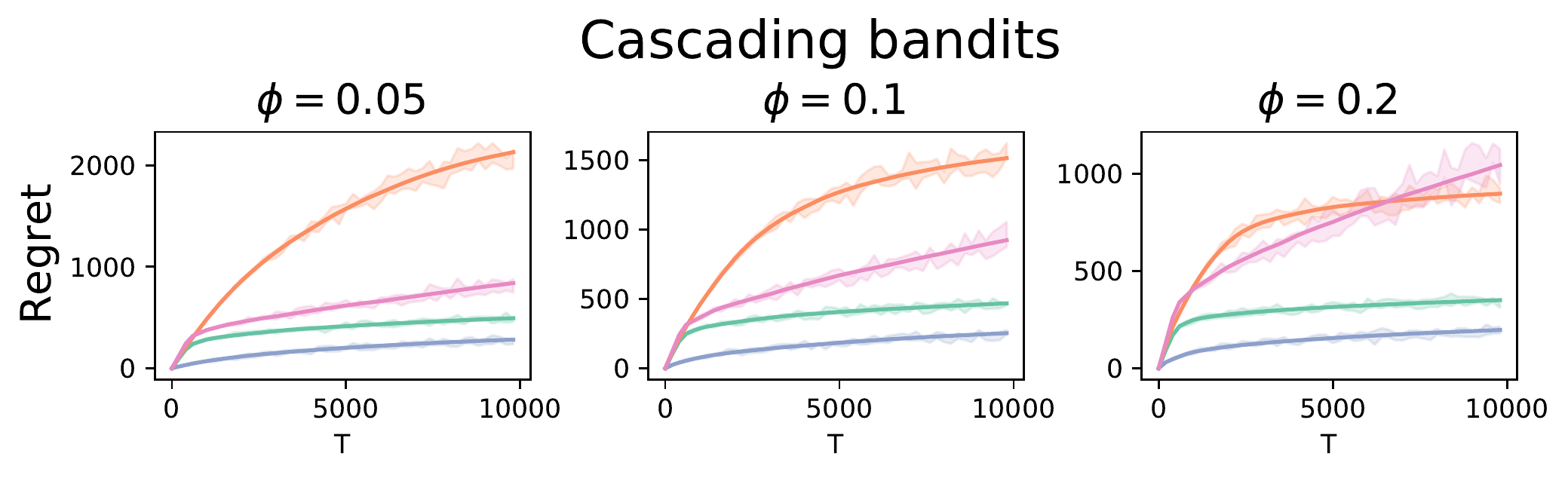}
    \vspace{\vspacesmall}
 \end{subfigure}
 \begin{subfigure}[b]{\sharedwidthSubFig\textwidth}
     \centering
     \includegraphics[width=\sharedwidth\textwidth, height = \sharedheight]{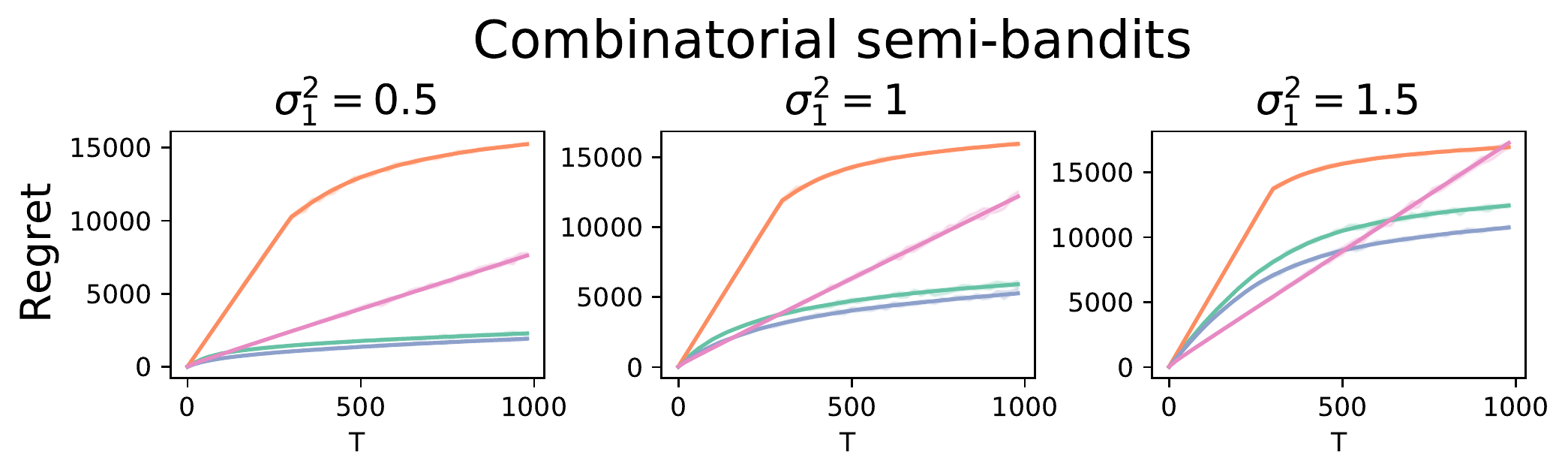}
    \vspace{\vspacesmall}
 \end{subfigure}
 \begin{subfigure}[b]{\sharedwidthSubFig\textwidth}
     \centering
     \includegraphics[width=\sharedwidth\textwidth, height = \sharedheight]{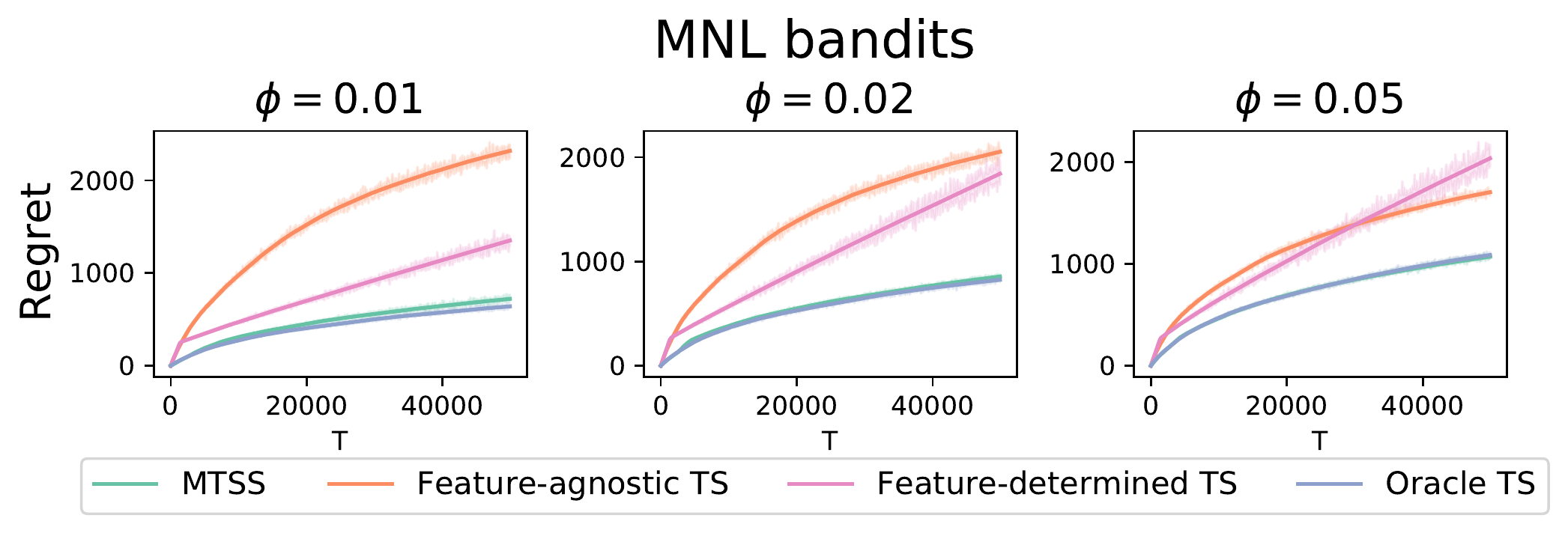}
 \end{subfigure}
\caption{
Simulation results. Shaded areas indicate the standard errors of the averages.}
\label{fig:simu}
\end{figure}


\textbf{Additional experiments.} 
First of all, we repeat the experiments with other values of $L, K, d$ in Appendix \ref{sec:appendix_additional_expt}, and the findings are similar. 
Second, we numerically study the impact of model misspecification in Appendix \ref{sec:appendix_additional_expt_robustness}, where \name{MTSS} shows great robustness. 
Recall that, to facilitate scalability, we assume that 
$\theta_i| \vx_i, \vgamma \sim g(\theta_i|\vx_i, \vgamma)$. 
Intuitively, since $g$ is used to construct a prior for the feature-agnostic model, 
as long as the learned priors provide reasonable information compared to the manually specified ones, this framework is still valuable. 
The robustness is also supported by the prior-independent or instance-independent sublinear regret bounds for feature-agnostic TS \citep{wang2018thompson, perrault2020statistical, zhong2021thompson}. 
Finally, 
in Appendix \ref{sec:appendix_additional_expt_cold_start}, 
we conduct experiments under the cold-start setting, where new items will be frequently introduced and old items will be removed. 
The benefit of \name{MTSS} is significant under this practical setting. 

\subsection{Real data}\label{sec:real_expt}
\begin{figure*}[t]
 \centering
 \includegraphics[width=0.8\textwidth]{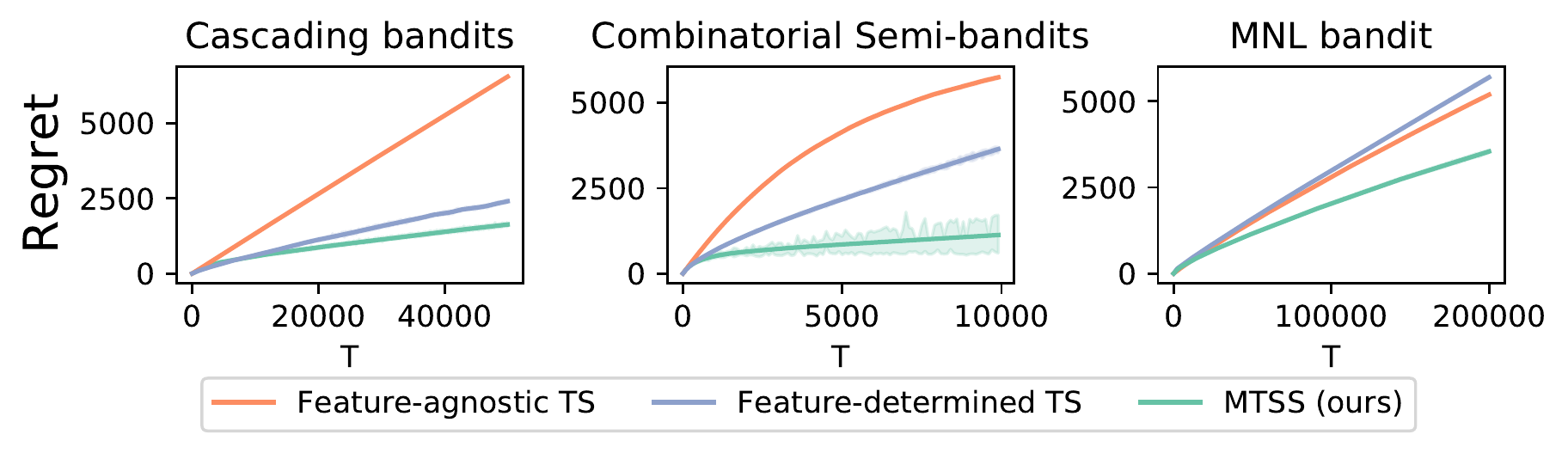} 
 \vspace{-0.4cm}
\caption{
Experiment results on real datasets, averaged over $50$ random seeds. 
Shared areas indicate the standard errors of the mean, which are small and hence hard to distinguish for some curves. }
\label{fig:Real Data}
\vspace{-.4cm}
\end{figure*}

In this section, we compare the proposed framework with existing methods (discussed in Section \ref{sec:simulation}) on three real datasets. 
For fair comparisons, we closely follow the related feature-determined papers \cite{zong2016cascading, wen2015efficient, harper2015movielens} to design our experiments.  
To save space, we describe the main ideas below and defer more details to Appendix \ref{sec:appendix_additional_expt_details}.

\textbf{Datasets.} 
For cascading bandits, we follow \citet{zong2016cascading} and aim to rank and display restaurants, using the dataset from Yelp dataset challenge \citep{asghar2016yelp}. 
In the final dataset, we display a set of $5$ restaurants from a universe of size $3000$, and utilize $10$ features. 
For combinatorial semi-bandits, we follow \citet{wen2015efficient} and aim to send online advertisements to the best subset of users while  keeping a balance between genders, using an income dataset from \citet{asuncion2007uci}. 
In the final dataset, we choose a set of $20$ users from a universe of size $3000$, and utilize $4$ user-specific features. 
For MNL bandits, we follow \citet{oh2019thompson} and aim to recommend the optimal set of movies, using the MovieLens dataset \citep{harper2015movielens}. 
In the final dataset, we recommend a set of $5$ movies from a universe of size $1000$, and utilize $5$ movie-specific features. 


\textbf{Design.} 
To simulate data and calculate regrets, we need to first determine $\{\theta_i\}$ and $\{\vx_i\}$, and then generate stochastic observations either by using the base model \eqref{eqn:main_raw_model} with $\{\theta_i\}$ as parameters or by directly sampling from the logged data. 
Again, we closely follow the existing papers. 
Specifically, for cascading bandits and MNL bandits, we follow a common schema, by first splitting the dataset into a training set and a testing set, 
and then estimating the features $\{\vx_i\}$ from the training set (via collaborative filtering), and finally estimating the item-specific parameters $\{\theta_i\}$ from the testing set. 
For semi-bandits, we directly utilize the features and responses in the raw dataset. 
We remark that, during these procedures, \textit{zero} assumption is imposed manually on the joint distribution of $(\vx_i, \theta_i)$ (and hence $\prob(\theta_i | \vx_i)$), and therefore these setups can be used for fair comparisons between \name{MTSS} and the existing approaches. 






\textbf{Results.} 
We present experiment results in Figure \ref{fig:Real Data}. 
\name{MTSS} accumulates lower regrets in all three problems. 
On one hand, we observe that feature-agnostic TS suffers the curse of dimensionality and learns slowly. 
In particular, for cascading or MNL bandits, since the click/purchase rates are low in the two datasets (i.e., useful feedback is sparse), feature-agnostic TS shows a (nearly) linear regret, as also observed in  \citet{zong2016cascading} and  \citet{harper2015movielens}. 
On the other hand, feature-determined TS, although may slightly outperform at the initial periods, eventually shows a (nearly) linear trend. 
This is mainly due to the bias from the restrictive model assumption. 




\vspace{-.2cm}
\section{Related Work}
\vspace{-.2cm}
\textbf{Structured bandits.}
Standard multi-armed bandits are not scalable to huge action space, and therefore researchers leverage structural information to generalize across actions, known as structured bandits \citep{van2020optimal}. 
Besides several stylized models such as the linear bandits \citep{chu2011contextual} and logistic bandits \citep{kveton2020randomized}, 
many practical problems depend on domain-specific models and can be summarized as model \eqref{eqn:main_raw_model}. 
Besides the two major approaches and related papers reviewed in the previous sections, 
\citet{yu2020graphical} also proposes a unified framework for a few structured bandit problems. 
However, this paper mainly focuses on unifying problems without introducing new models and related algorithms for each specific problem as we do.  
In addition, their approach is restricted to models with only binary variables. 


\textbf{Meta Bandits.}
Our work is related to the meta bandits literature \citep{kveton2021meta, hong2021hierarchical, wan2021metadata}, 
where the focus is on sharing knowledge across a large number of relatively simple bandit tasks, such as multi-armed bandits or linear bandits. None of them can be applied to our problem of one single large-scale structured bandit task. 
In addition, all existing papers, except for \citet{wan2021metadata}, only model the tasks as sampled from a simple feature-agnostic distribution and can not utilize valuable side information such as features in meta-learning, while \citet{wan2021metadata} is applicable only to $K$-armed bandits. 

\textbf{Hierarchical modeling.} 
Our model belongs to hierarchical models \citep{gelman2006data} and the idea of constructing an informative prior is connected with the empirical Bayes approach \citep{maritz2018empirical}. 
Besides the several works on meta bandits cited above, most literature focuses on other areas such as supervised learning  \citep{luo2018mixed, shi2012mixed, wang2012nonparametric}. 
Our work opens a door to connecting the bandit problems with the rich literature on Bayesian hierarchical models. 



\vspace{-.2cm}
\section{Discussion}
\vspace{-.1cm}
In this paper, motivated by the challenges of online learning in large-scale structured bandits, we propose a unified meta-learning framework with a TS-type algorithm named \name{MTSS}. 
We demonstrate that the framework is general, scalable, and robust. 
The proposed framework can be extended in several aspects. 
First, it is straightforward to allow multiple parameters related to each item, by fitting one generalization model for each. 
Second, in our examples, we consider the variance components ($\sigma_1$ or $\psi$) as known. 
In practice, we can apply empirical Bayes  to update these hyperparameters adaptively (see Appendix \ref{sec:EB_variable}). 
Third, we mainly focus on TS algorithms as our baselines, because they typically outperform the UCB counterpart and yield fair comparisons with \name{MTSS}. 
Although it is not straightforward to adapt UCB to our problem, Bayesian UCB \citep{kaufmann2012bayesian} can be similarly developed. 




\clearpage


\bibliographystyle{icml2022.bst}
\bibliography{0_main}

\onecolumn

\clearpage
\appendix
\newpage


\section{Algorithm details and extensions}
\subsection{Empirical Bayes for updating variance components adaptively}\label{sec:EB_variable}
In our examples, we consider the variance components ($\sigma_1$ or $\psi$) as known. 
In practice, we can apply empirical Bayes \cite{maritz2018empirical} to update these hyperparameters adaptively, as in \citet{tomkins2019intelligent} and  \citet{wan2021metadata}. 
Specifically, suppose the generalization model is $\theta_i| \vx_i, \vgamma  \sim g(\theta_i|\vx_i, \vgamma; \beta)$, where $\beta$ is a parameter that we assume as known in MTSS. 
At time point $t$, given the history $\mH_t$, 
one can focus on the following frequentist model:  
\begin{equation}
  \begin{alignedat}{2}
&\text{(Generalization function)} \;
\;    \theta_i| \vx_i, \vgamma  &&\sim g(\theta_i|\vx_i, \vgamma; \vbeta), \forall i \in [N],\\ 
&\text{(Observations)} \quad\quad\quad\quad\quad\quad\;
\;    \boldsymbol{Y}_t &&\sim f(\boldsymbol{Y}_t|A_t, \vthe),\\
&\text{(Reward)} \quad\quad\quad\quad\quad\quad\quad\quad\;
\;   R_t &&= f_r(\boldsymbol{Y}_t ; \vr). 
      \end{alignedat}
\end{equation}
We write the corresponding likelihood function as 
$L(\vthe, \vgamma, \beta| \mH_t)$, and let $(\hat{\vthe}, \hat{\vgamma}, \hat{\beta})$ be the maximum likelihood estimation. 
Following the empirical Bayes approach, we use $\theta_i| \vx_i, \vgamma  \sim g(\theta_i|\vx_i, \vgamma; \hat{\beta})$ in MTSS. 
The updating of $\hat{\beta}$ can also be periodical. 

Intuitively, when the conditional variance decays to $0$, our method reduces to feature-determined TS; 
while when it grows, it indicates the features are less useful, and we are essentially assigning a  non-informative prior as commonly adopted in feature-agnostic TS. 
As such, our framework yields the desired flexibility and is adaptive via empirical Bayes. 

\subsection{Algorithm for MNL bandits}\label{sec:alg_MNL}
As an exmaple, we adopt an epoch-style offering for MNL banidts in the main text. 
With this schedule, Algorithm \ref{alg:main_TS} needs be slightly modified to be consistent, 
though the main idea remains exactly the same. 
We present the modified MTSS in Algorithm \ref{alg:MNL_TS}. 
The only difference is that our schedule of sampling new parameters is adjusted to be consistent with the epoch-style.

\begin{algorithm}[!h]
\SetKwData{Left}{left}\SetKwData{This}{this}\SetKwData{Up}{up}
\SetKwFunction{Union}{Union}\SetKwFunction{FindCompress}{FindCompress}
\SetKwInOut{Input}{Input}\SetKwInOut{Output}{output}
\SetAlgoLined
\Input{
Prior $\prob(\vgamma)$ and known parameters of the hierarchical model
}
Set $\mH_{1} = \{\}$, $t$=1, and $l$=1 keeps track of the time steps and total number of epochs, respectively.

\While{$t<T$}{ 
Compute the posterior distribution $\prob(\vthe | \mH_{l})$\\
For each item $i=1,\cdots,N$, sample $\Tilde{\vthe}$ from  $\prob(\vthe | \mH_{l})$, and compute the utility $\Tilde{v}_{i}=\frac{1}{\Tilde{\theta}_{i}}-1$\\
Compute $A^{l} = \argmax_{a \in \mA} \Mean(R_t \mid a, \tilde{\vthe})$;\\

\While{$c_{t}\neq 0$}{
Offer $A^{l}$, observe the purchasing decision $c_{t}$ of the consumer\\
Update $\xi_{l}=\xi_{l}\cup t$, time indices corresponding to epoch $l$\\
$t = t + 1$\\
}
For each item $i\in A^{l}$, compute $Y_{i}^{l}=\sum_{t\in \xi_{l}}I(c_{t}=i)$, which is the number of picks of item i in epoch $l$\\
Update the dataset as $\mH_{l+1} \leftarrow \mH_{l} \cup \{(A^{l}, \{Y_{i}^{l}\})\}$\\
$l = l+1$

}
 \caption{\name{MTSS} with epoch-type schedule for MNL bandits}\label{alg:MNL_TS}
\end{algorithm}

\subsection{Explicit form of the posterior in semi-bandits with LMM}\label{sec:appendix_LMM}
In this section, we derive the posterior distributions involved in the algorithm for semi-bandits and the proof of Theorem \ref{Theorem1}. The derivations are standard, and we only include them for completeness. 

Recall that $\vx_{i}$ is the features of item $i$. Let $\vPhi = (\vx_{1},\cdots,\vx_{N})^{T}$ contains all $N$ items' features. Let $\vphi_{t}=(\vx_{k})_{k\in A_{t}}^{T}$ be a $|A_{t}|\times d$ matrix contains features of all items offered at round $t$, and $\vPhi_{1:t}=(\vphi_{1}^{T},\cdots,\vphi_{t}^{T})^{T}$ is a $C_{t} \times d$ matrix including features of all the item offered from round $1$ to round $t$, where $C_{t}=\sum_{l=1}^{t}|A_{l}|$. Likewise, $\vY_{1:t} = (\vY_{1}^{T},\cdots,\vY_{t}^{T})^{T}$ includes observed rewards of all items offered till round $t$. Then, we define a $N\times C_{t}$ matrix $\vZ_{1:t}$, such that the $(j,a)$-th entry of $\vZ_{1:t}$ is $\mathbbm{I}(i(a) = j)$, $j \in [N]$. Here, $i(a)$ is the item index of the $a$th observed reward in $\vY_{1:t}$. The row $i$ of $\vZ_{1:t}$ is defined as $\vZ_{1:t,i}$. Finally, we define that $n_{t}(i)$ is the total number of pulls of arm $i$ from round $1$ till round $t$, include round $t$.

Recall that our model for semi-bandits is defined as following:
\begin{equation*}
    \begin{split}
    \vgamma &\sim \normal(\vmu_{\vgamma}, {\Cov}_{\vgamma}),\\
     \theta_i &\sim \normal(\vx_i^T \vgamma, \sigma_1^2), \forall i \in [N],\\
    Y_{i, t} &\sim \normal(\theta_i, \sigma_2^2), \forall i \in A_{t}.\\
    \end{split}
\end{equation*}

\textbf{Posterior Distribution of $\vthe$ Given $\mH_{t+1}$:}

First, we compute the distribution of $\vY_{1:t}$ given $\vPhi_{1:t}(\vZ_{1:t})$ and $\vthe$, the distribution of $\vY_{1:t}$ given only $\vPhi_{1:t}$, and the distribution of $\vthe$. Note that, $\vPhi_{1:t} = \vZ_{1:t}^{T}\vPhi$. Given $\vthe$, we can write 
\begin{align*}
    \vY_{1:t} = \vZ_{1:t}^{T}\vthe + \epsilon \text{, where } \epsilon \sim \normal(0,\sigma_{2}^{2}\vI_{C_{t}}).
\end{align*}
Similarly, given $\vPhi$ and $\vgamma$, we have 
\begin{align*}
    \vthe = \vPhi\vgamma + v \text{, where } v \sim \normal(0,\sigma_{1}^{2}\vI_{N}).
\end{align*}
Further, given $\vmu_{\vgamma}$, we have
\begin{align*}
    \vgamma = \vmu_{\vgamma} + b   \text{, where } b \sim \normal(0,{\Cov}_{\vgamma}).
\end{align*}
Combining above three equations, we have
\begin{align*}
    \vY_{1:t}|\vPhi_{1:t},\vthe &= \vZ_{1:t}^{T}\vthe + \epsilon,\\
    \vthe &= \vPhi\vmu_{\vgamma} + \vPhi b + v,\\
    \vY_{1:t}|\vPhi_{1:t} &= \vPhi_{1:t}\vmu_{\vgamma} + \vPhi_{1:t} b + \vZ_{1:t}^{T}v + \epsilon.
\end{align*}
Therefore, we have
\begin{align*}
    \vY_{1:t}|\vPhi_{1:t},\vthe &\sim  \normal(\vZ_{1:t}^{T}\vthe, \sigma_{2}^{2}\vI_{C_{t}}),\\
    \vthe &\sim \normal(\vPhi\vmu_{\vgamma},\vPhi{\Cov}_{\vgamma}\vPhi^{T}+\sigma_{1}^{2}\vI_{N}), \\
    \vY_{1:t}|\vPhi_{1:t} &\sim \normal(\vPhi_{1:t}\vmu_{\vgamma}, \vPhi_{1:t}{\Cov}_{\vgamma}\vPhi^{T}_{1:t}+\sigma_{1}^{2}\vZ_{1:t}^{T}\vZ_{1:t}+\sigma_{2}^{2}\vI_{C_{t}}).
\end{align*}
Let $\vB \sim \normal(0,\vPhi{\Cov}_{\vgamma}\vPhi^{T}+\sigma_{1}^{2}\vI_{N})$, we have
\begin{align*}
    \vthe &= \vPhi\vmu_{\vgamma} + \vB,\\
    \vY_{1:t}|\vPhi_{1:t},\vB &\sim \normal(\vPhi_{1:t}\vmu_{\vgamma}+\vZ_{1:t}^{T}\vB,\sigma_{2}^{2}\vI_{C_{t}}).
\end{align*}

Then, we compute the posterior distribution of $\vB$ instead of $\vthe$.

\begin{align*}
    \mathbbm{P}(\vB|\vY_{1:t}) &\propto \mathbbm{P}(\vY_{1:t}|\vB)\mathbbm{P}(\vB)\\
    &\propto exp\Big(-\frac{1}{2}(\vY_{1:t}-\vPhi_{1:t}\vmu_{\vgamma}-\vZ_{1:t}^{T}\vB)^{T}\frac{1}{\sigma_{2}^{2}}(\vY_{1:t}-\vPhi_{1:t}\vmu_{\vgamma}-\vZ_{1:t}^{T}\vB)\Big)exp\Big(-\frac{1}{2}\vB^{T}(\vPhi{\Cov}_{\vgamma}\vPhi^{T}+\sigma_{1}^{2}\vI_{N})^{-1}\vB\Big)\\
    &\propto exp\Big(\vB^{T}\vZ_{1:t}\frac{1}{\sigma_{2}^{2}}(\vY_{1:t}-\vPhi_{1:t}\vmu_{\vgamma})-\frac{1}{2}\vB^{T}\underbrace{\{(\vPhi{\Cov}_{\vgamma}\vPhi^{T}+\sigma_{1}^{2}\vI_{N})^{-1}+\frac{1}{\sigma_{2}^{2}}\vZ_{1:t}\vZ_{1:t}^{T}\}}_{\tilde{{\Cov}}^{-1}}\vB\Big)\\
    &\sim \normal(\underbrace{\tilde{{\Cov}}\vZ_{1:t}\frac{1}{\sigma_{2}^{2}}(\vY_{1:t}-\vPhi_{1:t}\vmu_{\vgamma})}_{\vmu(\vB)},\tilde{{\Cov}}).
\end{align*}
Using the Woodbury matrix identity \cite{rasmussen2003gaussian}, we have
\begin{align*}
    \tilde{{\Cov}} &= \vPhi{\Cov}_{\vgamma}\vPhi^{T}+\sigma_{1}^{2}\vI_{N}-(\vPhi{\Cov}_{\vgamma}\vPhi_{1:t}^{T}+\sigma_{1}^{2}\vZ_{1:t})\Big(\sigma_{2}^{2}\vI_{C_{t}}+\vPhi_{1:t}{\Cov}_{\vgamma}\vPhi_{1:t}^{T}+\sigma_{1}^{2}\vZ_{1:t}^{T}\vZ_{1:t}\Big)^{-1}(\vPhi{\Cov}_{\vgamma}\vPhi_{1:t}^{T}+\sigma_{1}^{2}\vZ_{1:t})^{T},
\end{align*}
and 
\begin{align*}
    \vmu(\vB) &= \tilde{{\Cov}}\vZ_{1:t}\frac{1}{\sigma_{2}^{2}}(\vY_{1:t}-\vPhi_{1:t}\vmu_{\vgamma})\\
    & = (\vPhi{\Cov}_{\vgamma}\vPhi_{1:t}^{T}+\sigma_{1}^{2}\vZ_{1:t})\Big(\vI_{C_{t}}-\Big(\sigma_{2}^{2}\vI_{C_{t}}+\vPhi_{1:t}{\Cov}_{\vgamma}\vPhi_{1:t}^{T}+\sigma_{1}^{2}\vZ_{1:t}^{T}\vZ_{1:t}\Big)^{-1}(\vPhi{\Cov}_{\vgamma}\vPhi_{1:t}^{T}+\sigma_{1}^{2}\vZ_{1:t})^{T}\vZ_{1:t}\Big)\frac{1}{\sigma_{2}^{2}}(\vY_{1:t}-\vPhi_{1:t}\vmu_{\vgamma})\\
    & = (\vPhi{\Cov}_{\vgamma}\vPhi_{1:t}^{T}+\sigma_{1}^{2}\vZ_{1:t})\Big(\sigma_{2}^{2}\vI_{C_{t}}+\vPhi_{1:t}{\Cov}_{\vgamma}\vPhi_{1:t}^{T}+\sigma_{1}^{2}\vZ_{1:t}^{T}\vZ_{1:t}\Big)^{-1}\sigma_{2}^{2}\vI_{C_{t}}\frac{1}{\sigma_{2}^{2}}(\vY_{1:t}-\vPhi_{1:t}\vmu_{\vgamma})\\
    & = (\vPhi{\Cov}_{\vgamma}\vPhi_{1:t}^{T}+\sigma_{1}^{2}\vZ_{1:t})\Big(\sigma_{2}^{2}\vI_{C_{t}}+\vPhi_{1:t}{\Cov}_{\vgamma}\vPhi_{1:t}^{T}+\sigma_{1}^{2}\vZ_{1:t}^{T}\vZ_{1:t}\Big)^{-1}(\vY_{1:t}-\vPhi_{1:t}\vmu_{\vgamma}).
\end{align*}
Since $\vthe = \vPhi\vmu_{\vgamma} + \vB$, we get the posterior distribution of $\vthe$.
\begin{align*}
    \vthe|\mH_{t+1} \sim \normal(\vPhi\vmu_{\vgamma}+\vmu(\vB),\tilde{{\Cov}}).
\end{align*}
In particular, for each item $i \in [N]$, the posterior distribution of the item-specific parameter $\theta_{i}$ is as follows.
\begin{equation}
    \begin{split}
\theta_{i}|\mH_{t+1} &\sim \mathcal{N}(\hat{\mu}_{t+1}(i),\hat{\sigma}_{t+1}^{2}(i)),\\
\hat{\mu}_{t+1}(i)&=\vx_i^{T}\vmu_{\vgamma}+(\vx_i{\Cov}_{\vgamma}\vPhi_{1:t}^{T}+\sigma_{1}^{2}\vZ_{1:t,i})(\sigma_{2}^{2}\vI_{C_{t}}+\vPhi_{1:t}{\Cov}_{\vgamma}\vPhi^{T}_{1:t}+\sigma_{1}^{2}\vZ_{1:t}^{T}\vZ_{1:t})^{-1}(\vY_{1:t}-\vPhi_{1:t} \vmu_{\vgamma}),\\
    \hat{\sigma}_{t+1}^{2}(i)&=\vx_i^T{\Cov}_{\vgamma}\vx_i+\sigma_{1}^{2}-(\vx_i^T{\Cov}_{\vgamma}\vPhi^{T}_{1:t}+\sigma_{1}^{2}\vZ_{1:t,i})(\sigma_{2}^{2}\vI_{C_{t}}+\vPhi_{1:t}{\Cov}_{\vgamma}\vPhi^{T}_{1:t}+\sigma_{1}^{2}\vZ_{1:t}^{T}\vZ_{1:t})^{-1}(\vx_i^T{\Cov}_{\vgamma}\vPhi^{T}_{1:t}+\sigma_{1}^{2}\vZ_{1:t,i})^{T}.
    \end{split}
\end{equation}
Alternatively, since
\begin{align*}
    \tilde{{\Cov}}^{-1} & = (\vPhi{\Cov}_{\vgamma}\vPhi^{T}+\sigma_{1}^{2}\vI_{N})^{-1}+\frac{1}{\sigma_{2}^{2}}\vZ_{1:t}\vZ_{1:t}^{T}\\
    &=(\vPhi{\Cov}_{\vgamma}\vPhi^{T}+\sigma_{1}^{2}\vI_{N})^{-1}+\frac{1}{\sigma_{2}^{2}}diag(n_{t}(i))_{i=1}^{N},
\end{align*}
then, 
\begin{align*}
    \hat{\sigma}_{t+1}^{-2}(i) = \hat{\sigma}_{t}^{-2}(i)+\frac{1}{\sigma_{2}^{2}}(n_{t}(i)-n_{t-1}(i)).
\end{align*}

\textbf{Posterior Distribution of $\vgamma$ Given $\mH_{t+1}$:} 

Similarly, we can write
\begin{align*}
    \vY_{1:t}|\vPhi_{1:t},b \sim \normal (\vPhi_{1:t}\vmu_{\vgamma}+\vPhi_{1:t}b, \sigma_{1}^{2}\vZ_{1:t}^{T}\vZ_{1:t}+\sigma_{2}^{2}\vI_{C_{t}}).
\end{align*}
Then we compute the posterior distribution of b instead of $\vgamma$.
\begin{align*}
    \mathbbm{P}(b|\vY_{1:t}) &\propto \mathbbm{P}(\vY_{1:t}|b)\mathbbm{P}(b)\\
    &\propto exp\Big(-\frac{1}{2}(\vY_{1:t}-\vPhi_{1:t}\vmu_{\vgamma}-\vPhi_{1:t}b)^{T}(\sigma_{1}^{2}\vZ_{1:t}^{T}\vZ_{1:t}+\sigma_{2}^{2}\vI_{C_{t}})^{-1}(\vY_{1:t}-\vPhi_{1:t}\vmu_{\vgamma}-\vPhi_{1:t}b)\Big)exp\Big(\frac{1}{2}b^{T}{\Cov}_{\vgamma}^{-1}b\Big)\\
    &\propto exp\Big(-\frac{1}{2}b^{T}\underbrace{(\vPhi_{1:t}^{T}(\sigma_{1}^{2}\vZ_{1:t}^{T}\vZ_{1:t}+\sigma_{2}^{2}\vI_{C_{t}})^{-1}\vPhi_{1:t}+{\Cov}_{\vgamma}^{-1})}_{{\Cov}_{*}^{-1}}b+b^{T}\vPhi_{1:t}^{T}(\sigma_{1}^{2}\vZ_{1:t}^{T}\vZ_{1:t}+\sigma_{2}^{2}\vI_{C_{t}})^{-1}(\vY_{1:t}-\vPhi_{1:t}\vmu_{\vgamma})\Big)\\
    &\sim \normal(\underbrace{{\Cov}_{*}\vPhi_{1:t}^{T}(\sigma_{1}^{2}\vZ_{1:t}^{T}\vZ_{1:t}+\sigma_{2}^{2}\vI_{C_{t}})^{-1}(\vY_{1:t}-\vPhi_{1:t}\vmu_{\vgamma})}_{\vmu_{*}},{\Cov}_{*}).
\end{align*}
Using the Woodbury matrix identity \cite{rasmussen2003gaussian}, we have
\begin{align*}
    {\Cov}_{*} = {\Cov}_{\vgamma}-{\Cov}_{\vgamma}\vPhi_{1:t}^{T}(\sigma_{1}^{2}\vZ_{1:t}^{T}\vZ_{1:t}+\sigma_{2}^{2}\vI_{C_{t}}+\vPhi_{1:t}{\Cov}_{\vgamma}\vPhi_{1:t}^{T})^{-1}\vPhi_{1:t}{\Cov}_{\vgamma},
\end{align*}
and
\begin{align*}
    {\vmu}_{*} &= {\Cov}_{*}\vPhi_{1:t}^{T}(\sigma_{1}^{2}\vZ_{1:t}^{T}\vZ_{1:t}+\sigma_{2}^{2}\vI_{C_{t}})^{-1}(\vY_{1:t}-\vPhi_{1:t}\vmu_{\vgamma})\\
    &= \Big({\Cov}_{\vgamma}-{\Cov}_{\vgamma}\vPhi_{1:t}^{T}(\sigma_{1}^{2}\vZ_{1:t}^{T}\vZ_{1:t}+\sigma_{2}^{2}\vI_{C_{t}}+\vPhi_{1:t}{\Cov}_{\vgamma}\vPhi_{1:t}^{T})^{-1}\vPhi_{1:t}{\Cov}_{\vgamma}\Big)\vPhi_{1:t}^{T}(\sigma_{1}^{2}\vZ_{1:t}^{T}\vZ_{1:t}+\sigma_{2}^{2}\vI_{C_{t}})^{-1}(\vY_{1:t}-\vPhi_{1:t}\vmu_{\vgamma})\\
    &= {\Cov}_{\vgamma}\vPhi_{1:t}^{T}(\vI-(\sigma_{1}^{2}\vZ_{1:t}^{T}\vZ_{1:t}+\sigma_{2}^{2}\vI_{C_{t}}+\vPhi_{1:t}{\Cov}_{\vgamma}\vPhi_{1:t}^{T})^{-1}\vPhi_{1:t}{\Cov}_{\vgamma}\vPhi_{1:t}^{T})(\sigma_{1}^{2}\vZ_{1:t}^{T}\vZ_{1:t}+\sigma_{2}^{2}\vI_{C_{t}})^{-1}(\vY_{1:t}-\vPhi_{1:t}\vmu_{\vgamma})\\
    &= {\Cov}_{\vgamma}\vPhi_{1:t}^{T}(\sigma_{1}^{2}\vZ_{1:t}^{T}\vZ_{1:t}+\sigma_{2}^{2}\vI_{C_{t}}+\vPhi_{1:t}{\Cov}_{\vgamma}\vPhi_{1:t}^{T})^{-1}(\sigma_{1}^{2}\vZ_{1:t}^{T}\vZ_{1:t}+\sigma_{2}^{2}\vI_{C_{t}})(\sigma_{1}^{2}\vZ_{1:t}^{T}\vZ_{1:t}+\sigma_{2}^{2}\vI_{C_{t}})^{-1}(\vY_{1:t}-\vPhi_{1:t}\vmu_{\vgamma})\\
    &= {\Cov}_{\vgamma}\vPhi_{1:t}^{T}(\sigma_{1}^{2}\vZ_{1:t}^{T}\vZ_{1:t}+\sigma_{2}^{2}\vI_{C_{t}}+\vPhi_{1:t}{\Cov}_{\vgamma}\vPhi_{1:t}^{T})^{-1}(\vY_{1:t}-\vPhi_{1:t}\vmu_{\vgamma}).\\
\end{align*}
To derive an explicit form of ${\Cov}_{*}$, we focus on $\vPhi_{1:t}^{T}(\sigma_{1}^{2}\vZ_{1:t}^{T}\vZ_{1:t}+\sigma_{2}^{2}\vI_{C_{t}})^{-1}\vPhi_{1:t}+{\Cov}_{\vgamma}^{-1}$. Again, using the Woodbury matrix identity, we have
\begin{align*}
    (\sigma_{1}^{2}\vZ_{1:t}^{T}\vZ_{1:t}+\sigma_{2}^{2}\vI_{C_{t}})^{-1}&=\sigma_{2}^{-2}\vI_{C_{t}}-\sigma_{2}^{-4}\vZ_{1:t}^{T}(\sigma_{1}^{-2}\vI_{N}+\sigma_{2}^{-2}\vZ_{1:t}\vZ_{1:t}^{T})^{-1}\vZ_{1:t}\\
    &=\sigma_{2}^{-2}\vI_{C_{t}}-\sigma_{2}^{-4}\vZ_{1:t}^{T}diag\Big(\frac{1}{\sigma_{1}^{-2}+\sigma_{2}^{-2}n_{t}(i)}\Big)_{i=1}^{N}\vZ_{1:t}.
\end{align*}
and
\begin{align*}
    {\Cov}_{*}^{-1} &= {\Cov}_{\vgamma}^{-1}+\vPhi_{1:t}^{T}(\sigma_{1}^{2}\vZ_{1:t}^{T}\vZ_{1:t}+\sigma_{2}^{2}\vI_{C_{t}})^{-1}\vPhi_{1:t}\\
    &={\Cov}_{\vgamma}^{-1}+\vPhi_{1:t}^{T}\Big(\sigma_{2}^{-2}\vI_{C_{t}}-\sigma_{2}^{-4}\vZ_{1:t}^{T}diag\Big(\frac{1}{\sigma_{1}^{-2}+\sigma_{2}^{-2}n_{t}(1)},\cdots,\frac{1}{\sigma_{1}^{-2}+\sigma_{2}^{-2}n_{t}(N)}\Big)\vZ_{1:t}\Big)\vPhi_{1:t}\\
    &={\Cov}_{\vgamma}^{-1}+\sigma_{2}^{-2}\vPhi_{1:t}^{T}\vPhi_{1:t}-\sigma_{2}^{-4}\vPhi_{1:t}^{T}\vZ_{1:t}^{T}diag\Big(\frac{1}{\sigma_{1}^{-2}+\sigma_{2}^{-2}n_{t}(1)},\cdots,\frac{1}{\sigma_{1}^{-2}+\sigma_{2}^{-2}n_{t}(N)}\Big)\vZ_{1:t}\vPhi_{1:t}\\
    &={\Cov}_{\vgamma}^{-1}+\sum_{i=1}^{N}\frac{n_{t}(i)}{\sigma_{2}^{2}+\sigma_{1}^{2}n_{t}(i)}\vx_i\vx_i^{T}.
\end{align*}
Therefore, 
\begin{equation}
    \begin{split}
\vgamma|\mH_{t+1} &\sim \mathcal{N}(\tilde{\vmu}_{t+1},\tilde{{\Cov}}_{t+1}),\\
\tilde{\mu}_{t+1} &= \vmu_{\vgamma}+ {\Cov}_{\vgamma}\vPhi_{1:t}^{T}(\sigma_{1}^{2}\vZ_{1:t}^{T}\vZ_{1:t}+\sigma_{2}^{2}\vI_{C_{t}}+\vPhi_{1:t}{\Cov}_{\vgamma}\vPhi_{1:t}^{T})^{-1}(\vY_{1:t}-\vPhi_{1:t}\vmu_{\vgamma}),\\
\tilde{{\Cov}}_{t+1}^{-1} &= {\Cov}_{\vgamma}^{-1}+\sum_{i=1}^{N}\frac{n_{t}(i)}{\sigma_{2}^{2}+\sigma_{1}^{2}n_{t}(i)}\vx_i\vx_i^{T}.
    \end{split}
\end{equation}

\textbf{Posterior Distribution of $\vthe$ Given $\mH_{t+1}$ and $\vgamma$:} 

Similarly, to derive the posterior distribution $\vthe$ given $\mH_{t+1}$ and $\vgamma$, we first derive the posterior distribution of $v$ given $\mH_{t+1}$ and $\vgamma$. Here, we can write 
\begin{align*}
    \vY_{1:t}|\vPhi_{1:t},\vgamma, v \sim \normal(\vPhi_{1:t}\vgamma+\vZ_{1:t}^{T}v,\sigma_{2}^{2}\vI_{C_{t}}).
\end{align*}
Then, the posterior distribution of $v$ given $\mH_{t+1}$ and $\vgamma$ is
\begin{align*}
    \mathbbm{P}(v|\vY_{1:t},\vgamma) &\propto \mathbbm{P}(\vY_{1:t}|v,\vgamma)\mathbbm{P}(v|\vgamma)\\
    &\propto exp\Big(-\frac{1}{2}\sigma_{2}^{-2}(\vY_{1:t}-\vPhi_{1:t}\vgamma-\vZ_{1:t}^{T}v)^{T}(\vY_{1:t}-\vPhi_{1:t}\vgamma-\vZ_{1:t}^{T}v)\Big) exp\Big(-\frac{1}{2}\sigma_{1}^{-2}v^{T}v\Big)\\
    &\propto exp\Big(-\frac{1}{2}v^{T}(\sigma_{2}^{-2}\vZ_{1:t}\vZ_{1:t}^{T}+\sigma_{1}^{-2}\vI_{N})v+v^{T}\sigma_{2}^{-2}\vZ_{1:t}(\vY_{1:t}-\vPhi_{1:t}\vgamma)\Big)\\
    &\sim \normal(\underbrace{(\sigma_{2}^{-2}\vZ_{1:t}\vZ_{1:t}^{T}+\sigma_{1}^{-2}\vI_{N})^{-1}\sigma_{2}^{-2}\vZ_{1:t}(\vY_{1:t}-\vPhi_{1:t}\vgamma)}_{\vmu_{**}},(\sigma_{2}^{-2}\vZ_{1:t}\vZ_{1:t}^{T}+\sigma_{1}^{-2}\vI_{N})^{-1}).
\end{align*}
Using the Woodbury matrix identity, we have
\begin{align*}
    \vmu_{**} = \sigma_{1}^{2}\vZ_{1:t}(\sigma_{2}^{2}\vI_{C_{t}}+\sigma_{1}^{2}\vZ_{1:t}^{T}\vZ_{1:t})^{-1}(\vY_{1:t}-\vPhi_{1:t}\vgamma).
\end{align*}
Furthermore, 
\begin{align*}
    \sigma_{2}^{-2}\vZ_{1:t}\vZ_{1:t}^{T}+\sigma_{1}^{-2}\vI_{N} = \sigma_{2}^{-2}diag(n_{t}(1),\cdots,n_{t}(N))+\sigma_{1}^{-2}\vI_{N}=diag(\sigma_{1}^{-2}+\sigma_{2}^{-2}n_{t}(1),\cdots,\sigma_{1}^{-2}+\sigma_{2}^{-2}n_{t}(N)).
\end{align*}
Since $\vthe = \vPhi\vgamma+v$, then 
\begin{align*}
    \vthe|\vgamma,\mH_{t+1} \sim \normal(\vPhi\vgamma+\vmu_{**}, diag(\sigma_{1}^{-2}+\sigma_{2}^{-2}n_{t}(1),\cdots,\sigma_{1}^{-2}+\sigma_{2}^{-2}n_{t}(N))^{-1}).
\end{align*}

Therefore, for each item $i \in [N]$,
\begin{equation}
    \begin{split}
    \theta_{i}|\vgamma, \mH_{t+1} &\sim \mathcal{N}(\hat{\mu}_{t+1,\vgamma}(i),\hat{\sigma}_{t+1,\vgamma}^{2}(i)),\\
    \hat{\mu}_{t+1,\vgamma}(i) &= \vx_i^{T}\vgamma+\sigma_{1}^{2}\vZ_{1:t,i}(\sigma_{2}^{2}\vI_{C_{t}}+\sigma_{1}^{2}\vZ_{1:t}^{T}\vZ_{1:t})^{-1}(\vY_{1:t}-\vPhi_{1:t} \vgamma),\\
    \hat{\sigma}_{t+1,\vgamma}^{-2}(i) &= \sigma_{1}^{-2}+\sigma_{2}^{-2}n_{t}(i).
    \end{split}
\end{equation}


\section{Preliminary and Definitions}\label{preliminary}
We first clarify common notations used in our proof. Suppose that there are $N$ items, each with $d$ features. We will recommend a slate of at most $K$ items each time. In total, there are $T$ rounds of the interaction. Let us recall that $\mH_{t} = (A_{l},\vY_{l}(A_{l}))_{l=1}^{t-1}$ includes history up to round $t$ and excluding round $t$, where $\mH_{1} = \emptyset$ and $\vY_{l}(A_{l})=(Y_{k,l},k\in A_{l})$. Given the $\mH_{t}$, the conditional probability is given as $\mathbbm{P}_{t}(\cdot)=\mathbbm{P}(\cdot|\mH_{t})$, and the conditional expectation is given as $\Mean_{t}(\cdot)=\Mean(\cdot|\mH_{t})$. Similarly, we define the probability independent of all history as $\mathbbm{P}(\cdot)$ and the expectation independent of all history as $\Mean(\cdot)$. Additionally, denote the number of pulls of arm $k$ for the first $t$ rounds (including round $t$) as $n_{t}(k)$. Suppose $\boldsymbol{X} \in \mathcal{R}^{d\times d}$, let $\lambda_{1}(\boldsymbol{X})$ denote the maximum eigenvalue of $\boldsymbol{X}$, and $\lambda_{d}(\boldsymbol{X})$ denote the minimum eigenvalue of $\boldsymbol{X}$.

We also need introduce some basic quantities from information theory.  Let $\mathbbm{P}$ and $\mathbbm{Q}$ be two probability measures, and $\mathbbm{P}$ is absolutely continuous with respect to $\mathbbm{Q}$. Then the Kullback–Leibler divergence between $\mathbbm{P}$ and $\mathbbm{Q}$ is defined as $D(\mathbbm{P}\|\mathbbm{Q})=\int log(\frac{d\mathbbm{P}}{d\mathbbm{Q}})d\mathbbm{P}$, where $\frac{d\mathbbm{P}}{d\mathbbm{Q}}$ is the Radon–Nikodym derivative of $\mathbbm{P}$ with respect to $\mathbbm{Q}$. Then the mutual information between two random variables $X$ and $Y$ is defined as the Kullback–Leibler divergence between the joint distribution of $X$ and $Y$ and the product of the marginal distributions, $I(X;Y) = D(\mathbbm{P}(X,Y)\|\mathbbm{P}(X)\mathbbm{P}(Y))$. The mutual information measures the information gained about one random variable by observing the other random variable, which is always non-negative and equals to $0$ only if two random variables are independent to each other. For example, in the proof, we use $I(\vgamma;\mH_{t})$ to quantify the information gain of $\vgamma$ by observing the historic interactions between agents and users, $\mH_{t}$. We also need a conditional mutual information term to quantify the difference between random variables $X$ and $Y$ conditioned on another random variable $Z$, which is defined as $I(X;Y|Z) = \Mean[D(\mathbbm{P}(X,Y|Z)\|\mathbbm{P}(X|Z)\mathbbm{P}(Y|Z))]$ (the expectation is taken over $Z$).

\subsection{General History-Dependent Mutual Information}
Conditional on history $\mH_{t}$, the mutual information between the parameter $\vthe$ and the observations at round $t$, $\vY_{t}$, is defined as follows:
\[
I_{t}(\vthe;A_{t},\vY_{t}) = \Mean_{t}\Big[log\Big(\frac{\mathbbm{P}_{t}(\vthe;A_{t},\vY_{t})}{\mathbbm{P}_{t}(\vthe)\mathbbm{P}_{t}(A_{t},\vY_{t})}\Big)\Big].
\]

Similarly, the history dependent mutual information between the meta parameter $\vgamma$ and the observations at round $t$, $\vY_{t}$, is defined as follows:
\[
I_{t}(\vgamma;A_{t},\vY_{t}) = \Mean_{t}\Big[log\Big(\frac{\mathbbm{P}_{t}(\vgamma,A_{t},\vY_{t})}{\mathbbm{P}_{t}(\vgamma)\mathbbm{P}_{t}(A_{t},\vY_{t})}\Big)\Big].
\]

Then, the history dependent mutual information between the parameters ($\vthe,\vgamma$) and the observations at round $t$, $\vY_{t}$, is defined as below:
\[
I_{t}(\vthe,\vgamma;A_{t},\vY_{t}) = \Mean_{t}\Big[log\Big(\frac{\mathbbm{P}_{t}(\vthe,\vgamma,A_{t},\vY_{t})}{\mathbbm{P}_{t}(\vthe,\vgamma)\mathbbm{P}_{t}(A_{t},\vY_{t})}\Big)\Big].
\]
Finally, the history dependent mutual information between the parameters $\vthe$ and the observations at round $t$, $\vY_{t}$, given that the meta parameter $\vgamma$ is known, is defined as below:
\[
I_{t}(\vthe;A_{t},\vY_{t}|\vgamma) = \Mean_{t}\Big[log\Big(\frac{\mathbbm{P}_{t}(\vthe,A_{t},\vY_{t}|\vgamma)}{\mathbbm{P}_{t}(\vthe|\vgamma)\mathbbm{P}_{t}(A_{t},\vY_{t}|\vgamma)}\Big)\Big].
\]
By the definition of conditional mutual information, we have
\begin{align*}
    I(\cdot;A_{t},\vY_{t}|\mH_{t}) &= \Mean(I_{t}(\cdot;A_{t},\vY_{t})),\\
    I(\cdot;A_{t},\vY_{t}|\vgamma,\mH_{t}) &= \Mean(I_{t}(\cdot;A_{t},\vY_{t}|\vgamma)).
\end{align*}
\subsection{History-Dependent/Independent Mutual Information and Entropy for Semi-Bandits}

Conditional on history $\mH_{t}$, the mutual information between the parameter $\theta_{k}$ and the observations at round $t$, $Y_{k,t}$, is defined as follows:
\[
I_{t}(\theta_{k};k,Y_{k,t}) = \Mean_{t}\Big[log\Big(\frac{\mathbbm{P}_{t}(\theta_{k};k,Y_{k,t})}{\mathbbm{P}_{t}(\theta_{k})\mathbbm{P}_{t}(k,Y_{k,t})}\Big)\Big].
\]

Similarly, the history dependent mutual information between the meta parameter $\vgamma$ and the observations at round $t$, $Y_{k,t}$, is defined as follows:
\[
I_{t}(\vgamma;k,Y_{k,t}) = \Mean_{t}\Big[log\Big(\frac{\mathbbm{P}_{t}(\vgamma,k,Y_{k,t})}{\mathbbm{P}_{t}(\vgamma)\mathbbm{P}_{t}(k,Y_{k,t})}\Big)\Big].
\]

Then, the history dependent mutual information between the parameters ($\theta_{k},\vgamma$) and the observations at round $t$, $Y_{k,t}$, is defined as below:
\[
I_{t}(\theta_{k},\vgamma;k,Y_{k,t}) = \Mean_{t}\Big[log\Big(\frac{\mathbbm{P}_{t}(\theta_{k},\vgamma,k,Y_{k,t})}{\mathbbm{P}_{t}(\theta_{k},\vgamma)\mathbbm{P}_{t}(k,Y_{k,t})}\Big)\Big].
\]

Finally, the history dependent mutual information between the parameters $\theta_{k}$ and the observations at round $t$, $Y_{k,t}$, given that the meta parameter $\vgamma$ is known, is defined as below:
\[
I_{t}(\theta_{k};k,Y_{k,t}|\vgamma) = \Mean_{t}\Big[log\Big(\frac{\mathbbm{P}_{t}(\theta_{k},k,Y_{k,t}|\vgamma)}{\mathbbm{P}_{t}(\theta_{k}|\vgamma)\mathbbm{P}_{t}(k,Y_{k,t}|\vgamma)}\Big)\Big].
\]

Based on the definition of entropy, we further defined the history dependent entropy terms as follows:
\begin{align*}
    \text{\textbf{Conditional Entropy of $\theta_{k}$}}&: h_{t}(\theta_{k}) = -\Mean_{t}[log(\mathbbm{P}_{t}(\theta_{k}))],\\
    \text{\textbf{Conditional Entropy of $\vgamma$}}&: h_{t}(\vgamma) = -\Mean_{t}[log(\mathbbm{P}_{t}(\vgamma))],\\
    \text{\textbf{Conditional Entropy of $\theta_{k}$ given $\vgamma$}}&: h_{t}(\theta_{k}|\vgamma) = -\Mean_{t}[log(\mathbbm{P}_{t}(\theta_{k}|\vgamma))].
\end{align*}

Straightforwardly, by the definition of conditional mutual information, the history independent conditional mutual information terms are defined as the expectation of the history dependent term. 
\begin{align*}
    I(\cdot;k,Y_{k,t}|\mH_{t}) &= \Mean(I_{t}(\cdot;k,Y_{k,t})),\\
    I(\cdot;k,Y_{k,t}|\vgamma,\mH_{t}) &= \Mean(I_{t}(\cdot;k,Y_{k,t}|\vgamma)).
\end{align*}

Similarly, the history independent conditional entropy terms are defined as follows:
\begin{align*}
    h(\cdot|\mH_{t}) &= \Mean(h_{t}(\cdot)), \\ h(\cdot|\vgamma,\mH_{t}) &= \Mean(h_{t}(\cdot|\vgamma)).
\end{align*}

\subsection{Others}
In the following, we restate several properties of the mutual information and entropy and an inequality lemma that we mainly used in our proof.

\textbf{Decomposition of Mutual Information.} Based on the definition of mutual information and entropy, we can decompose the mutual information term as below.
\begin{align*}
    I_{t}(\cdot;k,Y_{k,t}) &= h_{t}(\cdot)-h_{t+1}(\cdot),\\
    I_{t}(\cdot;k,Y_{k,t}|\vgamma) &= h_{t}(\cdot|\vgamma)-h_{t+1}(\cdot|\vgamma).
\end{align*}

\textbf{Chain Rule.} $I(X,Y;Z)=I(Y;Z)+I(X;Z|Y)$.

\textbf{Weyl's inequality.} For Hermitian matrix $\boldsymbol{A}, \boldsymbol{B} \in \mathbb{C}^{d\times d}$ and $i=1,\cdots,d,$ $\lambda_{i}(\boldsymbol{A}+\boldsymbol{B})\leq \lambda_{i}(\boldsymbol{A}) + \lambda_{1}(\boldsymbol{B})$.

\section{Main Proof}
\subsection{Proof for Theorem \ref{general regret bound}}

\begin{proof} First, following the property of mutual information and the chain rule of conditional mutual information, we can derive that $I_{t}(\vthe;A_{t},\vY_{t})\leq I_{t}(\vthe,\vgamma;A_{t},\vY_{t})=I_{t}(\vgamma;A_{t},\vY_{t})+I_{t}(\vthe;A_{t},\vY_{t}|\vgamma)$. Taking the square root of it and applying the Cauchy-Schwartz inequality, we have that $\sqrt{I_{t}(\vgamma;A_{t},\vY_{t})+I_{t}(\vthe;A_{t},\vY_{t}|\vgamma)}\leq \sqrt{I_{t}(\vgamma;A_{t},\vY_{t})}+\sqrt{I_{t}(\vthe;A_{t},\vY_{t}|\vgamma)}$. After that, using the assumption that $\Gamma_{t}\leq \Gamma$ $w.p. 1$ and collecting the terms, we finish the proof. Here, the regret bound can be divided into two parts, where the first part is the cost of learning the meta parameter $\vgamma$, the second part is the regret for learning $\vthe$ with known $\vgamma$.

Mathematically,
\begin{align*}
   BR(T)& = \Mean[\sum_{t}\Delta_{t}] \\
   &\le \Mean[\sum_{t} \Gamma_{t}\sqrt{I_{t}(\vthe;A_{t},\vY_{t})}+\epsilon_{t}]\\
   &\le \Mean[\sum_{t} \Gamma_{t}\sqrt{I_{t}(\vthe,\vgamma;A_{t},\vY_{t})}]+\Mean[\sum_{t}\epsilon_{t}]\\
   &= \Mean[\sum_{t} \Gamma_{t}\sqrt{I_{t}(\vgamma;A_{t},\vY_{t})+I_{t}(\vthe;A_{t},\vY_{t}|\vgamma)}]+\Mean[\sum_{t}\epsilon_{t}]\\
   &\le \Mean[\Gamma_{t}\sum_{t}\sqrt{I_{t}(\vgamma;A_{t},\vY_{t})}+\sqrt{I_{t}(\vthe;A_{t},\vY_{t}|\vgamma)}]+\Mean[\sum_{t}\epsilon_{t}]\\
   &\leq \underbrace{\Gamma\sum_{t}\Mean[\sqrt{I_{t}(\vgamma;A_{t},\vY_{t})}]}_\text{Regret due to not knowing $\vgamma$}
    {+} \underbrace{\sum_{t}\Gamma\Mean[\sqrt{I_{t}(\vthe;A_{t},\vY_{t}|\vgamma)}]{+}\Mean[\epsilon_{t}]}_\text{Regret suffered even with known $\gamma$}.
\end{align*}
The first inequality directly uses the (\ref{eqn:Delta_decomp}). The second inequality follows the property of mutual information that $I(X;Z)\leq I(X,Y;Z)$. Here $X=\vthe$, $Y = \vgamma$, and $Z=(A_{t},\vY_{t})$. The third equality uses the chain rule of mutual information, $I(X,Y;Z)=I(Y;Z)+I(X;Z|Y)$. The forth inequality follows the fact that $\sqrt{a+b}\leq \sqrt{a}+\sqrt{b}$. The final inequality follows that $\Gamma_{t}\leq \Gamma$ $w.p. 1$.
\end{proof}

\subsection{Proof for Theorem \ref{Theorem1}}

\textbf{Roadmap:} \textit{There are two main steps in the proof. First, we decompose the Bayes regret into two parts as (\ref{decomp BR semi}). To derive the Bayes regret decomposition, we first show that (\ref{per_round_regret}) holds for all $t\in[T]$ in Lemma \ref{expected round regret}, and then prove that (\ref{decomp BR semi}) holds under the condition (\ref{per_round_regret}) in Lemma \ref{regret bound for semi bandit}. Second, we get the bound of each component in (\ref{decomp BR semi}). In particular, the upper bounds of $\Gamma_{t}$ and $\epsilon_{t}$ are derived in Lemma \ref{expected round regret}, whereas the upper bounds of $I(\vgamma;\mH_{T+1})$ and $I(\theta_{k};\mH_{T+1})$ are derived in Lemma \ref{bound for MI}. Gathering the bounds of all components, we get the regret bound in Theorem \ref{Theorem1}. Following are the details of the main proof.}

We start by stating several lemmas, which will be used in our main proof. Proofs of the lemmas are deferred to Appendix \ref{proof of lemma}. Without loss of generality, we assume that all available items have bounded norm (Assumption 1) and all parameters are bounded (Assumption 2).

Using the independence between rewards generated by different arms, we first decompose the per-round expected regret in a similar form of (\ref{eqn:Delta_decomp}), with suitably selected history-dependent constants $\Gamma_{t}$ and $\epsilon_{t}$. Based on the properties of the Gaussian distributions and the fact that MTSS samples rewards from corresponding posterior distributions for every round, we bound both $\Gamma_{t}$ and $\epsilon_{t}$ by functions of $\frac{\delta}{N} \in (0,1]$.

\begin{lemma} \label{expected round regret}
 For any $\mH_{t}$-adapted sequence of actions $(A_{l})_{l=1}^{t-1}$, and any $\delta$ such that $\frac{\delta}{N} \in (0,1]$, the expected regret in round t conditioned on $\mH_{t}$ is bounded as
 \begin{equation}
    \Mean_{t}[\Delta_{t}] \leq \sum_{k \in [N]} \mathbb{P}_{t}(k\in A_{t}) \Gamma_{k,t}\sqrt{I_{t}(\theta_{k};k,Y_{k,t})}+\epsilon_{t},\label{per_round_regret}
\end{equation}
where
\begin{align*}
    \Gamma_{k,t} = 4\sqrt{\frac{\hat{\sigma}_{t}^{2}(k)}{log(1+\hat{\sigma}_{t}^{2}(k)/\sigma_{2}^{2})} log(\frac{4N}{\delta})}, && \epsilon_{t} = \sum_{k \in [N]} \mathbb{P}_{t}(k\in A_{t})\sqrt{2\delta\frac{1}{N}\hat{\sigma}^{2}_{t}(k)}.
\end{align*}
Moreover, for each k, the following history-independent bound holds almost surely.
\begin{equation*}
    \hat{\sigma}_{t}^{2}(k)\leq \lambda_{1}(\Sigma_{\vgamma})+\sigma_{1}^{2}.
\end{equation*}
\end{lemma}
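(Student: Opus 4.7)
My plan is to prove the bound by following the information-theoretic TS analysis of \citet{lu2019information} and \citet{russo2017tutorial}, adapted to the semi-bandit linear reward structure. The first step is to exploit the linearity $r(A, \vthe) = \sum_{k \in A} \theta_k$ together with the TS property that, conditional on $\mH_t$, the action $A_t$ and the optimal action $A^{*} = \argmax_{a \in \mA} r(a, \vthe)$ are identically distributed, with $A_t$ independent of $\vthe$. Writing each set membership through indicators and taking expectations yields the per-item decomposition
\begin{align*}
    \Mean_t[\Delta_t] = \sum_{k=1}^{N} \mathbb{P}_t(k \in A_t) \bigl( \Mean_t[\theta_k \mid k \in A^{*}] - \Mean_t[\theta_k] \bigr),
\end{align*}
which reduces the problem to bounding, for each $k$, the posterior increase in $\theta_k$ on the event that $k$ is optimal.

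The second step is to build a confidence interval for each $\theta_k$ and split the per-item difference accordingly. The closed-form posterior in Appendix~\ref{sec:appendix_LMM} gives $\theta_k \mid \mH_t \sim \normal(\Mean_t[\theta_k], \hat{\sigma}_t^2(k))$, so for $\beta_{k,t} = \hat{\sigma}_t(k) \sqrt{2 \log(4N/\delta)}$ the Gaussian tail yields $\mathbb{P}_t(\theta_k > \Mean_t[\theta_k] + \beta_{k,t}) \leq \delta/(4N)$. Truncating $\theta_k$ at this level separates the per-item difference into a bounded piece of size at most $\beta_{k,t}$ and a tail piece that, after Cauchy--Schwarz against the indicator of the bad event and a Gaussian tail-expectation integral, contributes at most $\sqrt{2(\delta/N)\hat{\sigma}_t^2(k)}$. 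Summing the tail contributions over $k$, weighted by $\mathbb{P}_t(k \in A_t)$, produces precisely the $\epsilon_t$ stated in the lemma.

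The third step converts the deterministic width $\beta_{k,t}$ into the claimed mutual-information form. Because $Y_{k,t} \mid \theta_k, k \in A_t \sim \normal(\theta_k, \sigma_2^2)$ is a single Gaussian channel with Gaussian input of variance $\hat{\sigma}_t^2(k)$, the standard Shannon entropy computation gives
\begin{align*}
    I_t(\theta_k; k, Y_{k,t}) = \tfrac{1}{2}\log\bigl(1 + \hat{\sigma}_t^2(k)/\sigma_2^2\bigr).
\end{align*}
A direct algebraic rearrangement then shows $2\beta_{k,t} = \Gamma_{k,t}\sqrt{I_t(\theta_k;k,Y_{k,t})}$ for the $\Gamma_{k,t}$ in the statement. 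Combining the decomposition, the truncation bound, and this identity gives \eqref{per_round_regret}.

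Finally, the history-independent bound $\hat{\sigma}_t^2(k) \leq \lambda_1(\Sigma_{\vgamma}) + \sigma_1^2$ follows from two observations: (i) under the hierarchical model \eqref{eqn:LMM}, the marginal prior variance of $\theta_k$ is $\vx_k^\top \Sigma_{\vgamma} \vx_k + \sigma_1^2$, which is bounded by $\lambda_1(\Sigma_{\vgamma}) \|\vx_k\|_2^2 + \sigma_1^2 \leq \lambda_1(\Sigma_{\vgamma}) + \sigma_1^2$ by the Rayleigh quotient inequality and Assumption~1; (ii) Gaussian conjugate updating can only shrink posterior variance, so conditioning on $\mH_t$ cannot increase it beyond the marginal. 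The step I expect to require the most care is matching the exact constant in $\epsilon_t$: the Gaussian tail-expectation must be controlled so that the resulting coefficient is $\sqrt{2(\delta/N)\hat{\sigma}_t^2(k)}$ rather than something logarithmically worse, which fixes the truncation level and the role of the free parameter $\delta$. The mutual-information conversion and the variance bound are comparatively routine given the explicit posterior.
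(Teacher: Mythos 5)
Your proposal is correct and follows essentially the same route as the paper's proof: the probability-matching per-item decomposition $\sum_k \mathbb{P}_t(k\in A_t)\,\Mean_t[\hat{\theta}_{k,t}-\theta_k]$, a Gaussian confidence set of half-width $\hat{\sigma}_t(k)\sqrt{2\log(4N/\delta)}$ whose diameter equals $\Gamma_{k,t}\sqrt{I_t(\theta_k;k,Y_{k,t})}$ via the exact identity $I_t(\theta_k;k,Y_{k,t})=\tfrac12\log(1+\hat{\sigma}_t^2(k)/\sigma_2^2)$, Cauchy--Schwarz on the bad event to obtain $\epsilon_t$, and the positive-semidefiniteness/Rayleigh-quotient argument for $\hat{\sigma}_t^2(k)\le\lambda_1(\Sigma_{\vgamma})+\sigma_1^2$ (the paper verifies the last point from the explicit posterior-variance formula rather than the general "Gaussian conditioning shrinks variance" principle, but these are the same fact). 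The only cosmetic differences are your one-sided truncation (which gives a factor-2 tighter good-event term that still implies the stated bound) and the paper's reliance on Lemma 5 of Lu and Van Roy for the confidence-set probability, which your direct Gaussian tail bound reproduces.
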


Based on \textbf{Lemma \ref{expected round regret}}, we get that $\Gamma_{k,t}=O(\sqrt{log(\frac{N}{\delta})})$ and $\epsilon_{t}=O(K\sqrt{\frac{\delta}{N}})$. Then, similar to \textbf{Theorem \ref{general regret bound}}, based on the per-round conditional expected regret decomposition, we develop a decomposition of the total regret over $T$ rounds of interactions by summing the per-round regret over $T$ rounds and then taking the expectation over historical interactions.  

\begin{lemma} \label{regret bound for semi bandit} 
Suppose that (\ref{per_round_regret}) holds for all $t\in [T]$, for some suitably chosen $\Gamma_{k,t}$ and $\epsilon_{t}$. Let $\Gamma_{k}$ and $\Gamma$ be some non-negative constants such that $\Gamma_{k,t}\leq \Gamma_{k} \leq \Gamma$ holds for all $t \in [T]$ and $k \in [N]$ almost surely. Then
\begin{equation} \label{decomp BR semi}
    BR(T)\leq 
    \underbrace{\Gamma K \sqrt{TI(\vgamma;\mH_{T+1})}}_\text{Regret due to not knowing $\vgamma$}
    {+} \underbrace{\Gamma\sqrt{NTK}\sqrt{\frac{1}{N}\sum_{k \in [N]}
    I(\theta_{k};\mH_{T+1}|\vgamma)}+\sum_{t}\Mean[\epsilon_{t}]}_\text{Regret suffered even with known $\gamma$}.
\end{equation}
\end{lemma}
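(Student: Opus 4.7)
The plan is to start from the per-round bound \eqref{per_round_regret} and inject the meta-parameter $\vgamma$ via the chain rule of mutual information. Since $\theta_k$ is one coordinate of the pair $(\vgamma,\theta_k)$, monotonicity of mutual information together with the chain rule gives
$$
I_{t}(\theta_{k}; k, Y_{k,t}) \;\le\; I_{t}(\vgamma,\theta_{k}; k, Y_{k,t}) \;=\; I_{t}(\vgamma; k, Y_{k,t}) + I_{t}(\theta_{k}; k, Y_{k,t}\mid\vgamma).
$$
Applying $\sqrt{a+b}\le \sqrt{a}+\sqrt{b}$ together with the uniform bound $\Gamma_{k,t}\le \Gamma$ inside \eqref{per_round_regret} separates the per-round regret into a ``cost of learning $\vgamma$'' piece and an ``oracle-TS'' piece. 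Summing over $t\in[T]$ and taking the outer expectation then reduces the lemma to bounding these two pieces by the two main terms in \eqref{decomp BR semi}.

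\textbf{Aggregating the two information terms.} For the first piece, on the event $\{k\in A_t\}$ the pair $(k,Y_{k,t})$ is a measurable function of $(A_t,\vY_t)$, so data-processing gives $I_t(\vgamma; k,Y_{k,t}) \le I_t(\vgamma; A_t,\vY_t)$; together with $\sum_k \mathbb{P}_t(k\in A_t)\le K$ this yields
$$
\sum_{k\in[N]} \mathbb{P}_t(k\in A_t)\sqrt{I_t(\vgamma; k,Y_{k,t})} \;\le\; K\sqrt{I_t(\vgamma; A_t,\vY_t)}.
$$
Summing over $t$, applying Cauchy--Schwarz in time and Jensen's inequality, and using the chain rule $\sum_t\Mean[I_t(\vgamma; A_t,\vY_t)]=I(\vgamma;\mH_{T+1})$ deliver $\Gamma K\sqrt{TI(\vgamma;\mH_{T+1})}$. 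For the second piece, I invoke the semi-bandit independence under \eqref{eqn:LMM}: given $\vgamma$, the $\theta_k$'s are mutually independent and $Y_{j,t}\perp\theta_k$ for $j\ne k$, yielding the identity $\mathbb{P}_t(k\in A_t)\,I_t(\theta_k; k,Y_{k,t}\mid\vgamma) = I_t(\theta_k; A_t,\vY_t\mid\vgamma)$. Applying Cauchy--Schwarz jointly over the pair $(k,t)$ then gives
\begin{align*}
\sum_{t,k}\mathbb{P}_t(k\in A_t)\sqrt{I_t(\theta_k;k,Y_{k,t}\mid\vgamma)}
&\le \sqrt{\textstyle\sum_{t,k}\mathbb{P}_t(k\in A_t)}\cdot\sqrt{\textstyle\sum_{t,k}\mathbb{P}_t(k\in A_t)I_t(\theta_k;k,Y_{k,t}\mid\vgamma)} \\
&\le \sqrt{TK}\cdot\sqrt{\textstyle\sum_{k}\sum_{t} I_t(\theta_k; A_t,\vY_t\mid\vgamma)},
\end{align*}
and expectation, Jensen, and the per-arm chain rule $\sum_t\Mean[I_t(\theta_k; A_t,\vY_t\mid\vgamma)]=I(\theta_k;\mH_{T+1}\mid\vgamma)$ deliver $\Gamma\sqrt{NTK}\sqrt{\tfrac{1}{N}\sum_k I(\theta_k;\mH_{T+1}\mid\vgamma)}$. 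The residual $\sum_t\Mean[\epsilon_t]$ passes through unchanged.

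\textbf{Main obstacle.} The main subtlety will be justifying the semi-bandit information identity $\mathbb{P}_t(k\in A_t)\,I_t(\theta_k; k,Y_{k,t}\mid\vgamma) = I_t(\theta_k; A_t,\vY_t\mid\vgamma)$ (and its $\vgamma$-counterpart) cleanly. This relies on the conditional independence of $(\theta_j, Y_{j,t})_{j\ne k}$ from $\theta_k$ given $\vgamma$---which is baked into the hierarchical LMM---and the fact that $A_t$ is a function of $\mH_t$ and a posterior sample, so conditioning on $A_t$ does not leak extra information about $\theta_k$ beyond $\mH_t$. Once these identities and the chain-rule telescoping are established, the remainder is a routine combination of monotonicity of mutual information, $\sqrt{a+b}\le\sqrt{a}+\sqrt{b}$, and two applications of Cauchy--Schwarz (one in time for the $\vgamma$ piece, one jointly over $(k,t)$ for the $\vgamma$-conditional piece).
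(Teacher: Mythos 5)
Your proposal is correct and takes essentially the same route as the paper's proof: the same chain-rule decomposition $I_t(\theta_k;k,Y_{k,t})\le I_t(\vgamma;k,Y_{k,t})+I_t(\theta_k;k,Y_{k,t}\mid\vgamma)$ followed by $\sqrt{a+b}\le\sqrt a+\sqrt b$, and the same two aggregation identities (which the paper isolates as Lemma \ref{decomp for MI}, proved exactly via the conditional independence and the "$A_t$ adds no information beyond $\mH_t$" argument you flag as the main obstacle). The only cosmetic difference is that you apply a single Cauchy--Schwarz jointly over $(k,t)$ for the $\vgamma$-conditional term, where the paper does it in two stages (in $t$ per arm, then over $k$ via $\sum_k\Mean[n_T(k)]\le TK$); both yield the same bound.
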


Here, the first term is the cost for learning the meta parameter $\vgamma$, and the second term is regret for unknown item-specific parameter $\theta_{k}$ given known $\vgamma$. We show the benefits of information sharing among items mainly by the first term, which indicates that the extra regret due to unknown $\vgamma$ is much lower that the cost of learning $\vthe$ with known $\vgamma$. 
Using the assumption that $\Gamma_{k,t}\leq \Gamma_{k} \leq \Gamma$ $w.p.1$ and the bound of $\Gamma_{k,t}$ and $\epsilon_{t}$, we directly get the bound of  $\Gamma$ and $\Mean[\epsilon_{t}]$. Then, our next lemma find the bound of the mutual information terms involved in (\ref{decomp BR semi}), by using the properties of Gaussian distribution and the properties of LMM.  

\begin{lemma} \label{bound for MI}
For any $k\in [N]$ and any $\mH_{T+1}$-adapted sequence of actions $(A_{l})_{l=1}^{T}$, we have
\begin{align*}
    I(\vgamma;\mH_{T+1})\leq \frac{d}{2}log\Big(1+\frac{N\lambda_{1}(\Sigma_{\vgamma})}{\sigma_{1}^{2}+\sigma_{2}^{2}/T}\Big), && I(\theta_{k};\mH_{T+1}|\vgamma)\leq  \frac{1}{2}log(1+\frac{\sigma_{1}^{2}}{\sigma_{2}^{2}}T).
\end{align*}
\end{lemma}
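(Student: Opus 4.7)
The plan is to reduce each mutual information term to a log-determinant (or log-variance) ratio using Gaussianity, then plug in the closed-form posterior covariances already derived in Appendix \ref{sec:appendix_LMM} and apply elementary eigenvalue inequalities together with $\|\vx_i\|_2 \le 1$ and $n_T(i) \le T$.

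\textbf{Bounding $I(\vgamma;\mH_{T+1})$.} First I would write
\begin{equation*}
I(\vgamma;\mH_{T+1}) \;=\; h(\vgamma) - h(\vgamma \mid \mH_{T+1})
\;=\; \tfrac{1}{2}\,\Mean\!\left[\log \tfrac{\det(\Cov_{\vgamma})}{\det(\tilde{\Cov}_{T+1})}\right],
\end{equation*}
using that both the prior and posterior of $\vgamma$ are Gaussian under model \eqref{eqn:LMM}. From the posterior formula in Appendix \ref{sec:appendix_LMM}, $\tilde{\Cov}_{T+1}^{-1} = \Cov_{\vgamma}^{-1} + M$ where $M := \sum_{i=1}^N \tfrac{n_T(i)}{\sigma_2^2 + \sigma_1^2 n_T(i)} \vx_i \vx_i^T$. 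Therefore $\det(\Cov_{\vgamma})/\det(\tilde{\Cov}_{T+1}) = \det(I + \Cov_{\vgamma} M)$. The key linear-algebra step is to apply $\det(I+AB)=\det(I+BA)$ to get $\det(I + M^{1/2}\Cov_{\vgamma} M^{1/2})$, then bound $M^{1/2}\Cov_{\vgamma} M^{1/2} \preceq \lambda_1(\Cov_{\vgamma}) M$, and finally use $\lambda_j(M) \le \mathrm{tr}(M)$ for each of the at most $d$ nonzero eigenvalues to get $\det(I + \lambda_1(\Cov_{\vgamma}) M) \le \bigl(1+\lambda_1(\Cov_{\vgamma})\,\mathrm{tr}(M)\bigr)^d$. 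It then suffices to bound $\mathrm{tr}(M)$: since $x\mapsto x/(\sigma_2^2+\sigma_1^2 x)$ is increasing and $n_T(i)\le T$, combined with $\|\vx_i\|_2^2\le 1$,
\begin{equation*}
\mathrm{tr}(M) \;\le\; \sum_{i=1}^N \tfrac{T}{\sigma_2^2 + \sigma_1^2 T}\, \|\vx_i\|_2^2 \;\le\; \tfrac{N}{\sigma_1^2 + \sigma_2^2/T}.
\end{equation*}
Substituting back and taking logs gives the stated bound; the expectation over $\mH_{T+1}$ is trivial because the right-hand side is data-independent.

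\textbf{Bounding $I(\theta_k;\mH_{T+1}\mid \vgamma)$.} For the second bound I would again use Gaussianity: conditional on $\vgamma$, both $\theta_k \mid \vgamma \sim \normal(\vx_k^T\vgamma,\sigma_1^2)$ and $\theta_k\mid \mH_{T+1},\vgamma \sim \normal(\hat\mu_{T+1,\vgamma}(k),\hat\sigma_{T+1,\vgamma}^2(k))$ are Gaussian, with $\hat\sigma_{T+1,\vgamma}^{-2}(k) = \sigma_1^{-2} + \sigma_2^{-2} n_T(k)$ by Appendix \ref{sec:appendix_LMM}. Hence
\begin{equation*}
I(\theta_k;\mH_{T+1}\mid \vgamma) \;=\; \tfrac{1}{2}\,\Mean\!\left[\log \tfrac{\sigma_1^2}{\hat\sigma_{T+1,\vgamma}^2(k)}\right] \;=\; \tfrac{1}{2}\,\Mean\!\left[\log\!\bigl(1+\tfrac{\sigma_1^2}{\sigma_2^2}\,n_T(k)\bigr)\right] \;\le\; \tfrac{1}{2}\log\!\bigl(1+\tfrac{\sigma_1^2}{\sigma_2^2} T\bigr),
\end{equation*}
where the last step uses $n_T(k)\le T$ and monotonicity of $\log$.

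\textbf{Main obstacle.} The routine part is the second bound, which follows almost immediately from the one-dimensional posterior update. The nontrivial step is bounding $\det(I+\Cov_{\vgamma} M)$ sharply enough to obtain the precise factor $\tfrac{d}{2}\log(1+N\lambda_1(\Cov_{\vgamma})/(\sigma_1^2+\sigma_2^2/T))$ stated in the lemma; in particular, one must avoid the looser AM--GM bound $(1+\mathrm{tr}(\Cov_{\vgamma} M)/d)^d$ and instead exploit $\lambda_j(M)\le \mathrm{tr}(M)$ across the $d$ factors, which is what produces the desired $N$ (rather than $N/d$) inside the logarithm. All remaining steps are routine given the posterior formulas in Appendix \ref{sec:appendix_LMM}.
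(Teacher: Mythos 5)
Your proof is correct and follows essentially the same route as the paper's: both reduce the mutual informations to Gaussian log-determinant (log-variance) ratios, plug in the posterior precision formulas $\tilde{\Cov}_{T+1}^{-1}=\Cov_{\vgamma}^{-1}+M$ and $\hat\sigma_{T+1,\vgamma}^{-2}(k)=\sigma_1^{-2}+\sigma_2^{-2}n_T(k)$, and bound $\mathrm{tr}(M)\le N/(\sigma_1^2+\sigma_2^2/T)$ using $\|\vx_i\|_2\le 1$ and $n_T(i)\le T$ (the paper applies Weyl's inequality to the eigenvalues of $\Cov_{\vgamma}^{-1}+M$ where you symmetrize $\det(I+\Cov_{\vgamma}M)$, a cosmetic difference). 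One side remark in your "main obstacle" paragraph is backwards: the AM--GM bound $(1+\mathrm{tr}(\Cov_{\vgamma}M)/d)^d$ is \emph{tighter}, not looser, than $(1+\lambda_1(\Cov_{\vgamma})\,\mathrm{tr}(M))^d$ and would also yield the stated bound, so nothing in the proof hinges on avoiding it.
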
 

Now we are ready to combine these results and present our main proof of \textbf{Theorem \ref{Theorem1}}. Specifically, we get the bounds of $\Gamma_{k,t}$ and $\epsilon_{t}$ from \textbf{Lemma \ref{expected round regret}} and the bounds of mutual information terms from \textbf{Lemma \ref{bound for MI}}, and then plug them into the regret decomposition derived in \textbf{Lemma \ref{regret bound for semi bandit}}.



\begin{proof}[Proof of Theorem \ref{Theorem1}]

From \textbf{Lemma \ref{expected round regret}}, we showed that (\ref{per_round_regret}) holds for suitably chosen $\Gamma_{k,t}$ and $\epsilon_{t}$. Using the upper bounds of $\hat{\sigma}_{t}(k)$ in \textbf{Lemma \ref{expected round regret}}, since $\sqrt{\frac{x}{log(1+ax)}}$ is an increasing function in x when $a>0$, we can bound $w.p. 1$ that
\begin{equation*}
    \Gamma_{k,t} \leq 4\sqrt{\frac{\lambda_{1}(\Sigma_{\vgamma})+\sigma_{1}^{2}}{log(1+(\lambda_{1}(\Sigma_{\vgamma})+\sigma_{1}^{2})/\sigma_{2}^{2})} log(\frac{4N}{\delta})}=\Gamma.
\end{equation*}
Then, we have the upper bound of $\Gamma_{k,t} \leq \Gamma$ for all $t$ and $k$ $w.p. 1$. 
Similarly, we have
\begin{equation*}
    \epsilon_{t}  \leq \sum_{k \in [N]} \mathbb{P}_{t}(k\in A_{t})\sqrt{2\delta\frac{1}{N}(\lambda_{1}(\Sigma_{\vgamma})+\sigma_{1}^{2})}.
\end{equation*}
From \textbf{Lemma \ref{regret bound for semi bandit}}, for any $\delta>0$, let $c_{1}= 4\sqrt{\frac{\lambda_{1}(\Sigma_{\vgamma})+\sigma_{1}^{2}}{log(1+(\lambda_{1}(\Sigma_{\vgamma})+\sigma_{1}^{2})/\sigma_{2}^{2})} log(\frac{4N}{\delta})}$, we have
\begin{align*}
    BR(T)&\leq \Gamma K \sqrt{TI(\vgamma;\mH_{T+1})}+\Gamma\sqrt{NTK}\sqrt{\frac{1}{N}\sum_{k \in [N]}
    I(\theta_{k};\mH_{T+1}|\vgamma)}+\sum_{t}\Mean(\epsilon_{t})\\
    &\leq c_{1} K\sqrt{T}\sqrt{\frac{d}{2}log\Big(1+\frac{N\lambda_{1}(\Sigma_{\vgamma})}{\sigma_{1}^{2}+\sigma_{2}^{2}/T}\Big)} 
    +c_{1}\sqrt{NTK}\sqrt{\frac{1}{2}log(1+\frac{\sigma_{1}^{2}}{\sigma_{2}}T)}+TK\sqrt{2\delta\frac{1}{N}(\lambda_{1}(\Sigma_{\vgamma})+\sigma_{1}^{2})}
\end{align*}
The inequality holds by first using the upper bound of mutual information in \textbf{Lemma \ref{bound for MI}}, and the upper bound of $\Gamma$ and $\epsilon_{t}$, then we derive the history-independent upper bound of $\sum_{t}\Mean(\epsilon_{t})$ as the following.
\begin{align*}
    &\sum_{t}\Mean\Big[\sum_{k \in [N]} \mathbb{P}_{t}(k\in A_{t})\sqrt{2\delta\frac{1}{N}(\lambda_{1}(\Sigma_{\vgamma})+\sigma_{1}^{2})}\Big]\\
    &=\sqrt{2\delta\frac{1}{N}(\lambda_{1}(\Sigma_{\vgamma})+\sigma_{1}^{2})}\Mean\Big[\sum_{t}\sum_{k \in [N]} \mathbb{P}_{t}(k\in A_{t})\Big]\\
    &\leq TK\sqrt{2\delta\frac{1}{N}(\lambda_{1}(\Sigma_{\vgamma})+\sigma_{1}^{2})}.
\end{align*}
Let $\delta = \frac{1}{T^{2}}$, 
\begin{align*}
    BR(T)&\leq c_{1} K\sqrt{T}\sqrt{\frac{d}{2}log\Big(1+\frac{N\lambda_{1}(\Sigma_{\vgamma})}{\sigma_{1}^{2}+\sigma_{2}^{2}/T}\Big)} 
    +c_{1}\sqrt{NTK}\sqrt{\frac{1}{2}log(1+\frac{\sigma_{1}^{2}}{\sigma_{2}}T)}+K\sqrt{\frac{2}{N}(\lambda_{1}(\Sigma_{\vgamma})+\sigma_{1}^{2})}\\
    &= O(K\sqrt{Tdlog(N)log(NT^{2})}+\sqrt{NTKlog(T)log(NT^{2})}+K\sqrt{\frac{1}{N}})\\
    &=\tilde{O}(K\sqrt{Td}+\sqrt{NTK}).
\end{align*}

\end{proof}


\section{Proof of Lemmas} \label{proof of lemma}

\subsection{Proof for Lemma \ref{expected round regret}}


\begin{proof} First, using the probability matching property of Thompson Sampling and the independence between the rewards generated by different arms, we decompose the per-round expected regret as $\sum_{k \in [N]} \mathbb{P}_{t}(k\in A_{t}) \Mean_{t}[\hat{\theta}_{k,t}-\theta_{k}]$, where $\hat{\theta}_{k,t}$ is the estimated mean reward for arm $k$ given the history $\mH_{t}$. Then, following Lemma 5 in \citet{lu2019information}, we define a confidence set $\Theta_{t}(k)$ for both $\hat{\theta}_{k,t}$ and $\theta_{k}$ with high probability for each arm $k$ at round $t$, with suitably selected non-negative random variables $\Gamma_{k,t}$, which leads to the bound of $\Mean_{t}[\hat{\theta}_{k,t}-\theta_{k}]$ and concludes the proof of the first part of the lemma directly. The $\epsilon_{t}$ is some non-negative random variables derived appropriately. For the second part of the lemma, we bound the $\Gamma_{k,t}$ and $\epsilon_{t}$ by finding the upper bound of $\hat{\sigma}_{t}^{2}(k)$ for each arm $k$ at round $t$ conditional on the history $\mH_{t}$. 

Now we are ready to prove \textbf{Lemma \ref{expected round regret}} in detail, as follows.\\

Since $\sum_{k\in A_{*}}\theta_{k}|\mH_{t} \stackrel{d}{=} \sum_{k\in A_{t}}\hat{\theta}_{k,t}|\mH_{t}$, we have
\begin{align*}
    \Mean_{t}[\Delta_{t}] &= \Mean_{t}[\sum_{k \in A_{*}}\theta_{k}-\sum_{k \in A_{t}}\theta_{k}]\\
    & = \Mean_{t}[\sum_{k \in A_{t}}\hat{\theta}_{k,t}-\sum_{k \in A_{t}}\theta_{k}]\\
    &= \Mean_{t}[\sum_{k \in [N]}\mathbbm{1}(k \in A_{t})(\hat{\theta}_{k,t}-\theta_{k})]\\
    & = \sum_{k \in [N]} \mathbb{P}_{t}(k\in A_{t}) \Mean_{t}[\hat{\theta}_{k,t}-\theta_{k}].
\end{align*}


For each $k\in [N]$, we know that $\hat{\theta}_{k,t}|\mH_{t}\sim  \mathcal{N}(\hat{\mu}_{t}(k),\hat{\sigma}_{t}^{2}(k))$. Let us consider the confidence set of $\hat{\theta}_{k}$ for each arm $k$:
\[
\Theta_{t}(k) = \{\theta:|\theta-\hat{\mu}_{t}(k)|\leq\frac{\Gamma_{k,t}}{2}\sqrt{I_{t}(\theta_{k};k,Y_{k,t})}\}.
\]
The history dependent conditional mutual entropy of $\theta_{k}$ given the history $\mH_{t}$, $I_{t}(\theta_{k};k,Y_{k,t})$, can be computed as follows:
\begin{align*}
    I_{t}(\theta_{k};k,Y_{k,t}) &=  h_{t}(\theta_{k})-h_{t+1}(\theta_{k})\\
    &=\frac{1}{2}log(det(2\pi e\hat{\sigma}_{t}^{2}(k))-\frac{1}{2}log(det(2\pi e\hat{\sigma}_{t+1}^{2}(k))\\
    &=\frac{1}{2}log(\hat{\sigma}_{t}^{2}(k)\hat{\sigma}_{t+1}^{-2}(k))\\
    &=\frac{1}{2}log(\hat{\sigma}_{t}^{2}(k)[\hat{\sigma}_{t}^{-2}(k)+\sigma_{2}^{-2}])\\
    &= \frac{1}{2}log\Big(1+\frac{\hat{\sigma}_{t}^{2}(k)}{\sigma_{2}^{2}}\Big).
\end{align*}

For $\frac{\delta}{N} > 0$, let
\[
\Gamma_{k,t} = 4\sqrt{\frac{\hat{\sigma}_{t}^{2}(k)}{log(1+\hat{\sigma}_{t}^{2}(k)/\sigma_{2}^{2})} log(\frac{4N}{\delta})}.
\]

Then, following Lemma 5 in \citet{lu2019information}, for any $k$
and any $\delta$ such that $\frac{\delta}{N} \in (0,1]$, we have
\[
\mathbb{P}_{t}(\hat{\theta}_{k,t}\in \Theta_{t}(k))\geq1-\frac{\delta}{2N}.
\]

Now we continue the regret decomposition as
\begin{align*}
    \Mean_{t}[\hat{\theta}_{k,t}-\theta_{k}] &=  \Mean_{t}[\mathbbm{1}(\hat{\theta}_{k,t},\theta_{k}\in \Theta_{t}(k))(\hat{\theta}_{k,t}-\theta_{k})]+ \Mean_{t}[\mathbbm{1}^{c}(\hat{\theta}_{k,t},\theta_{k}\in \Theta_{t}(k))(\hat{\theta}_{k,t}-\theta_{k})]\\
    &\leq \Gamma_{k,t}\sqrt{I_{t}(\theta_{k};k,Y_{k,t})}+ \sqrt{\mathbb{P}(\hat{\theta}_{k,t}\text{ or }\theta_{k}\not\in \Theta_{t}(k))\Mean_{t}[(\hat{\theta}_{k,t}-\theta_{k})^{2}]}\\
    &\leq \Gamma_{k,t}\sqrt{I_{t}(\theta_{k};k,Y_{k,t})}+ \sqrt{\delta\frac{1}{N}\Mean_{t}[(\hat{\theta}_{k,t}-\hat{\mu}_{t}(k))^{2}+(\theta_{k}-\hat{\mu}_{t}(k))^{2}]}\\
    &\leq \Gamma_{k,t}\sqrt{I_{t}(\theta_{k};k,Y_{k,t})}+\sqrt{2\delta\frac{1}{N}\hat{\sigma}^{2}_{t}(k)}.
\end{align*}
The second inequality uses that $\mathbb{P}(\hat{\theta}_{k,t}\text{ or }\theta_{k}\not\in \Theta_{t}(k))\leq \mathbb{P}(\hat{\theta}_{k,t}\not\in \Theta_{t}(k))+\mathbb{P}(\theta_{k}\not\in \Theta_{t}(k))=\frac{\delta}{N}$.
Therefore, we conclude our proof of the first part.
\begin{align*}
    \Mean_{t}[\Delta_{t}] & = \sum_{k \in [N]} \mathbb{P}_{t}(k\in A_{t}) \Mean_{t}[\hat{\theta}_{k,t}-\theta_{k}]\\
    &\leq \sum_{k \in [N]} \mathbb{P}_{t}(k\in A_{t}) \Big(\Gamma_{k,t}\sqrt{I_{t}(\theta_{k};k,Y_{k,t})}+\sqrt{2\delta\frac{1}{N}\hat{\sigma}^{2}_{t}(k)}\Big)\\
    &= \sum_{k \in [N]} \mathbb{P}_{t}(k\in A_{t}) \Gamma_{k,t}\sqrt{I_{t}(\theta_{k};k,Y_{k,t})}+\sum_{k \in [N]} \mathbb{P}_{t}(k\in A_{t})\sqrt{2\delta\frac{1}{N}\hat{\sigma}^{2}_{t}(k)}\\
    & = \sum_{k \in [N]} \mathbb{P}_{t}(k\in A_{t}) \Gamma_{k,t}\sqrt{I_{t}(\theta_{k};k,Y_{k,t})}+\epsilon_{t}.
\end{align*}

\textbf{Bounding $\hat{\sigma}_{t}^{2}(k)$.} Recall that
\begin{align*}
    \hat{\sigma}_{t}^{2}(k)&=\vx_i^T{\Cov}_{\vgamma}\vx_i+\sigma_{1}^{2}-(\vx_i^T{\Cov}_{\vgamma}\vPhi^{T}_{1:t}+\sigma_{1}^{2}\vZ_{1:t,i})(\sigma_{2}^{2}\vI_{Kt}+\vPhi_{1:t}{\Cov}_{\vgamma}\vPhi^{T}_{1:t}+\sigma_{1}^{2}\vZ_{1:t}^{T}\vZ_{1:t})^{-1}(\vx_i^T{\Cov}_{\vgamma}\vPhi^{T}_{1:t}+\sigma_{1}^{2}\vZ_{1:t,i})^{T}\\
    &\leq \vx_k^T\Sigma_{\vgamma}\vx_k+\sigma_{1}^{2}\\
    &\leq max_{k \in [N]} \vx_k^T\Sigma_{\vgamma}\vx_k+\sigma_{1}^{2}\\
    &\leq max_{k \in [N]} \vx_k^T\lambda_{1}(\Sigma_{\vgamma})\vx_k+\sigma_{1}^{2}\\
    &\leq \lambda_{1}(\Sigma_{\vgamma})+\sigma_{1}^{2}.
\end{align*}
The first inequality results form that $(\vx_i^T{\Cov}_{\vgamma}\vPhi^{T}_{1:t}+\sigma_{1}^{2}\vZ_{1:t,i})(\sigma_{2}^{2}\vI_{Kt}+\vPhi_{1:t}{\Cov}_{\vgamma}\vPhi^{T}_{1:t}+\sigma_{1}^{2}\vZ_{1:t}^{T}\vZ_{1:t})^{-1}(\vx_i^T{\Cov}_{\vgamma}\vPhi^{T}_{1:t}+\sigma_{1}^{2}\vZ_{1:t,i})^{T}$ is positive semi-definite. By the assumption that $\|x_{i}\|_{2}\leq 1$, we have the last inequality.
\end{proof}

\subsection{Proof for Lemma \ref{regret bound for semi bandit}}

\begin{proof} First, suppose that (\ref{per_round_regret}) holds for all round $t$, we adapt \textbf{Theorem \ref{general regret bound}} to bound the expected regret for each arm $k$. Here, $A_{t} = k$, $\vY_{t} = Y_{k,t}$, and $\Gamma_{t} = \mathbb{P}_{t}(k\in A_{t}) \Gamma_{k,t}$. Summing the regret bound for each arm $k$, similar to \textbf{Theorem \ref{general regret bound}}, we can decompose the Bayes regret bound into three parts, where the first part is the cost of learning $\vgamma$, and the rest two parts constitute the cost of learning $\vthe=(\theta_{k})_{k=1}^{N}$. While the third part is bounded in \textbf{Lemma \ref{expected round regret}}, we bound the first two parts separately. Particularly, by \textbf{Lemma \ref{decomp for MI}}, the expectation of the history dependent mutual information terms in the first two parts are bounded by the history independent mutual information, respectively. The proof is concluded by utilizing inequalities and the assumption that $\Gamma_{k,t}\leq \Gamma_{k} \leq \Gamma$ $w.p. 1$.

Mathematically,
\begin{align*}
    BR(T) &= \Mean[R(T)] = \Mean[\sum_{t}\Delta_{t}]\\
    & \leq \Mean\Big[\sum_{t}\sum_{k \in [N]} \mathbb{P}_{t}(k\in A_{t}) \Gamma_{k,t}\sqrt{I_{t}(\theta_{k};k,Y_{k,t})}\Big]+\Mean\sum_{t}\epsilon_{t}\\
    & = \sum_{k \in [N]}\Mean\Big[\sum_{t} \mathbb{P}_{t}(k\in A_{t}) \Gamma_{k,t}\sqrt{I_{t}(\theta_{k};k,Y_{k,t})}\Big]+\Mean\sum_{t}\epsilon_{t}\\
     &\leq \underbrace{\sum_{k\in [N]}\Gamma_{k} \sum_{t} \Mean [\mathbb{P}_{t}(k\in A_{t}) \sqrt{I_{t}(\vgamma;k,Y_{k,t})}]}_\text{Regret due to not knowing $\vgamma$}
    {+} \underbrace{\sum_{k \in [N]} \Gamma_{k} \sum_{t}\Mean[\mathbb{P}_{t}(k\in A_{t}) \sqrt{I_{t}(\theta_{k};k,Y_{k,t}|\vgamma)}]+\sum_{t}\Mean[\epsilon_{t}]}_\text{Regret suffered even with known $\gamma$}.
\end{align*}
The first inequality directly uses the (\ref{per_round_regret}). Similar to the proof of \textbf{Theorem \ref{general regret bound}}, we have the second inequality with the assumption that $\Gamma_{k,t}\leq \Gamma_{k}$ $w.p. 1$.

We now derive the bounds of the first two terms separately. For the first term, we have
\begin{align*}
    &\sum_{k\in [N]}\Gamma_{k} \Mean\sum_{t} \mathbb{P}_{t}(k\in A_{t}) \sqrt{I_{t}(\vgamma;k,Y_{k,t})}\\
    &\leq \Gamma \Mean\sum_{t} \sum_{k \in [N]} \mathbb{P}_{t}(k\in A_{t})\sqrt{I_{t}(\vgamma;k,Y_{k,t})}\\
    & \leq \Gamma K \Mean\sum_{t}\sqrt{I_{t}(\vgamma;A_{t},\vY_{t})}\\
&\leq  \Gamma K E\sqrt{T\sum_{t}I_{t}(\vgamma;A_{t},\vY_{t})}\\
&\leq  \Gamma K \sqrt{T\sum_{t}\Mean I_{t}(\vgamma;A_{t},\vY_{t})}\\
&=  \Gamma K \sqrt{TI(\vgamma;\mH_{T+1})}.
\end{align*}
The first inequality follows that $\Gamma_{k}\leq \Gamma$ $w.p. 1$. The second inequality holds by \textbf{Lemma \ref{decomp for MI}}. The third inequality follows $\sum_{i}a_{i}b_{i} \leq \sqrt{\sum_{i}a_{i}^{2}\sum_{i}b_{i}^{2}}$ with $a_{i} = 1$ and $b_{i} = \sqrt{I_{t}(\vgamma;A_{t},Y_{t})}$. The forth inequality follows the Jensen's Inequality for $\sqrt{\cdot}$. The final equality follows the chain rule of mutual information. Specifically, $I(\vgamma;\mH_{T+1})=\sum_{t=1}^{T}I(\vgamma;A_{t},\vY_{t}|\mH_{t})=\Mean\sum_{t=1}^{T}I_{t}(\vgamma;A_{t},\vY_{t})$.

For the second term, we have
\begin{align*}
    &\sum_{k \in [N]} \Gamma_{k}  \Mean\sum_{t}\mathbb{P}_{t}(k\in A_{t}) \sqrt{I_{t}(\theta_{k};k,Y_{k,t}|\vgamma)}\\
    &= \sum_{k \in [N]} \Gamma_{k}  \Mean\Big[\sum_{t}\sqrt{\mathbb{P}_{t}(k\in A_{t})} \sqrt{\mathbb{P}_{t}(k\in A_{t})I_{t}(\theta_{k};k,Y_{k,t}|\vgamma)}\Big]\\
    &\leq \sum_{k \in [N]} \Gamma_{k}  \Mean\Big[\sqrt{\sum_{t}\mathbb{P}_{t}(k\in A_{t}) \sum_{t}\mathbb{P}_{t}(k\in A_{t})I_{t}(\theta_{k};k,Y_{k,t}|\vgamma)}\Big]\\
     &\leq \sum_{k \in [N]} \Gamma_{k}  \sqrt{\Mean\sum_{t}\mathbb{P}_{t}(k\in A_{t})} \sqrt{\Mean\sum_{t} \mathbb{P}_{t}(k\in A_{t})I_{t}(\theta_{k};k,Y_{k,t}|\vgamma)}\\
    &\leq \sum_{k \in [N]} \Gamma_{k}  \sqrt{\Mean[n_{T}(k)]} \sqrt{I(\theta_{k};\mH_{T+1}|\vgamma)}\\
     &\leq \sqrt{N\sum_{k\in [N]}\Mean[n_{T}(k)]}\sqrt{\frac{1}{N}\sum_{k \in [N]} \Gamma_{k}^{2} I(\theta_{k};\mH_{T+1}|\vgamma)}\\
    & \leq \sqrt{NTK}\sqrt{\frac{1}{N}\sum_{k \in [N]} \Gamma^{2} I(\theta_{k};\mH_{T+1}|\vgamma)}\\
    & = \Gamma \sqrt{NTK}\sqrt{\frac{1}{N}\sum_{k \in [N]} I(\theta_{k};\mH_{T+1}|\vgamma)}.
\end{align*}
The first inequality uses the Cauchy-Schwartz inequality, that $\sum_{i}a_{i}b_{i} \leq \sqrt{\sum_{i}a_{i}^{2}\sum_{i}b_{i}^{2}}$. Here $a_{i}=\sqrt{\mathbb{P}_{t}(k\in A_{t})}$ and $b_{i}=\sqrt{\mathbb{P}_{t}(k\in A_{t})I_{t}(\theta_{k};k,Y_{k,t}|\vgamma)}$. The second inequality follows that $\Mean(XY)\leq\sqrt{\Mean(X^{2})\Mean(Y^{2})}$ for $X,Y >0$ $w.p. 1$. The third inequality uses the result of \textbf{Lemma \ref{decomp for MI}}. The next inequality uses the Cauchy-Schwartz inequality again with $a_{i}=\sqrt{\Mean[n_{T}(k)]}$ and $b_{i}=\sqrt{I(\theta_{k};\mH_{T+1}|\vgamma)}$. The last inequality is because of $\Gamma_{k}\leq \Gamma$ $w.p. 1$. 
\end{proof}

\subsection{Proof for Lemma \ref{bound for MI}}

\begin{proof}
First, we derive the mutual information of the meta-parameter $\vgamma$ given the history as follows.
\begin{align*}
    I(\vgamma;\mH_{T+1}) &= h(\vgamma)-h(\vgamma|\mH_{T+1})\\
    &=h(\vgamma)-\Mean[h_{T+1}(\vgamma)]\\
    &=\frac{1}{2}log(det(2\pi e\Sigma_{\vgamma}))-\Mean[\frac{1}{2}log(det(2\pi e \tilde{\Sigma}_{T+1}))]\\
    &=\frac{1}{2}\Mean[log(det(\Sigma_{\vgamma})det(\tilde{\Sigma}_{T+1}^{-1}))]\\
    &=\frac{1}{2}\Mean[log(\prod_{i=1}^{d}\lambda_{i}(\Sigma_{\vgamma})\lambda_{i}(\tilde{\Sigma}_{T+1}^{-1}))]\\
    &\leq \frac{1}{2}\Mean\Big[log(\prod_{i=1}^{d}\lambda_{i}(\Sigma_{\vgamma})\Big(\frac{1}{\lambda_{i}(\Sigma_{\vgamma})}+\frac{N}{\sigma_{2}^{2}/T+\sigma_{1}^{2}}\Big)\Big]\\
    &\leq \frac{d}{2}log\Big(1+\frac{N\lambda_{1}(\Sigma_{\vgamma})}{\sigma_{1}^{2}+\sigma_{2}^{2}/T}\Big).
\end{align*}

For the final inequality, we derive the history independent bound as follows.
\begin{align*}
    \lambda_{i}(\tilde{{\Cov}}_{T+1}^{-1})) &= \lambda_{i}({\Cov}_{\vgamma}^{-1}+\sum_{i=1}^{N}\frac{n_{T}(i)}{\sigma_{2}^{2}+\sigma_{1}^{2}n_{T}(i)}\vx_{i}\vx_{i}^{T})\\
    &\leq \lambda_{i}({\Cov}_{\vgamma}^{-1})+\lambda_{1}(\sum_{i=1}^{N}\frac{n_{T}(i)}{\sigma_{2}^{2}+\sigma_{1}^{2}n_{T}(i)}\vx_{i}\vx_{i}^{T})\\
    &\leq \frac{1}{\lambda_{i}({\Cov}_{\vgamma})}+tr(\sum_{i=1}^{N}\frac{n_{T}(i)}{\sigma_{2}^{2}+\sigma_{1}^{2}n_{T}(i)}\vx_{i}\vx_{i}^{T})\\
    &=\frac{1}{\lambda_{i}({\Cov}_{\vgamma})}+\sum_{i=1}^{N}\frac{n_{T}(i)}{\sigma_{2}^{2}+\sigma_{1}^{2}n_{T}(i)}tr(\vx_{i}^{T}\vx_{i})\\
    &\leq \frac{1}{\lambda_{i}({\Cov}_{\vgamma})}+\sum_{i=1}^{N}\frac{n_{T}(i)}{\sigma_{2}^{2}+\sigma_{1}^{2}n_{T}(i)}\\
    &\leq \frac{1}{\lambda_{i}({\Cov}_{\vgamma})}+\frac{N}{\sigma_{2}^{2}/T+\sigma_{1}^{2}}.
\end{align*}
The first inequality follows the Weyl's inequality. The second equality first uses linearity of trace, and then uses the cyclic property of trace. By assumption 1, we have $tr(\vx_{i}^{T}\vx_{i})=\|\vx_{i}\|_{2}^{2}\leq 1$, and the second last inequality holds.

Now we derive the mutual information of $\theta_{k}$ for each item $k \in [N]$, given the history and the meta-parameter $\vgamma$.
\begin{align*}
    I(\theta_{k};\mH_{T+1}|\vgamma)&=h(\theta_{k}|\vgamma)-h(\theta_{k}|\vgamma,\mH_{T+1})\\
    &=h(\theta_{k}|\vgamma)-\Mean[h_{T+1}(\theta_{k}|\vgamma)]\\
    &=\frac{1}{2}log(det(2 \pi e \sigma_{1}^{2}))-\Mean[\frac{1}{2}log(det(2 \pi e \hat{\sigma}_{T+1,\vgamma}^{2}(k))]\\
    &=\frac{1}{2}\Mean\Big[log[\sigma_{1}^{2}(\sigma_{1}^{-2}+\sigma_{2}^{-2}n_{T}(k))]\Big]\\
    &=\frac{1}{2}\mathbb{P}(n_{T}(k)\geq 1)\Mean\Big[log[1+\frac{\sigma_{1}^{2}}{\sigma_{2}^{2}}n_{T}(k)]\Big]\\
    &\leq \frac{1}{2}log(1+\frac{\sigma_{1}^{2}}{\sigma_{2}^{2}}\Mean[n_{T}(k)])\\
    &\leq \frac{1}{2}log(1+\frac{\sigma_{1}^{2}}{\sigma_{2}^{2}}T).
\end{align*}
The first inequality first uses the fact that $\mathbb{P}(n_{T}(k)\geq 1) \leq 1$, then follows the Jensen's inequality of log.
\end{proof}

\subsection{Proof for Lemma \ref{decomp for MI}}

\begin{lemma} \label{decomp for MI}
For any $k\in [N]$ and $\mH_{t}$-adapted sequence of actions $(A_{l})_{l=1}^{t-1}$, the following statements hold
\begin{align*}
    I(\theta_{k};\mH_{T+1}|\vgamma) &= \Mean\sum_{t} \mathbb{P}_{t}(k\in A_{t})I_{t}(\theta_{k};k,Y_{k,t}|\vgamma), \\
    K\sqrt{I_{t}(\vgamma;A_{t},\vY_{t})} &\geq \sum_{k \in [N]}\mathbb{P}_{t}(k \in A_{t})\sqrt{I_{t}(\vgamma;k,Y_{k,t})}.
\end{align*}
\end{lemma}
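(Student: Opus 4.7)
The plan is to prove the two identities separately, both relying on the conditional independence structure of the hierarchical LMM \eqref{eqn:LMM} together with standard mutual-information identities.

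For the first statement, I would apply the chain rule of mutual information to write $I(\theta_k; \mH_{T+1}\mid\vgamma) = \sum_t \Mean[I_t(\theta_k; A_t, \vY_t\mid\vgamma)]$, and then argue that each summand equals $\Mean[\mathbb{P}_t(k\in A_t)\, I_t(\theta_k; k, Y_{k,t}\mid\vgamma)]$. The key observations are: (i) $A_t$ is a measurable function of $\mH_t$ and algorithmic randomness independent of $(\vthe,\vgamma)$, so $I_t(\theta_k; A_t\mid\vgamma)=0$ and the per-round term reduces to $I_t(\theta_k; \vY_t\mid A_t,\vgamma)$; (ii) under model \eqref{eqn:LMM}, $\{\theta_i\}$ are mutually independent given $\vgamma$, so conditional on $A_t$ and $\vgamma$ only $Y_{k,t}$ carries information about $\theta_k$, and only when $k\in A_t$. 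Splitting the expectation over the value of $A_t$ and using these two facts yields the desired factor $\mathbb{P}_t(k\in A_t)$.

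For the second inequality, the same arguments give $I_t(\vgamma; A_t, \vY_t) = I_t(\vgamma; \vY_t\mid A_t) = \sum_{k\in A_t} I_t(\vgamma; Y_{k,t}\mid A_t)$, where the last equality follows from the chain rule together with the mutual independence of $\{Y_{i,t}\}_{i\in A_t}$ given $(\vgamma, A_t)$ (which in turn follows because $\{\theta_i\}\mid\vgamma$ are independent across $i$ and each $Y_{i,t}$ depends only on $\theta_i$ and fresh noise). Averaging over $A_t$ yields $I_t(\vgamma; A_t, \vY_t) = \sum_k \mathbb{P}_t(k\in A_t)\, I_t(\vgamma; k, Y_{k,t})$. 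The final step is Cauchy--Schwarz applied via the factorization $\mathbb{P}_t(k\in A_t) = \sqrt{\mathbb{P}_t(k\in A_t)}\cdot\sqrt{\mathbb{P}_t(k\in A_t)}$:
\begin{align*}
    \sum_k \mathbb{P}_t(k\in A_t)\sqrt{I_t(\vgamma; k, Y_{k,t})}
    &\le \sqrt{\textstyle\sum_k \mathbb{P}_t(k\in A_t)}\,\sqrt{\textstyle\sum_k \mathbb{P}_t(k\in A_t)\, I_t(\vgamma; k, Y_{k,t})} \\
    &\le \sqrt{K}\,\sqrt{I_t(\vgamma; A_t, \vY_t)} \le K\,\sqrt{I_t(\vgamma; A_t, \vY_t)},
\end{align*}
using $\sum_k \mathbb{P}_t(k\in A_t) = \Mean_t[|A_t|]\le K$ and $\sqrt{K}\le K$.

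The main obstacle is the careful bookkeeping of the two conditional independence properties: that the TS-based action $A_t$ carries no information about $(\theta_k, \vgamma)$ beyond $\mH_t$ (which follows from Algorithm \ref{alg:main_TS} because $A_t$ is a deterministic function of $\mH_t$ and algorithmic noise, hence $A_t\perp(\theta_k,\vgamma)\mid\mH_t$), and that conditioning on $\vgamma$ decouples the item-specific parameters $\{\theta_i\}$ and hence the observations $\{Y_{i,t}\}_{i\in A_t}$ (a direct consequence of the LMM). Once these are in hand, the remaining manipulations are routine chain-rule and Cauchy--Schwarz applications.
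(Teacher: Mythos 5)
Your proof of the first identity is correct and follows essentially the same route as the paper's: the chain rule over rounds, conditioning on the realized action $A_t$ (which is $\mH_t$-measurable up to independent algorithmic noise and hence carries no information about $\theta_k$), and the conditional independence of $\{\theta_i\}$ given $\vgamma$ to kill the contribution of the observations of items other than $k$. The paper writes this as a sum over $a\in\mA$ plus a second application of the chain rule $I(X;Y,Z)=I(X;Z)+I(X;Y|Z)$, but the content is identical to your steps (i)--(ii).

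The second part has a genuine gap. The claimed identity $I_t(\vgamma;A_t,\vY_t)=\sum_{k}\mathbb{P}_t(k\in A_t)\,I_t(\vgamma;k,Y_{k,t})$ is false. Conditional independence of $\{Y_{i,t}\}_{i\in A_t}$ given $(\vgamma,A_t,\mH_t)$ yields only \emph{sub}additivity: the chain rule gives terms $I_t(\vgamma;Y_{k,t}\mid Y_{<k,t},A_t)$, and conditional independence implies $I_t(\vgamma;Y_{k,t}\mid Y_{<k,t},A_t)\le I_t(\vgamma;Y_{k,t}\mid A_t)$, so $I_t(\vgamma;\vY_t\mid A_t)\le\sum_{k\in A_t}I_t(\vgamma;Y_{k,t}\mid A_t)$. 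Equality would additionally require the $Y_{i,t}$ to be \emph{marginally} independent given $\mH_t$, which they are not --- they are coupled through $\vgamma$; a two-item Gaussian computation under model \eqref{eqn:LMM} shows the inequality is strict. Your Cauchy--Schwarz step, however, needs the opposite (superadditive) direction $\sum_k\mathbb{P}_t(k\in A_t)\,I_t(\vgamma;k,Y_{k,t})\le I_t(\vgamma;A_t,\vY_t)$, so the argument as written does not go through. The paper avoids this entirely with a cruder but valid step: by monotonicity of mutual information, $I_t(\vgamma;k,Y_{k,t})\le I_t(\vgamma;A_t,\vY_t)$ for every $k$, so $\sum_k\mathbb{P}_t(k\in A_t)\sqrt{I_t(\vgamma;k,Y_{k,t})}\le\big(\sum_k\mathbb{P}_t(k\in A_t)\big)\sqrt{I_t(\vgamma;A_t,\vY_t)}\le K\sqrt{I_t(\vgamma;A_t,\vY_t)}$, using $\sum_k\mathbb{P}_t(k\in A_t)=\Mean_t|A_t|\le K$. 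Note the lemma only claims the factor $K$, not the $\sqrt{K}$ your Cauchy--Schwarz route was aiming for; if you want to keep that route, you must replace the false additivity claim by the same monotonicity bound inside the second square root, which again lands you at $K$.
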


\begin{proof}
First, we derive the conditional mutual information of $\theta_{k}$ given history and the meta-parameter $\vgamma$. Note that, in the rounds when arm $k$ was not played, the mutual information gain for $\theta_{k}$ given $\vgamma$ is $zero$. In order words, $I_{t}(\theta_{k};k,Y_{k,t}|\vgamma)=0$ if arm $k$ was not played at round $t$. Then we used the chain rule of the mutual information ($I(X;Y,Z)=I(X;Z)+I(X;Y|Z)$) to finish the proof.
\begin{align*}
    I(\theta_{k};\mH_{T+1}|\vgamma) &= \sum_{t}I(\theta_{k};A_{t},\vY_{t}|\vgamma,\mH_{t})\\
    &=\Mean\sum_{t}I_{t}(\theta_{k};A_{t},\vY_{t}|\vgamma)\\
    &=\Mean\sum_{t}\sum_{a \in \mathcal{A}}\mathbb{P}_{t}(A_{t} = a)I_{t}(\theta_{k};a,\vY_{t}(a)|\vgamma)\\
    &=\Mean\sum_{t}\sum_{a \in \mathcal{A}}\mathbb{P}_{t}(A_{t} = a)\mathbbm{1}(k\in a)I_{t}(\theta_{k};k,Y_{k,t}|\vgamma)\\
    &+\Mean\sum_{t}\sum_{a \in \mathcal{A}}\mathbb{P}_{t}(A_{t} = a)I_{t}(\theta_{k};a\neg k,Y_{t}(a\neg k)|\vgamma,(k,Y_{k,t}))\\
    &=\Mean\sum_{t}\mathbb{P}_{t}(k \in A_{t})I_{t}(\theta_{k};k,Y_{k,t}|\vgamma).
\end{align*}
$a\neg k$ indicates that arm $k$ is removed from action set $a$, and $\vY_{t}(a)$ is the observed rewards of set $a$. The last inequality follows that, given $(\theta_{k};k,Y_{k,t}|\vgamma)$, history and $\vgamma$, $\theta_{k} \perp\!\!\!\!\perp (a\neg k,Y_{t}(a\neg k))$, and $\Mean\sum_{t}\sum_{a \in \mathcal{A}}\mathbb{P}_{t}(A_{t} = a)I_{t}(\theta_{k};a\neg k,Y_{t}(a\neg k)|\vgamma,(k,Y_{k,t}))=0$.

For the second part of the lemma, we use the fact that $I(X;Y,Z)\geq max(I(X;Z),I(X,Y))$, which is intuitive, as more observations will provide more mutual information gain. For a fixed $A_{t}$, we have
\begin{align*}
    \sum_{k \in [N]}\mathbb{P}_{t}(k \in A_{t})\sqrt{I_{t}(\vgamma;k,Y_{k,t})} &\leq \sum_{k \in [N]}\mathbb{P}_{t}(k \in A_{t})\sqrt{I_{t}(\vgamma;A_{t},\vY_{t})}\\
    &\leq K \sqrt{I_{t}(\vgamma;A_{t},\vY_{t})}.
\end{align*}
The second inequality is attained by noticing that $\sum_{k \in [N]}\mathbb{P}(k \in A_{t})\leq K$, as at most $K$ arms are played in each round. Note that this inequality typically show the benefits of information sharing among arms. Intuitively, with no feature sharing, we need to learn $N$ independent meta parameters separately, and we gain mutual information for each arm-specific meta parameter only when the corresponding arm is pulled. However, with feature sharing, we keep gaining information for $\vgamma$, which leads to a lower regret for learning meta parameter.   

\end{proof}

\section{Experiment details}\label{sec:appendix_expt}

\subsection{Robustness to model misspecification}\label{sec:appendix_additional_expt_robustness}
To facilitate scalablity, we assume that 
$\theta_i| \vx_i, \vgamma  \sim g(\theta_i|\vx_i, \vgamma)$. 
When the model $g$ is correctly specified, MTSS has shown superior theoretical and numerical performance. 
Intuitively, as this model is used to construct a prior for the feature-agnostic model, 
as long as the learned priors provide reasonable information compared to the manually specified ones, this framework is still valuable. 

In this section, we numerically study the impact of model misspecification on MTSS. 
When focusing on semi-bandits, the results under other problems are similar and therefore omitted. 
Specifically, 
instead of generating data according to $\theta_i \sim \normal\big(\vx_i^T \vgamma, \sigma_1^2\big)$, we consider the data generation process 
$\theta_i \sim \normal\big( \lambda cos(c_i\vx_i^T \vgamma)/c_i + (1 - \lambda) \vx_i^T \vgamma, \sigma_1^2\big)$, 
where $c$ is a normalization constant such that $c_i\vx_i^T \vgamma \in [-\pi/2,\pi/2]$, 
and $\lambda \in [0,1]$ controls the degree of misspecification. 
When $\lambda = 0$, we are considering the LMM; while when $\lambda = 1$, the features provide few  information through such a linear form. 

In results reported in Figure \ref{fig:simu_robust}, we observe that MTSS is fairly robust to model misspecifications. 
Although when $\lambda$ and $\sigma_1$ increase, the advantage over feature-agnostic decreases, MTSS  still always outperforms. 
Notably, MTSS always yield a nice sublinear regret unlike feature-determined TS, which further demonstrates the claimed robustness. 
We can even see that, perhaps surprisingly, when $\lambda = 1$, MTSS still outperforms feature-agnostic TS. 
This is mainly due to that, with an intercept term in $\vx_i$, our algorithm can at least learn $\prob(\theta_i | \vgamma)$ and enjoy the corresponding benefits. 
This is similar to the observations in \citet{kveton2021meta} and \citet{basu2021no}.

\renewcommand{\sharedwidthSubFig}{0.9}
\renewcommand{\sharedwidth}{0.7}
\renewcommand{\sharedheight}{3cm}
\begin{figure}[t]
     \centering
 \begin{subfigure}[b]{\sharedwidthSubFig\textwidth}
     \centering
     \includegraphics[width=\sharedwidth\textwidth, height = \sharedheight]{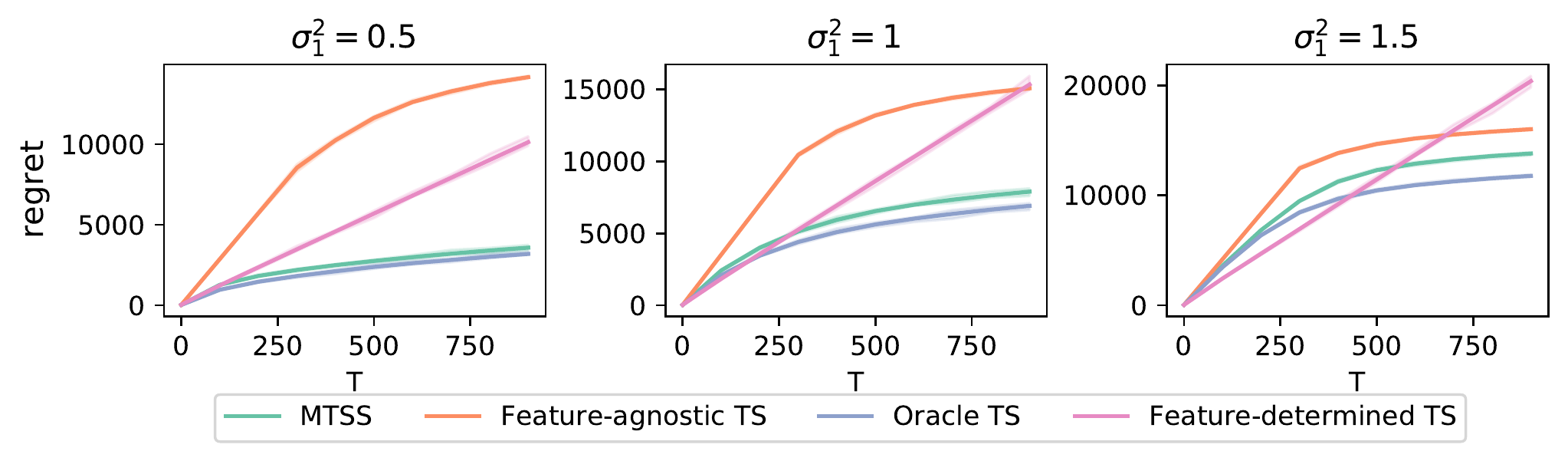}
     \caption{$\lambda$ = 0.25.}
 \end{subfigure}
 \begin{subfigure}[b]{\sharedwidthSubFig\textwidth}
     \centering
     \includegraphics[width=\sharedwidth\textwidth, height = \sharedheight]{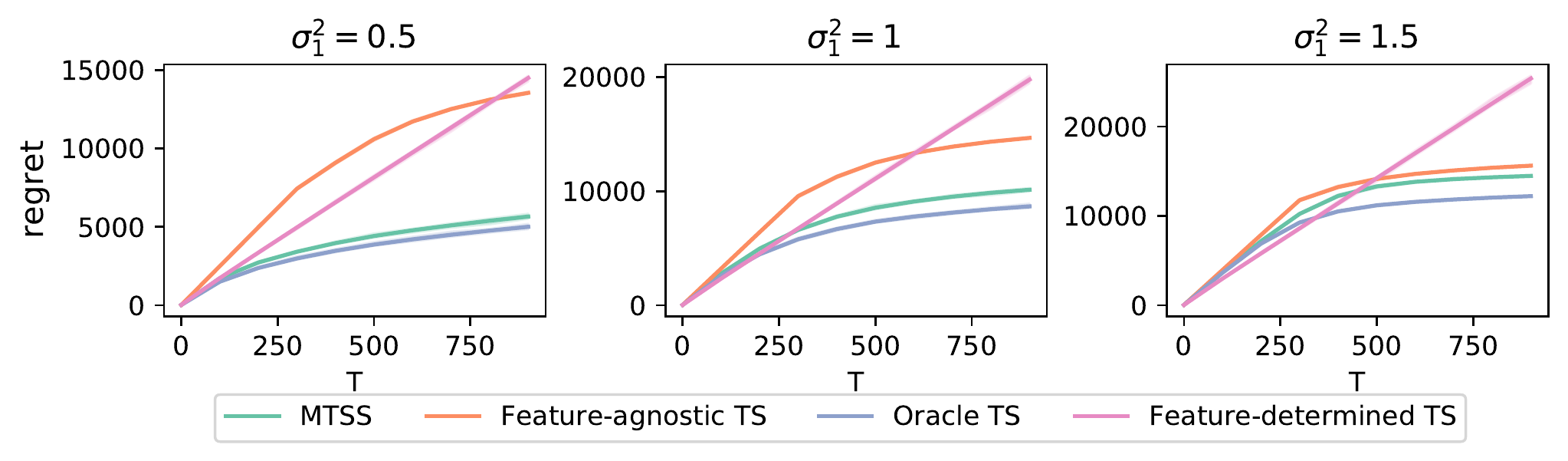}
     \caption{$\lambda$ = 0.5.}
 \end{subfigure}
 \begin{subfigure}[b]{\sharedwidthSubFig\textwidth}
     \centering
     \includegraphics[width=\sharedwidth\textwidth, height = \sharedheight]{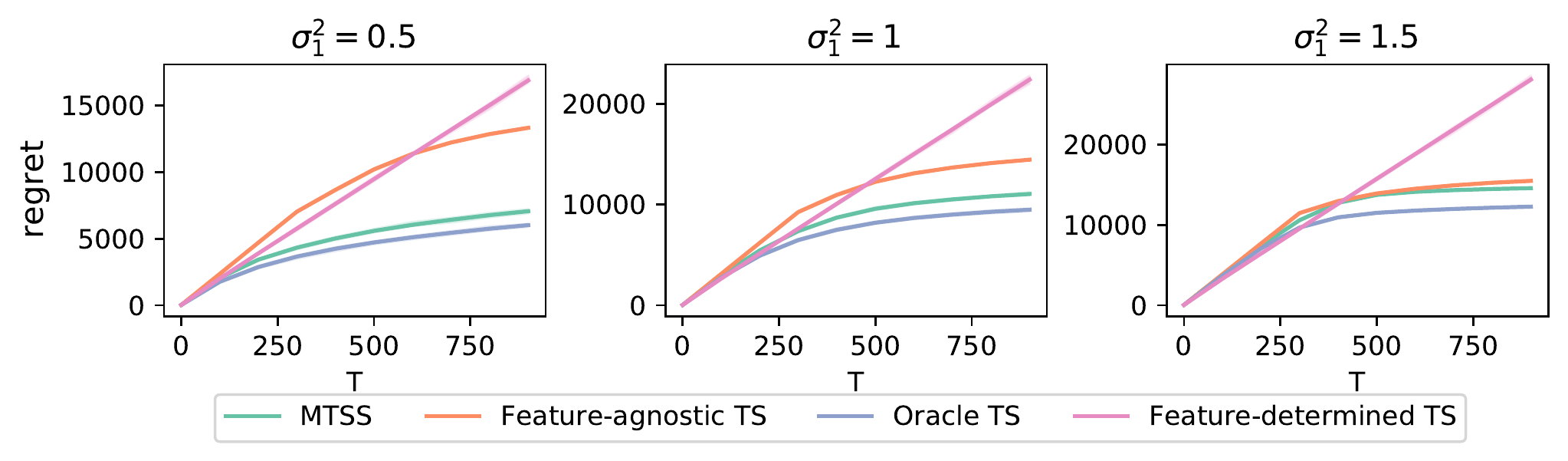}
     \caption{$\lambda$ = 0.75.}
 \end{subfigure}
 \begin{subfigure}[b]{\sharedwidthSubFig\textwidth}
     \centering
     \includegraphics[width=\sharedwidth\textwidth, height = \sharedheight]{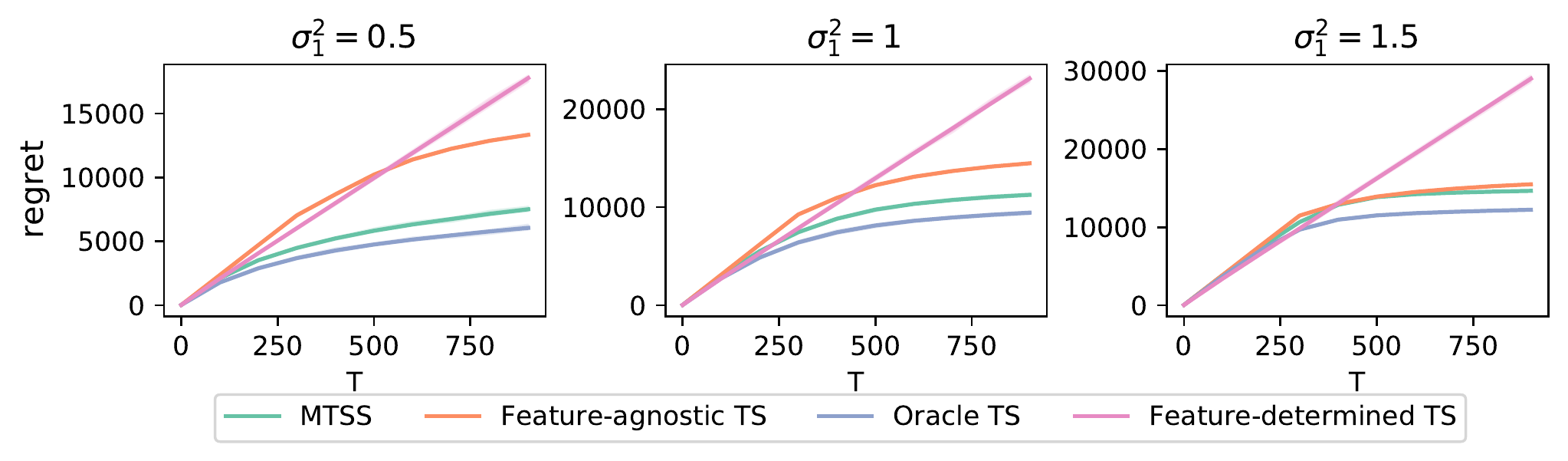}
     \caption{$\lambda$ = 1.0.}
 \end{subfigure}
\caption{
Robustness results. Shared areas indicate the standard errors of the averages.}
\label{fig:simu_robust}
\end{figure}


\subsection{Experiment results with cold-start problems}\label{sec:appendix_additional_expt_cold_start}
\begin{figure*}[t]
 \centering
 \includegraphics[width=0.9\textwidth, height = 4cm]{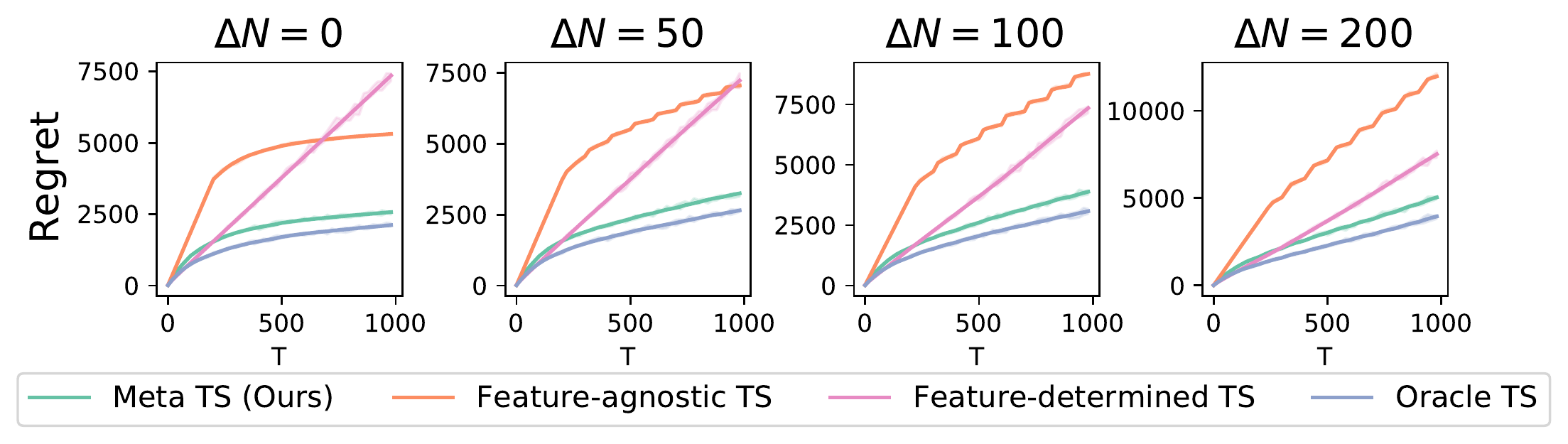} 
 \vspace{-0.3cm}
\caption{
Experiment results for semi-bandits with the cold start problem.}
\label{fig:cold}
\end{figure*}

In real-world applications, the set of items is typically not fixed. 
New items will be frequently introduced, and old items will be removed. 
Since there is no logged data for those newly-added items, 
such a challenge is typically referred to as the \textit{cold-start} problem. 

In this section, we compare the performance of various methods with the existence of the \textit{cold-start} problem. 
We use semi-bandits as an example. 
Specifically, we set $L = 1000, T= 1000, K = 5, p = 5, \sigma_1 = \sigma_2 = 1$. 
We start with $N = 1000$ items. 
The main difference with the experiments in the main text is that, every $100$ time points, we will remove $\Delta N$ existing items and introduce $\Delta N$ new items. 
We vary the value of $\Delta N$ from $0$ (no cold-start problem) to $200$. 

The experiment results can be found in Figure \ref{fig:cold}. 
As expected, in such a changing environment, all algorithms suffer a linear regret. 
The performance of feature-agnostic TS deteriorates significantly, as no information can be carried over to the new items.  
The difference between the regret of oracle-TS and MTSS is fairly stable, which implies that MTSS has learned the generalization function well and performs almost the same as oracle-TS eventually. 
MTSS consistently outperforms feature-agnostic TS and feature-determined TS. 

\subsection{Additional experiment results under other hyperparameter settins}\label{sec:appendix_additional_expt}

In this section, we present more simulation results under other combinations of $L, K, d$. 
See Figure \ref{fig:simu_more} for details. 
Overall, the performance and conclusions are fairly consistent with the ones presented in the main text. 

\renewcommand{\sharedwidthSubFig}{0.9}
\renewcommand{\sharedwidth}{0.7}
\renewcommand{\sharedheight}{3cm}

\begin{figure}[h]
     \centering
 \begin{subfigure}[b]{\sharedwidthSubFig\textwidth}
     \centering
     \includegraphics[width=\sharedwidth\textwidth, height = \sharedheight]{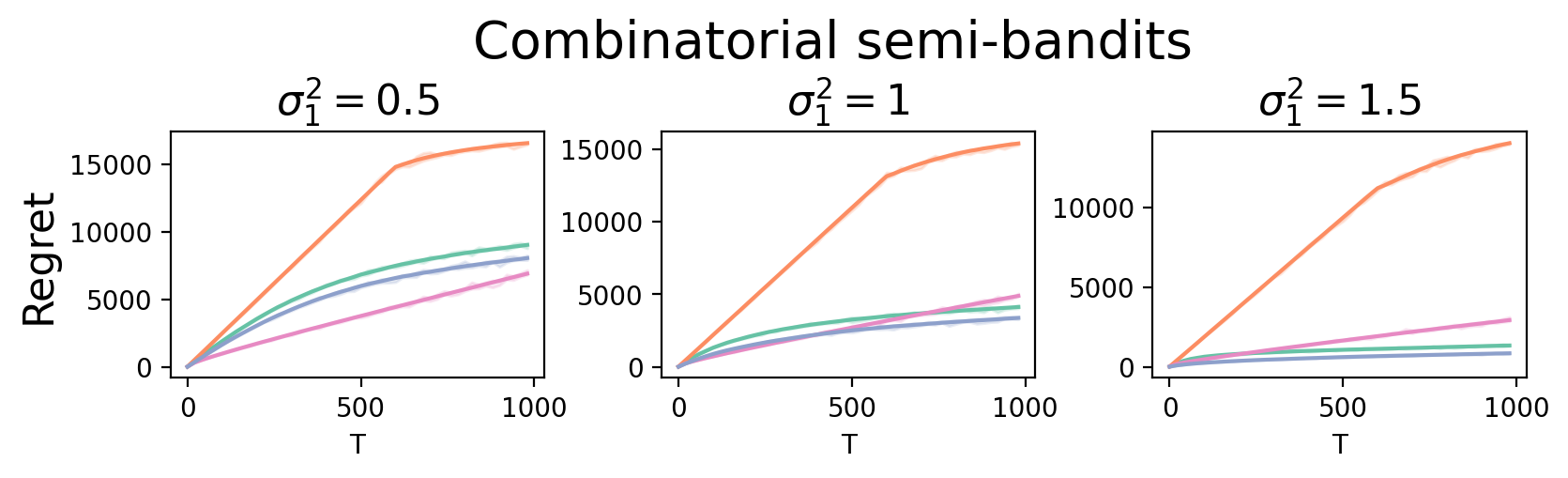}
     \caption{Combinatorial semi-bandits:$L = 1000, K = 5, p = 10$}
 \end{subfigure}
 \begin{subfigure}[b]{\sharedwidthSubFig\textwidth}
     \centering
     \includegraphics[width=\sharedwidth\textwidth, height = \sharedheight]{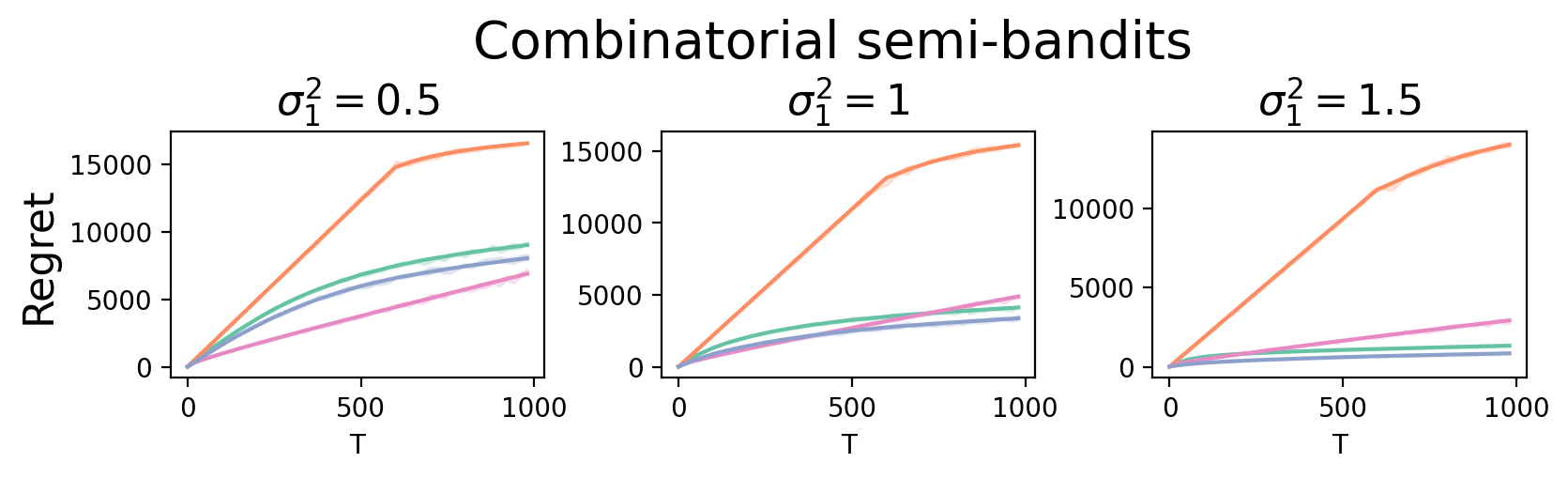}
     \caption{Combinatorial semi-bandits:$L = 3000, K = 5, p = 10$}
 \end{subfigure}
 \begin{subfigure}[b]{\sharedwidthSubFig\textwidth}
     \centering
     \includegraphics[width=\sharedwidth\textwidth, height = \sharedheight]{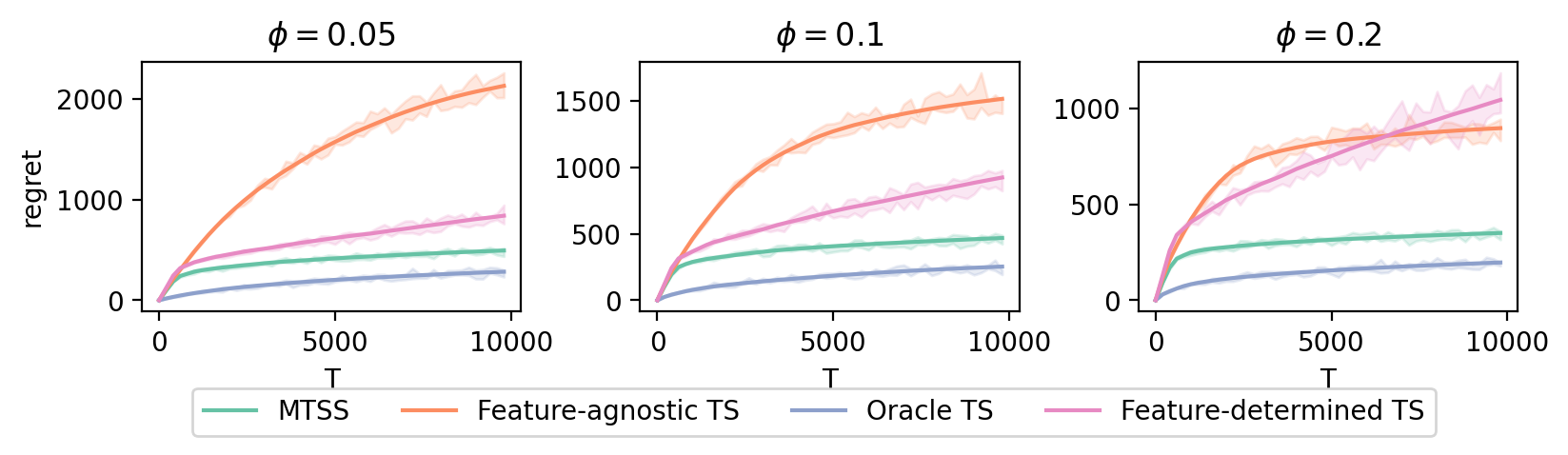}
     \caption{Cascading bandits:$L = 250, K = 5, p = 3$}
 \end{subfigure}
 \begin{subfigure}[b]{\sharedwidthSubFig\textwidth}
     \centering
     \includegraphics[width=\sharedwidth\textwidth, height = \sharedheight]{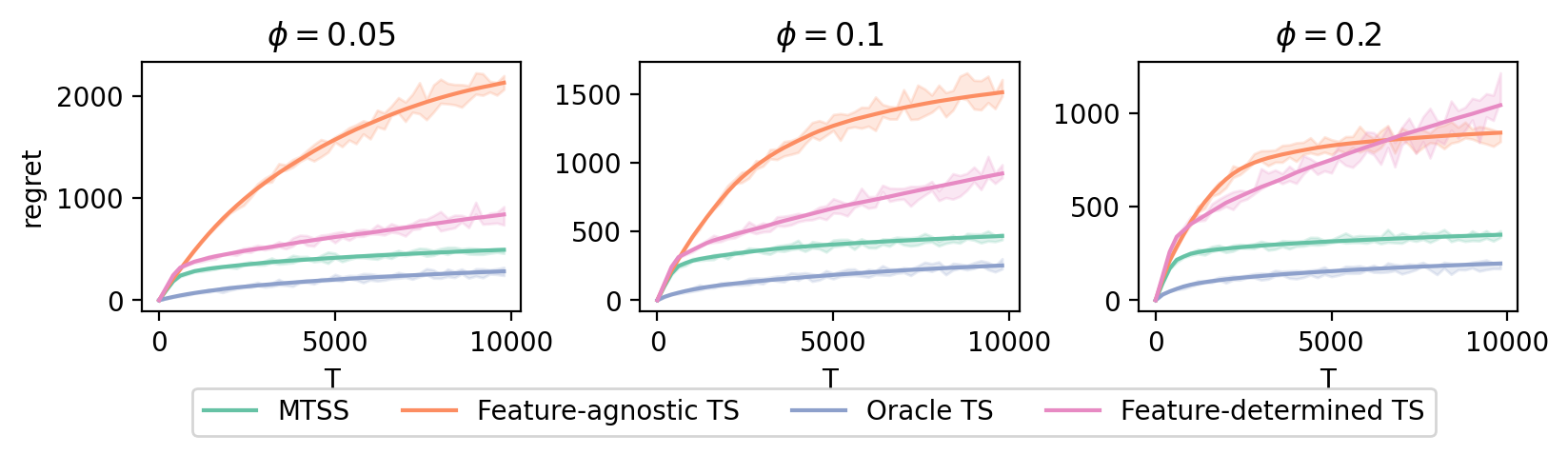}
     \caption{Cascading bandits:$L = 1000, K = 5, p = 10$}
 \end{subfigure}
 \begin{subfigure}[b]{\sharedwidthSubFig\textwidth}
     \centering
     \includegraphics[width=\sharedwidth\textwidth, height = \sharedheight]{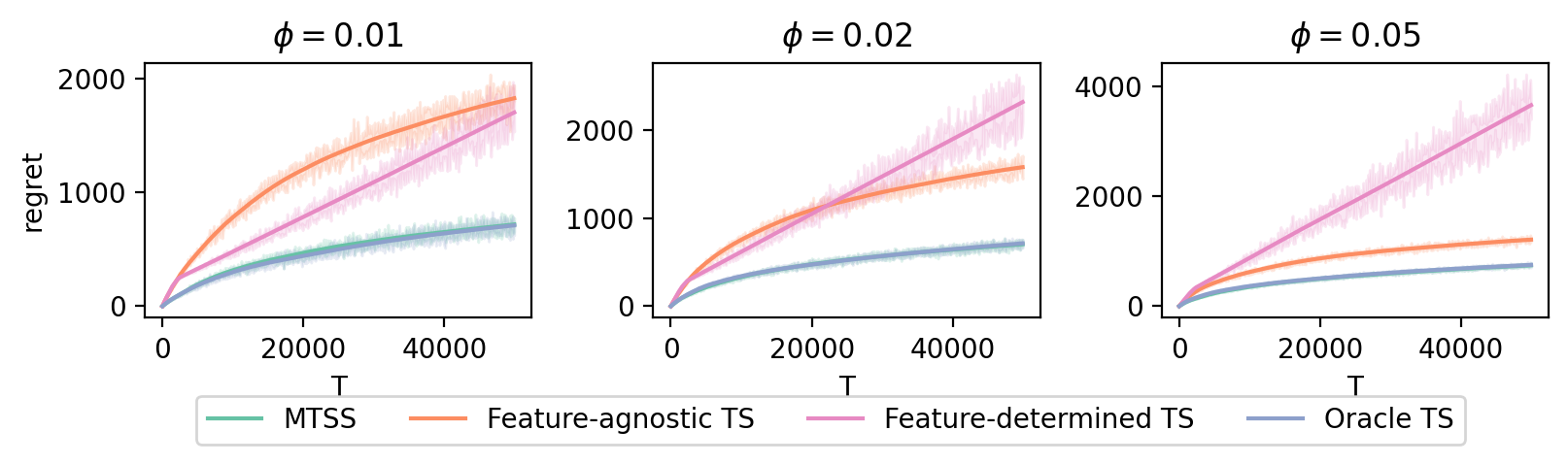}
     \caption{MNL bandits: $L = 1000, K = 10, p = 3$}
 \end{subfigure}
 \begin{subfigure}[b]{\sharedwidthSubFig\textwidth}
     \centering
     \includegraphics[width=\sharedwidth\textwidth, height = \sharedheight]{Fig/more/Final_MNL_L_1000_K_10_p_3.png}
     \caption{MNL bandits: $L = 1000, K = 10, p = 3$}
 \end{subfigure}
\caption{
Simulation results under additional settings.}
\label{fig:simu_more}
\end{figure}

\subsection{Additional experiment details}\label{sec:appendix_additional_expt_details}

In this section, we first introduce how we evaluate the performance of the learning algorithms and the low-rank matrix factorization, which is widely used to construct features. Then, details for each real experiment are discussed.

\textbf{Evaluation of Learning Algorithm.} While the synthetic experiments compare the learning algorithms by Bayes Regret defined in the main context, here for the real experiment, we focus on the expected cumulative regret conditioned on the true $\vthe$, which is derived carefully from the dataset. Mathematically,
 \begin{equation*}
      R(T, \vthe) = 
   \sum_{t=1}^T \big[ 
   \max_{a \in \mA} r(a, \vthe) - r(A_t, \vthe)
   \big]. 
 \end{equation*}

 \textbf{Low-rank Matrix Factorization.}
 Motivated by the collaborative filtering approach in recommender systems, low-rank factorization is widely used to construct the vectors of features. Suppose $A \in \mathcal{R}^{U \times N}$ includes the U observations of N items. Let $A \approx U\Sigma V^{T}$ be a rank-p truncated SVD of $A$, where $U\in\mathcal{R}^{U \times p}$, $\Sigma \in\mathcal{R}^{p \times p}$, and $V \in \mathcal{R}^{N \times p}$. Then the features of items are the rows of $V\Sigma$.  
 
 \subsubsection{Cascading Bandits}
 Here, we use the data related to business and reviews from the Yelp Dataset Challenge. For our experiments, we extract $N = 3000$ restaurants with the most number of reviews and $U = 20K$ users writing the most number of reviews. Similar to \citet{zong2016cascading}, we aim to maximize the probability of the user being attracted to at least one restaurant recommended. Following the experiment in \citet{zong2016cascading}, we convert the review data to an observation matrix $W\in \{0,1\}^{U\times N}$, where each entry indicates if the user is attracted by the restaurant, by assuming that a restaurant will attract a user if the user reviewed the restaurants at least once before. After that, we split $W$ into two distinct parts $W_{train}\in \{0,1\}^{\frac{U}{2}\times N}$ and $W_{test}\in \{0,1\}^{\frac{U}{2}\times N}$. While the $W_{train}$ is used to extract the features of each restaurant, the $W_{test}$ is used to evaluate the learning algorithms. Specifically, we applied the low-rank matrix factorization to $W_{train}$ to derive the features of restaurants with $p = 10$. The final features are standardized in the experiment, and an intercept is considered, which leads to $d = 11$. Finally, the true parameter $\vthe$ is computed by taking the sample average of $W_{test}$, and the true parameter 
 $\phi$ is derived appropriately from the $W_{test}$ by analyzing its posterior distribution. For each round, the observation is randomly selected from $W_{test}$. 
 
 \subsubsection{Semi-Bandits}
 Following the experiment setup in \citet{wen2015efficient}, we use the Adult dataset, which includes features of $33K$ people. In our experiment, we focus on only $N=3000$ people randomly selected. Our objective is to identify a set of $K=20$ users among the $3000$ people, including ten females and ten males, that are most likely to accept an advertisement. We considered $d=4$ features including age, gender, whether the person works more than $40$ hrs per week, and the length of education in years. Finally, we compute the true parameters from the dataset appropriately. First, we assume that the true expected acceptance probability (i.e., $\vthe$) depends on the user's income class. Specifically, 
 \begin{align*}
 \theta_{i}=\begin{cases}
    .15, & \text{income $>$ 50K}.\\
    .05, & \text{otherwise}.
  \end{cases}
 \end{align*}
 Then, the true parameter 
 $\sigma_{1}$ is learned by investigating the corresponding posterior distribution.
 
 \subsubsection{MNL Bandits}
 Following the experimental setup in \citet{oh2019thompson}, we use the dataset ``MovieLens 1M'' for our experiment. The dataset includes $1$ million ratings of $6K$ movies from $U = 4K$ users. In our experiment, we use $N=1000$ movies with the most ratings. While the range of ratings is from $1$ to $5$, we divide the ratings by $5$ and consider the utility of a movie to a user. Let the rating matrix be $W \in \mathcal{R}^{U\times N}$. We split $W$ into equal-size training dataset $W_{train}\in \mathcal{R}^{\frac{U}{2}\times N}$, and test dataset $W_{test}\in \mathcal{R}^{\frac{U}{2}\times N}$. Since most ratings are not complete, as most users do not review all the selected movies, we first implement the low-rank matrix completion \cite{keshavan2009low} to fill the missing ratings in $W_{train}$. Similar to \citet{oh2019thompson}, we then apply the low-rank matrix factorization to the imputed $W_{train}$ to construct the feature vector of each movie with $p = 5$. Then, we use the $L2$ normalization technique to normalize the features. Also, we consider including an intercept in the model. Therefore, $d = 6$ in the experiment. After that, we get the true mean utility $v_{i}$ of each movie as the sample average of $W_{test}$, and the true parameter $\vthe$ is obtained directly. Finally, we learn the true parameter 
 $\phi$ from $W_{test}$ in the same way as before.

\end{document}